\documentclass[11pt]{article}

\usepackage[margin=1in]{geometry}
\usepackage{amsmath,amssymb,amsthm}
\usepackage{booktabs}

\newtheorem{definition}{Definition}
\newtheorem{theorem}{Theorem}
\newtheorem{proposition}{Proposition}
\newtheorem{corollary}{Corollary}
\usepackage{enumitem}
\usepackage[hypertexnames=false]{hyperref}
\usepackage{tikz}
\usetikzlibrary{arrows.meta,positioning,calc}

\title{\textbf{Biological Motifs for Agentic Control}\\
\large A Typed Interface Correspondence between Gene Regulatory Networks and Agentic Software Architectures\\[0.5em]
\normalsize Preprint -- Feedback Welcome}
\author{Bogdan Banu\\\texttt{bogdan@banu.be}}
\date{\today}

\begin{document}
\maketitle

\begin{abstract}
The transition of Large Language Models (LLMs) from passive generators to autonomous agents
has introduced significant challenges in reliability, security, and state management. Current agentic
architectures are often constructed ad-hoc, prone to ``hallucination cascades,'' infinite loops, and
prompt injection attacks. This paper argues that many of these failure modes can be analyzed using
control motifs long studied in systems biology, provided the comparison is made at the level of typed
interfaces and coordination structure rather than literal biological mechanism.

We develop a typed interface correspondence between Gene Regulatory Networks and agentic software systems using polynomial functors and wiring diagrams. Five biological motifs---Coherent Feed-Forward Loops for noise suppression, Adaptive Immunity for layered security, Mitochondrial Signaling for resource governance, Endosymbiosis for neuro-symbolic integration, and Morphogen Diffusion for spatially varying coordination---are mapped to composable software design patterns. An epistemic topology layer derives Kripke-style knowledge operators from the wiring diagram's observation structure and proves four predictive theorems for multi-agent scaling.

The core contributions are: (1)~the Agentic Operad, a typed syntax for agent composition with provable error suppression bounds for feed-forward topologies; (2)~an epistemic topology with four theorems---error amplification, sequential penalty, parallel acceleration, and tool density scaling---whose qualitative predictions are consistent with published multi-agent benchmarks; and (3)~a six-layer progression from structure through development, grounded in autonomous learning frameworks and convergence proxies from the empirical literature. A reference implementation with 1{,}813 tests and 116 examples illustrates practical feasibility.
\end{abstract}

\newpage

\tableofcontents
\newpage

\section{Introduction}

The field of Artificial Intelligence is undergoing a paradigm shift from Generative AI (systems that
produce text based on static prompts) to Agentic AI (systems that execute multi-step workflows to
achieve autonomous goals). While the capabilities of individual Large Language Models (LLMs) have
scaled predictably, the engineering of systems of agents remains a fragile art. Developers struggle with
non-deterministic outputs, infinite loops, adversarial attacks, and the difficulty of maintaining global
coherence in distributed, stochastic systems.

We argue that these challenges are not purely ad-hoc engineering problems, but recurring constraints of
distributed information processing systems. A useful formal analogue is provided by Gene Regulatory
Networks (GRNs). At the atomic level, a gene and an agent capability both expose typed interfaces that
consume local signals and produce downstream effects. At the composite level, a biological cell organizes
many genes plus shared infrastructure (organelles, membrane); analogously, an executable agent organizes
many capabilities plus shared runtime components. This multi-scale correspondence, rather than a literal
identity of mechanisms, is the organizing hypothesis of the paper.

\subsection{The Biological Heuristic}

Biology has evolved specific topological structures, known as Network Motifs, to handle noise, security,
and state~\cite{milo2002network}. We identify five critical biological heuristics that map directly to agentic engineering:
\begin{itemize}[leftmargin=*]
\item The Coherent Feed-Forward Loop (CFFL): in biology a persistence detector that filters transient input
pulses~\cite{alon2007network}; its conjunctive (AND-gate) form underlies redundant-verification gates such as
``Human-in-the-Loop'' guardrails, where an action requires a second, independent approval.
\item Quorum Sensing: A distributed consensus mechanism where action is taken only when signal density
exceeds a threshold, analogous to self-consistency or majority voting over sampled outputs (an agreement
threshold, distinct from Mixture-of-Experts routing).
\item Chaperone Proteins: Molecular cages that force proteins to fold correctly, analogous to Schema
Validators that enforce structured outputs (JSON).
\item Mitochondrial Information Processing: Metabolic constraints acting as a ``Motherboard'' for
decision gating, governing not just energy availability but cognitive control policy.
\item Adaptive Immunity: The Self/Non-Self distinction, with MHC-like provenance tagging and Trust-Gated
access control, relevant to preventing Prompt Injection attacks and hallucination cascades.
\end{itemize}

\subsection{The Categorical Bridge}

To move this observation from metaphor to discipline, we utilize Applied Category Theory. We define a
category-theoretic model of agentic interfaces using the language of $\mathbf{Poly}$ (Polynomial Functors)
as described by Spivak~\cite{spivak2021learners}.
We use $\mathbf{Poly}$ as a common language for typed interfaces and wiring diagrams as a language for
composition. The claim is not that GRNs and software agents are identical physical systems, but that
selected biological and agentic components can be compared within the same interface-level formalism.
Proposition~\ref{prop:substrate-independence} states this precisely as a shared interpretation into the
category $\mathbf{Coalg}(\mathbf{Poly})$---a substrate independence in the sense of Marom
et al.~\cite{marom2026category}, not an isomorphism between the domains.
A capability is then described not by its weights, but by its interface---a dynamical system consuming
observations and producing actions:
\begin{equation}
P_A(y) = \sum_{o\in O} y^{I_o}.
\label{eq:poly-functor-agent}
\end{equation}

\subsection{Contributions}

This paper makes the following contributions:

\paragraph{Structural Safety.}
\begin{enumerate}[leftmargin=*]
\item \textbf{A Typed Correspondence:} We establish a disciplined mapping between biological components
(Genes, Promoters, Plasmids, Organelles) and software components (Capabilities, Schemas, Tools, Runtimes),
including multi-cellular organization for multi-agent systems.
\item \textbf{The Agentic Operad:} We define WAgent, a syntax for agent wiring that forbids specific classes
of \textbf{ill-typed wirings} at the topological level. We prove error suppression bounds for the CFFL
topology, explicitly accounting for correlation between error modes.
\item \textbf{Wire-Level Optics:} We extend the Lens formalism with Prism (conditional type-based routing)
and Traversal (batch element-wise processing) optics, enabling richer signal processing at the wiring level.
\item \textbf{Composable Coalgebras:} We make the coalgebraic state machine framework explicit and
composable, with parallel/sequential composition and observational equivalence criteria.
\end{enumerate}

\paragraph{Security and Trust.}
\begin{enumerate}[leftmargin=*,resume]
\item \textbf{Adaptive Immunity:} We formalize the Provenance Functor and Trust-Gated Lens, providing
structural resistance to content-level trust forgery, where content-based attacks cannot elevate trust levels.
\item \textbf{Pathology \& Homeostasis:} We classify agentic failures as biological diseases and derive
continuous self-repair mechanisms (Chaperone Loop, Regeneration, Autophagy).
\item \textbf{Capability-Gated Tool Acquisition:} The Plasmid Registry implements Horizontal Gene
Transfer with capability gating, preventing privilege escalation when agents dynamically acquire tools.
\end{enumerate}

\paragraph{Epistemic and Metabolic Intelligence.}
\begin{enumerate}[leftmargin=*,resume]
\item \textbf{Epistemic Health:} We define Epiplexity (Bayesian Surprise) with operational approximations
using embedding similarity and perplexity, connecting agent dynamics to the Free Energy Principle.
\item \textbf{Metabolic Intelligence:} We distinguish fast (Apoptosis) and slow (Retrograde Response)
interventions, and formalize the Metabolic-Epigenetic Coupling for cost-gated retrieval.
\item \textbf{Evolutionary Dynamics:} We situate agentic AI within the Vermeij Trend, identifying three
selective pressures (adversarial, complexity, efficiency) that drive architectural evolution.
\item \textbf{Morphogen Diffusion:} We formalize spatially varying coordination as a discrete-time
dynamical system on the agent graph, producing position-dependent behavior without central control.
\item \textbf{Epistemic Topology:} We derive Kripke-style knowledge operators from wiring diagram
structure and prove four predictive theorems for multi-agent scaling (error amplification, sequential
penalty, parallel acceleration, tool density), then check their qualitative consistency with published architecture-level results.
\end{enumerate}

\paragraph{Optimization.}
\begin{enumerate}[leftmargin=*,resume]
\item \textbf{Diagram Optimization:} Cost-annotated wiring diagrams admit categorical rewriting
rules that preserve observational equivalence while improving resource utilization.
\end{enumerate}

By viewing agentic engineering through the lens of theoretical biology and category theory, we aim to provide a
framework for building robust software systems whose stability properties derive in part from their network topology.

\newpage

\section{Related Work}

This work sits at the intersection of Systems Biology, Applied Category Theory, and Agentic AI. While significant
research exists within each domain, the formal synthesis of biological control topologies with agentic software
architectures has received limited attention.

\subsection{Network Motifs in Systems Biology}

The concept of ``Network Motifs''---statistically over-represented sub-graphs in complex networks---was introduced
by Milo et al.~\cite{milo2002network}. Their work demonstrated that biological networks are not random but are composed of specific
building blocks selected for functional data processing. Alon~\cite{alon2007network} further characterized the dynamical properties
of these motifs, identifying the Coherent Feed-Forward Loop (CFFL) as a persistence detector. We extend this by
mapping these motifs to the stochastic nature of Generative AI.

\subsection{Applied Category Theory (ACT)}

To formalize network structure, we draw upon ACT. Fong and Spivak~\cite{fong2019seven} provide a comprehensive introduction to Applied Category Theory. Spivak~\cite{spivak2021learners} and Vagner et al.~\cite{vagner2015algebras} established a rigorous framework
for modeling Open Dynamical Systems using the category $\mathbf{Poly}$ and the Operad of Wiring Diagrams. Niu and Spivak~\cite{niu2023polynomial} develop the full mathematical theory of polynomial functors as a language for interaction. Marom et al.~\cite{marom2026category} independently relate biological and engineered stimulus--response systems by a structure-preserving functor between subcategories of dynamical systems, articulating the substrate-independence principle we adopt in \S\ref{sec:mapping}: systems sharing compositional structure can be related by interface logic alone, without sharing a physical substrate. To our
knowledge, this is the first application of Polynomial Functors specifically designed to model the interface of
LLM Agents and to verify safety properties in Agentic topologies.

\subsection{The Architecture Triple}\label{sec:archagents}

De los Riscos et al.~\cite{delosriscos2026categorical} introduce the \textsc{ArchAgents} category, whose objects
are \emph{Architecture triples} $A=(G,\mathrm{Know},\Phi)$: $G$ is the syntactic wiring (typed modules and
ports), $\mathrm{Know}$ is the structural knowledge the architecture maintains---its invariants and
\emph{certificates}---and $\Phi$ is the deployment map from abstract capability slots to concrete model and
tool implementations. A \emph{certificate} is a triple $(\tau,\sigma,\mathrm{evds})$: a theorem statement
$\tau$, a map $\sigma$ from its symbols to architecture parameters, and a mechanically replayable derivation
$\mathrm{evds}$~\cite{banu2026harness}. Morphisms are structure-preserving translations---in practice,
\emph{compilers} between frameworks. Together with the shared-interface correspondence of \S\ref{sec:mapping},
this triple is one of the two categorical objects this monograph builds on; it recurs below as the carrier of
Operon's structural guarantees.

\subsection{Reliability in Agentic AI}

Techniques such as ``Chain of Thought''~\cite{wei2022chain} utilize iterative looping to improve output quality. However, these
methods operate primarily at the level of the prompt (the input signal) rather than the topology (the wiring).
By importing the concept of Autopoiesis~\cite{maturana1980autopoiesis}, we propose a methodology where reliability is a property of the
network architecture itself.

\subsection{Epistemic Logic in Distributed Systems}

Fagin et al.~\cite{fagin1995reasoning} established the foundational framework for reasoning about knowledge in multi-agent systems, formalizing what agents know and what they know about each other's knowledge. Halpern and Moses~\cite{halpern1990knowledge} proved the impossibility of attaining common knowledge in asynchronous systems, a result with deep implications for coordination protocols. To our knowledge, these formal methods from epistemic logic have not previously been applied to the design and optimization of multi-agent AI architectures.

\subsection{Temporal Databases and Bi-Temporal Data Models}

Bi-temporal data management---tracking both \emph{valid time} (when a fact is true in the world) and \emph{transaction time} (when the system recorded it)---has been studied extensively in the database community. Snodgrass~\cite{snodgrass2000temporal} provided the foundational treatment, distinguishing valid-time, transaction-time, and bi-temporal relations, and demonstrating that the two time axes are orthogonal: a fact may be recorded before it becomes valid, corrected after it ceases to be valid, or both. SQL:2011~\cite{kulkarni2012temporal} standardized temporal query support, including \texttt{FOR SYSTEM\_TIME} and \texttt{FOR BUSINESS\_TIME} clauses.

\subsection{Adaptive Multi-Agent Assembly and Meta-Control}

Dupoux, LeCun, and Malik~\cite{dupoux2026learning} propose a three-system cognitive architecture: System~A (observational, statistical, cheap) discovers representations, System~B (action-oriented, goal-directed, expensive) discovers causal structure, and System~M (meta-control) routes data between them. This tripartite structure maps naturally onto Operon's fast/deep nucleus distinction and motivates the \texttt{WatcherComponent} as a concrete instantiation of System~M (\S\ref{sec:adaptive-structure}).

Hao et al.~\cite{hao2026bigmas} demonstrate empirically that incorrect multi-agent runs require systematically more routing decisions than successful ones (e.g., 9.4 vs 7.3 mean decisions on planning tasks). This finding grounds our use of intervention count as a convergence proxy: when the watcher's cumulative retry/escalate count exceeds a threshold relative to stage count, it emits a non-convergence signal and halts the organism (\S\ref{sec:watcher-impl}).

Jiang et al.~\cite{jiang2026anatomy} provide a structure-oriented taxonomy of Memory-Augmented Generation (MAG) systems, categorizing them into lightweight semantic, entity-centric, episodic/reflective, and structured/hierarchical designs. Their empirical analysis reveals that append-only memory architectures are significantly more robust to backbone format errors than complex structured alternatives---a finding that validates Operon's design decision to make \texttt{BiTemporalMemory} append-only. They also quantify the ``Agency Tax'' (latency and token overhead of memory maintenance), highlighting a practical concern that the convergence with operational runtimes must address.

Lin et al.~\cite{lin2026memma} propose MemMA, a multi-agent framework coordinating the full memory cycle through a planner-worker architecture with in-situ self-evolution. Their Meta-Thinker provides strategic guidance for both memory construction and retrieval, analogous to Operon's \texttt{WatcherComponent} providing intervention decisions. Their probe-based memory verification---generating synthetic questions after each session to test memory fidelity, then repairing failures immediately---complements Operon's \texttt{counterfactual\_replay()}, which detects corrections but does not actively probe for gaps.

Feng, Wang, and Zhu~\cite{feng2026selfevolving} propose Self-evolving Embodied AI, a paradigm comprising five co-evolving components: memory self-updating, task self-switching, environment self-prediction, embodiment self-adaptation, and model self-evolution. These map directly onto Operon's phased roadmap (bi-temporal memory, adaptive assembly, sleep consolidation, developmental staging, and social learning respectively), and their emphasis on multi-timescale closed-loop co-evolution aligns with Operon's per-stage (watcher), per-run (adaptive), and per-batch (consolidation) adaptation cycles.

Zhou et al.~\cite{zhou2026externalization} provide a unified review of agent externalization through four pillars---Memory, Skills, Protocols, and Harness Engineering---tracing the progression from weights-based to harness-centric agent design.  Their framework maps directly onto Operon's categorical Architecture triple: Memory corresponds to the coalgebraic state (BiTemporalMemory), Skills to objects composed via the agentic operad (SkillStage), Protocols to syntactic wiring (WiringDiagram), and the Harness to the full Architecture $(G, \mathrm{Know}, \Phi)$.  Their observation that ``agent infrastructure transforms hard cognitive burdens into manageable forms'' is precisely the claim our structural guarantee benchmarks (\S\ref{sec:empirical}) validate empirically.

Ma et al.~\cite{ma2026atomic} demonstrate that five atomic coding skills---localization, editing, unit-test generation, issue reproduction, and code review---compose without negative interference under joint reinforcement learning, an 18.7\% average improvement over task-specific optimization. This is favorable empirical evidence for the operad composition model (\S\ref{sec:operad}), with one caveat we keep explicit: the operad preserves \emph{typed and structural} properties (interface compatibility, certificate replay) by construction, but \emph{behavioral} quality under composition is empirical---our own benchmarks find mostly non-interference alongside one negative case~\cite{banu2026benchmarks}.

We apply bi-temporal data management~\cite{snodgrass2000temporal, kulkarni2012temporal} to agentic memory in \S\ref{sec:coalgebra} and \S\ref{sec:temporal-epistemics}.

\newpage

\section{The Mapping: Biology $\leftrightarrow$ Software}\label{sec:mapping}

To compare Agentic Systems and Gene Regulatory Networks (GRNs) under a common typed-interface
abstraction, we map selected components from both domains to a shared mathematical object. We
utilize the category $\mathbf{Poly}$, where objects are polynomial functors representing interfaces, and
morphisms represent interaction protocols. The claim in this section is a correspondence of interface
descriptions, not a proof that the full biological and software domains are equivalent in mechanism,
implementation, or dynamics.

\subsection{Preliminaries: The Category $\mathbf{Poly}$}

In Applied Category Theory, a Polynomial Functor $P$ represents a typed interface for a dynamical system. It is
defined as a sum of representable functors:
\begin{equation}
P(y) = \sum_{o\in O} y^{I_o}.
\label{eq:poly-functor}
\end{equation}
Here, $O$ is the set of possible Positions (or Outputs) the system can expose. For each position $o\in O$, there
is a set $I_o$ of Directions (or Inputs) required to transition the system to a new state.
\begin{itemize}[leftmargin=*]
\item The coefficient $o$ represents the value produced by the system.
\item The exponent $I_o$ represents the capacity to receive information from the environment.
\end{itemize}
This formalism captures the essence of a ``stateful interface'': the system outputs a value $o$ and then waits
for a specific type of input $i\in I_o$ before it can proceed.

\begin{figure}[h]
\centering
\begin{tikzpicture}[>=Latex, node distance=12mm]
  \node[draw, rounded corners, minimum width=30mm, minimum height=10mm] (cap) {Output $o\in O$};
  \node[draw, circle, below left=10mm and 10mm of cap] (i1) {$i_1\in I_o$};
  \node[draw, circle, below=10mm of cap] (i2) {$i_2\in I_o$};
  \node[draw, circle, below right=10mm and 10mm of cap] (i3) {$\cdots$};
  \draw[->] (i1) -- (cap);
  \draw[->] (i2) -- (cap);
  \draw[->] (i3) -- (cap);
  \node[below=12mm of i2] {The Interface $P(y)$};
\end{tikzpicture}
\caption{A visual representation of a Polynomial Functor (often called a ``Mushroom'' or ``Corolla''). The system
offers an Output (the cap) and exposes specific Input ports (the stalks) dependent on that output.}
\end{figure}

\subsection{The Correspondence: Genes and Agent Capabilities}

We now apply this abstract definition to our specific domains.

\begin{definition}[The Gene Object]\label{def:gene-object}
A gene $G$ is a polynomial functor where $O_G$ is the set of expressed proteins and
$I_G=(I_{\text{prot}})_{\text{prot}\in O_G}$ is the \textbf{family} of regulatory-signal sets (transcription factors)
available at each expressed protein:
\begin{equation}
P_{\text{Gene}}(y) = \sum_{\text{prot}\in O_G} y^{I_{\text{prot}}}.
\label{eq:poly-gene}
\end{equation}
\end{definition}

\begin{definition}[The Agent-Capability Object]\label{def:agent-capability}
An agent capability $A$ is a polynomial functor where $O_A$ is the set of generated messages/actions, and
$I_A=(I_{\text{action}})_{\text{action}\in O_A}$ is the \textbf{family} of observation sets available at each action:
\begin{equation}
P_{\text{Agent}}(y) = \sum_{\text{action}\in O_A} y^{I_{\text{action}}}.
\label{eq:poly-agent}
\end{equation}
\end{definition}

\paragraph{Multi-Scale Composition.}
The polynomial functor formalism applies at every level of biological and software organization. Definitions~\ref{def:gene-object} and~\ref{def:agent-capability} establish the \textit{atomic} correspondence: a single gene and a single agent capability share the same interface form. Polynomial functors then compose---via the parallel ($\otimes$), serial ($\circ$), and trace ($\mathrm{Tr}$) operations of \S\ref{sec:operad}---allowing the same interface language to describe progressively richer assemblies:

\begin{center}
\small
\begin{tabular}{@{}lll@{}}
\toprule
\textbf{Biological Scale} & \textbf{Software Scale} & \textbf{Composition}\\
\midrule
Gene & Agent capability & Atomic polynomial functor\\
Cell & Agent runtime & Composite of capabilities $+$ organelles\\
Tissue & Multi-agent subsystem & Wiring diagram of agents (\S\ref{sec:multi-cellular})\\
\bottomrule
\end{tabular}
\end{center}

\noindent A cell is not merely a bag of genes; it is a structured composition with shared infrastructure (organelles) and a boundary (membrane). Likewise, an agent runtime is a structured composition of capabilities with shared components (LLM provider, memory, error handling) and a security boundary. To reduce ambiguity, the rest of the paper uses \textit{capability} for the atomic interface and \textit{agent} for the composed runtime-level object, even though later wiring-diagram examples sometimes use ``agent'' informally for executable boxes. The organelle mappings below describe this cell-level architecture. The multi-cellular organization of \S\ref{sec:multi-cellular} then composes agents into tissues via wiring diagrams---the same formalism, one level up.

\subsection{The Interface: Promoters as Lenses}

In biology, a gene is not universally accessible. It is guarded by a Promoter Region---a specific DNA sequence
that only binds to compatible Transcription Factors. In software, an agent is guarded by an API Schema or
Context Window definition.

We model this gating mechanism using Optics, specifically Lenses. A Lens consists of two maps between a global
state $S$ and a local view $V$:
\begin{enumerate}[leftmargin=*]
\item Get (View): $\mathrm{get}: S \to V$ (Extracting relevant signal from global state).
\item Put (Update): $\mathrm{put}: S\times V \to S$ (Updating global state based on local change).
\end{enumerate}

The ``Promoter'' acts as a filter that determines which part of the global cellular environment ($S$) is visible
($V$) to the gene.
\begin{itemize}[leftmargin=*]
\item \textbf{Biological Lens:} The promoter filters the chaotic cellular soup, allowing the gene to ``see''
only specific molecules (e.g., Lac Repressor).
\item \textbf{Agentic Lens:} The Context Window filters the massive vector database, allowing the agent to
``see'' only the relevant retrieved chunks (RAG).
\end{itemize}

If the input signal does not match the Schema (Promoter), the Lens fails to focus, and the interaction is routed
to an explicit \textbf{inactive/error} case (equivalently, one works with a \textbf{partial} lens, or a total lens
into $V+\mathrm{Error}$) (the agent does not run; the gene is not expressed).

\subsection{Wire-Level Optics: Beyond Lenses}

The Lens formalism models constitutive access: a promoter that either admits or blocks a signal. Biological systems employ richer signal-processing at the interface level. We extend the wiring diagram with two additional optic types from the categorical optics literature.

\paragraph{Prism: Receptor Specificity.}
A membrane receptor does not merely pass or block signals; it selects signals by molecular shape. A Prism on a wire admits values of specific data types and rejects others:
\begin{equation}
\mathrm{prism}_A(\tau, v) =
\begin{cases}
v & \text{if } \tau \in A \\
\bot & \text{if } \tau \notin A
\end{cases}
\label{eq:prism-optic}
\end{equation}
where $A \subseteq T$ is the set of accepted types. This enables fan-out routing: a single output port connects to multiple wires, each guarded by a different prism, directing JSON to one handler and errors to another---analogous to how different receptor types on a cell surface route different ligands to different intracellular pathways.

\paragraph{Traversal: Polymerase Processivity.}
A ribosome does not translate an entire mRNA at once; it processes codons sequentially, applying the same read-translate operation to each element. A Traversal maps a transform $f$ over collection elements on a wire:
\begin{equation}
\mathrm{traversal}_f(\mathbf{x}) = [f(x_i) \mid x_i \in \mathbf{x}]
\label{eq:traversal-optic}
\end{equation}
This models batch processing at the wire level: a list of candidate outputs is transformed element-wise before reaching the downstream agent.

\paragraph{Composition and Coexistence.}
Optics compose: a \textbf{ComposedOptic} applies its constituent optics left-to-right, transmitting only if all accept. A wire may carry both a DenatureFilter (\S5.3) and an Optic, applied in sequence: denaturation strips syntactic structure, then the optic routes or transforms the sanitized content. This layered processing models the biological reality that signal reception involves multiple sequential steps (ligand binding $\to$ receptor conformational change $\to$ intracellular cascade).

\subsection{Epigenetics and State: The Coalgebra}
\label{sec:coalgebra}

Neither genes nor agents are stateless functions. They possess memory.
\begin{itemize}[leftmargin=*]
\item \textbf{Biology:} Epigenetic markers (Methylation, Histone modification) alter how a gene responds to
signals without changing the DNA code.
\item \textbf{Software:} Retrieval Augmented Generation (RAG) and Conversation History alter how an agent
responds to a prompt without changing the LLM weights.
\end{itemize}

We model this as a Coalgebra for the polynomial functor $P$. A dynamical system is defined as a tuple $(S,\phi)$,
where $S$ is the state space and $\phi$ is the structure map:
\begin{equation}
\phi: S \to P(S).
\label{eq:coalgebra-structure}
\end{equation}

By expanding $P(S)$, we derive the two fundamental operations of the state machine:
\begin{enumerate}[leftmargin=*]
\item Readout: $S \to O$ (Given current state/memory, what action do I take?)
\item Update: $\displaystyle \sum_{s\in S} I_{o(s)} \to S$ (Given current state $s$ and a new input
$i\in I_{o(s)}$ compatible with its current output $o(s)$, what is my new state?)
\end{enumerate}

By establishing this formal dictionary (Table~\ref{tab:correspondence-dictionary}), we can compare selected GRN components and agentic
components as instances of the same abstract class of typed dynamical interfaces \textbf{under this
interface-level abstraction}. Proposition~\ref{prop:substrate-independence} below states precisely what this
shared membership does and does not assert.

\paragraph{Composable Coalgebras.}
The coalgebra formalism becomes most useful when made composable. We define two composition operations that mirror the parallel and serial composition of the Operad (\S4):
\begin{itemize}[leftmargin=*]
\item \textbf{Parallel Coalgebra:} Given $(S_1, \alpha_1)$ and $(S_2, \alpha_2)$, their parallel composition has state $S_1 \times S_2$ and applies both readout/update operations to the same input---like two signaling pathways activated by the same ligand.
\item \textbf{Sequential Coalgebra:} Given $(S_1, \alpha_1)$ over $P_1$ and $(S_2, \alpha_2)$ over $P_2$, the sequential composition pipes the readout of the first as input to the second, with joint state $S_1 \times S_2$---modeling signal transduction cascades.
\end{itemize}

\paragraph{Bisimulation: Observational Equivalence.}
For the deterministic state machines considered here, we use the following observational criterion:
two state machines $(S_1, \alpha_1)$ and $(S_2, \alpha_2)$ are equivalent if, for every input sequence,
they produce identical output sequences. We write $M_1 \sim M_2$ when
\begin{equation}
\forall \, \mathbf{i} \in I^*: \quad \mathrm{readout}_1^*(\mathbf{i}) = \mathrm{readout}_2^*(\mathbf{i})
\label{eq:bisimulation}
\end{equation}
where $\mathrm{readout}^*$ denotes the lifted readout over the input sequence. A diverging input (witness) constitutes a proof of non-equivalence. This supports formal comparison of deterministic implementations within the input model considered here; stronger coinductive verification claims are left for future work.

\paragraph{Temporal Coalgebra: Bi-Temporal State.}
The coalgebra above models state at a single point in time. In practice, agents accumulate beliefs that may be corrected retroactively. We extend the state space to carry two independent time indices. Let $\mathcal{T} = (\mathbb{R}_{\geq 0}, \leq)$ denote a totally ordered time domain. A \emph{bi-temporal state} augments the coalgebra with a pair of interval-valued annotations:
\begin{equation}
S_{\mathrm{bt}} \;=\; S \times \underbrace{[\mathcal{T}, \mathcal{T} \cup \{\infty\})}_{\text{valid interval}} \times \underbrace{[\mathcal{T}, \mathcal{T} \cup \{\infty\})}_{\text{record interval}}
\label{eq:bitemporal-state}
\end{equation}
where an open-ended interval $[t, \infty)$ denotes a currently active fact. The key invariant is \emph{append-only correction}: closing a record's transaction interval and appending a new record with a \texttt{supersedes} pointer, rather than mutating the original. This ensures that for any pair $(t_v, t_r)$, the \emph{belief state}---the set of facts valid at $t_v$ and known at $t_r$---is uniquely reconstructible by filtering on both intervals simultaneously.

The readout function becomes time-parameterized: $\mathrm{readout}(s, t_v, t_r)$ returns only those facts whose valid interval contains $t_v$ and whose record interval contains $t_r$. This separates two questions that a single-time coalgebra conflates: ``what is true now?'' (valid-time query) versus ``what did the system believe at decision time?'' (bi-temporal query). The implementation (\S\ref{sec:bitemporal-impl}) instantiates this as \texttt{BiTemporalMemory} with explicit \texttt{retrieve\_valid\_at}, \texttt{retrieve\_known\_at}, and \texttt{retrieve\_belief\_state} methods.

\subsection{The Correspondence, Made Precise}

The dictionary of Table~\ref{tab:correspondence-dictionary} is more than a list of analogies: both genes and
agent capabilities have been presented as objects of a single mathematical kind---polynomial-functor
coalgebras. We name that kind and state exactly what the two domains share.

\begin{definition}[Typed Dynamical Interface]\label{def:typed-interface}
A \emph{typed dynamical interface} is an object of $\mathbf{Coalg}(\mathbf{Poly})$: a pair $(S,\phi)$ with
structure map $\phi\colon S \to P(S)$ for a polynomial functor $P$, equivalently a readout $S \to O$ together
with an update $\sum_{s\in S} I_{o(s)} \to S$ (Eq.~\ref{eq:coalgebra-structure}). A morphism
$(S_1,\phi_1)\to(S_2,\phi_2)$ is a coalgebra homomorphism: a map of state spaces commuting with the structure
maps, hence one that preserves observable behavior in the sense of Eq.~\ref{eq:bisimulation}.
\end{definition}

Definitions~\ref{def:gene-object} and~\ref{def:agent-capability} exhibit the Gene Object and the
Agent-Capability Object as two such interfaces. Following Marom et al.~\cite{marom2026category}---who relate
biological and engineered stimulus--response systems by a structure-preserving functor rather than by
identifying their substrates---we regard the biological and software realizations as two full subcategories of
the same ambient category:
\[
\mathsf{Nat} \subset \mathbf{Coalg}(\mathbf{Poly})\ \ (\text{GRN realizations}),
\qquad
\mathsf{Art} \subset \mathbf{Coalg}(\mathbf{Poly})\ \ (\text{agentic realizations}).
\]

\begin{proposition}[Substrate-Independent Correspondence]\label{prop:substrate-independence}
The atomic interface objects of the two domains coincide in $\mathbf{Coalg}(\mathbf{Poly})$: the Gene Object
(Definition~\ref{def:gene-object}) and the Agent-Capability Object (Definition~\ref{def:agent-capability})
share the functor form $\sum_{o\in O} y^{I_o}$ and the same readout/update signature, so each interprets as an
object of both $\mathsf{Nat}$ and $\mathsf{Art}$ up to relabeling of the position set $O$ and the direction
family $(I_o)_{o\in O}$. The remaining rows of Table~\ref{tab:correspondence-dictionary} record
correspondences at other categorical levels---positions, directions, optics, state spaces, and
morphisms---not further object-level identities.
\end{proposition}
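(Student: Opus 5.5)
The argument is essentially definitional. The work lies in making ``coincide up to relabeling'' precise, which I would do as an isomorphism in $\mathbf{Poly}$ that lifts to an isomorphism of coalgebra categories.

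First I would unfold Definitions~\ref{def:gene-object} and~\ref{def:agent-capability}. Each gives a polynomial of the form $\sum_{o\in O} y^{I_o}$. These are $P_{\text{Gene}}$ with positions $O_G$ and directions $(I_{\text{prot}})$, and $P_{\text{Agent}}$ with positions $O_A$ and directions $(I_{\text{action}})$. By Definition~\ref{def:typed-interface}, a coalgebra $(S,\phi)$ for either functor is the same data as a readout $S\to O$ together with an update $\sum_{s\in S} I_{o(s)}\to S$. This is the expansion already carried out after Eq.~\ref{eq:coalgebra-structure}, and it is uniform in $P$. So the readout/update signature is literally the same schema in both domains, which settles the second clause.

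Next I would formalize relabeling. A relabeling is a bijection $f\colon O_G\to O_A$ together with bijections $g_o\colon I_{f(o)}\to I_o$ for each $o\in O_G$. Such a pair is exactly an invertible morphism of polynomials $P_{\text{Gene}}\to P_{\text{Agent}}$, with the forward map on positions and the backward maps on directions. An isomorphism $\alpha\colon P\cong Q$ induces an isomorphism of categories $\mathbf{Coalg}(P)\cong\mathbf{Coalg}(Q)$, sending $(S,\phi)$ to $(S,\alpha_S\circ\phi)$. It preserves homomorphisms, and hence preserves the observational equivalence of Eq.~\ref{eq:bisimulation} once inputs are transported along the $g_o$. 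Consequently, whenever the position and direction data of a gene and of a capability are in bijection, a gene realization $(S,\phi)\in\mathsf{Nat}$ is carried to an object of $\mathsf{Art}$, and conversely. This is the claimed dual membership ``up to relabeling.'' I would state explicitly that this is conditional on cardinalities matching. It is not an equivalence of the full subcategories $\mathsf{Nat}$ and $\mathsf{Art}$, which matches the disclaimer in the text.

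The final clause is a negative or scoping statement. I would treat it by inspection of Table~\ref{tab:correspondence-dictionary}: for each remaining row, I would identify whether the paired items are elements of $O$, elements of $I_o$, lenses or optics between polynomials, state spaces $S$, or morphisms. Only the gene/capability row pairs objects of $\mathbf{Coalg}(\mathbf{Poly})$.

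The main obstacle is not mathematical but definitional. $\mathsf{Nat}$ and $\mathsf{Art}$ are only described informally as ``realizations,'' so ``interprets as an object of both'' needs a convention. I would fix one: call a coalgebra a GRN realization (respectively an agentic realization) when its polynomial is presented in the form of Eq.~\ref{eq:poly-gene} (respectively Eq.~\ref{eq:poly-agent}), with closure under isomorphism. With that convention, and with closure under isomorphism assumed, the isomorphism-lifting step above does all the work.
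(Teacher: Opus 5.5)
Your proposal is correct and follows the paper's own proof, which is likewise ``by construction.'' Both definitions have the form $\sum_{o\in O} y^{I_o}$; matching positions, directions, and readout/update maps gives the shared signature; and the remaining table rows are read as correspondences at their own categorical level rather than as object identities. Your additions make precise what the paper's sketch leaves implicit: relabeling as an isomorphism in $\mathbf{Poly}$ that lifts via $(S,\phi)\mapsto(S,\alpha_S\circ\phi)$ to an isomorphism of coalgebra categories, and the caveat that such a relabeling exists only when the position and direction cardinalities match.
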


\begin{proof}[Proof sketch]
By construction. Definitions~\ref{def:gene-object} and~\ref{def:agent-capability} exhibit the two objects with
identical functor form; matching the position set $O$, the direction family $(I_o)_{o\in O}$, and the
readout/update maps of Definition~\ref{def:typed-interface} gives the shared interface signature. The claim is
confined to these interface objects; the table's position-, direction-, optic-, state-, and morphism-level
rows are correspondences of the respective categorical constituents, established at their own level rather than
as additional object identities.
\end{proof}

\paragraph{What this asserts, and what it does not.}
The correspondence is a shared \emph{interface} semantics, not an isomorphism. We do not exhibit a mutually
inverse pair of functors between $\mathsf{Nat}$ and $\mathsf{Art}$, and we make no claim that the two domains
agree in mechanism, kinetics, or failure distribution---the empirical sections below in fact document regimes
where the biological structure confers no advantage over a naive baseline. In the words of Marom et al., two
systems that share compositional structure can be related by a functor that ``preserves the interface logic
\ldots without requiring that the two domains share any physical substrate.'' That substrate
independence---not equivalence---is the content of Proposition~\ref{prop:substrate-independence}, and it is
what licenses transporting a design pattern read off a gene-regulatory motif to an agent architecture at all.

\begin{figure}[h]
\centering
\begin{tabular}{@{}c@{\qquad}c@{}}
\begin{tabular}{@{}c@{}}
Gene\\(Function)\\[0.25em]
Transcription Factors ($I$)\\
Proteins ($O$)\\
Promoter Binding\\
Expression
\end{tabular}
&
\begin{tabular}{@{}c@{}}
Agent\\(LLM + Tools)\\[0.25em]
Observations ($I$)\\
Actions ($O$)\\
Schema Match\\
Generation
\end{tabular}
\end{tabular}
\\[0.5em]
\caption{The Structural Correspondence. Both Genes and Agent Capabilities act as transducers converting Input Contexts ($I$) into
Output Expressions ($O$), governed by the same categorical laws (at the level of polynomial-interface models).}
\end{figure}

\begin{table}[h]
\centering
\small
\setlength{\tabcolsep}{4pt}
\begin{tabular}{@{}p{0.27\linewidth}p{0.34\linewidth}p{0.34\linewidth}@{}}
\toprule
Category Concept & Biological Realization (GRN) & Software Realization (Agentic)\\
\midrule
Polynomial Functor ($P$) & Gene Interface & Agent Interface (System Prompt)\\
Output Position ($O$) & Protein Expression & Tool Call / Message\\
Input Direction ($I$) & Transcription Factor Binding & Observation / User Prompt\\
Lens (Optic) & Promoter Region & API Schema / Context Window\\
Internal State ($S$) & Epigenetic Markers (Methylation) & Vector Store / Chat History\\
Morphism ($\circ$) & Signal Transduction Pathway & Data Pipeline\\
\midrule
\multicolumn{3}{@{}l@{}}{\textit{Organelles (Specialized Processing Units)}}\\
\midrule
Template Engine & Ribosome (mRNA $\to$ Protein) & Prompt Template Factory\\
Output Validation & Chaperone (Protein Folding) & Schema Validator / JSON Parser\\
Waste Processing & Lysosome (Autophagy) & Error Handler / Garbage Collector\\
Decision Center & Nucleus (Transcription) & LLM Provider Wrapper\\
Input Filter & Membrane (Immune System) & Prompt Injection Defense\\
Computation Engine & Mitochondria (MIPS) & Runtime Supervisor / Decision Gate\\
\midrule
\multicolumn{3}{@{}l@{}}{\textit{Lifecycle and Rhythms}}\\
\midrule
Lifespan Limit & Telomere Shortening & Operation Counter / Max Iterations\\
Periodic Scheduling & Circadian Oscillator & Health Checks / Heartbeat\\
\bottomrule
\end{tabular}
\caption{The Correspondence Dictionary (Extended). The Polynomial Functor row is the object-level
correspondence of Proposition~\ref{prop:substrate-independence} (Gene Object $\leftrightarrow$ Agent-Capability
Object); the remaining rows pair the corresponding positions, directions, optics, state spaces, and morphisms
of the two domains, not further object identities.}
\label{tab:correspondence-dictionary}
\end{table}

\begin{table}[h]
\centering
\small
\setlength{\tabcolsep}{4pt}
\begin{tabular}{@{}p{0.27\linewidth}p{0.34\linewidth}p{0.34\linewidth}@{}}
\toprule
\textbf{Biological Concept} & \textbf{Agentic Concept} & \textbf{Formal Structure}\\
\midrule
\multicolumn{3}{@{}l@{}}{\textit{Immunity and Security}}\\
\midrule
Toll-Like Receptor (TLR) & Regex Injection Filter & Pattern Matching\\
MHC Presentation & Provenance Labeling & Functor $\mathcal{P}: \mathbf{Msg} \to \mathbf{Trust}$\\
T-Cell Receptor & Trust Gate & Partial Lens\\
Negative Selection & Injection Training & Penalized Learning\\
Regulatory T-Cell & Confidence Dampening & Suppression Function\\
Immune Memory & Threat Signature Store & Hash-Indexed Cache\\
\midrule
\multicolumn{3}{@{}l@{}}{\textit{Metabolism and Control}}\\
\midrule
ATP & Token Budget & Resource Monoid $\mathcal{R}$\\
mPTP Opening & Fast Apoptosis Trigger & Guard Condition\\
Retrograde Signaling & Phenotype Reshaping & Slow Adaptation\\
Metabolic-Epigenetic State Coupling & Cost-Gated Retrieval Policy & Conditional Access Control\\
\midrule
\multicolumn{3}{@{}l@{}}{\textit{Information and Health}}\\
\midrule
Trophic Factors & Novel Input Signals & Epiplexity $> \delta$\\
Bayesian Brain & Free Energy Minimization & KL Divergence\\
Apoptosis & Agent Termination & State $\to \bot$\\
\midrule
\multicolumn{3}{@{}l@{}}{\textit{Multi-Cellular Organization}}\\
\midrule
Genome & Base Model Weights & Shared Parameters\\
Epigenome & System Prompt + RAG & Phenotype Context\\
Morphogen Gradient & Shared Context Variables & JSON State\\
Morphogen Diffusion & Gradient Propagation on Agent Graph & Discrete PDE on $G$\\
Tissue Boundary & Trust Boundary / Capability Ceiling & Type Barrier\\
Prism Optic & Conditional Wire Routing & Partial Optic\\
Traversal Optic & Batch Wire Transform & Endofunctor on Lists\\
Bisimulation & Agent Equivalence Testing & Observational Equivalence\\
\bottomrule
\end{tabular}
\caption{Extended Correspondence Dictionary: Security, Metabolism, and Organization}
\end{table}

\subsection{Metabolic Coalgebras: Formalizing Resource Constraints}

Finally, we address the physical constraints of computation. Just as biological systems are limited by ATP availability \cite{lynch2015bioenergetic,kempes2017thermodynamic}, agentic systems are limited by token budgets and latency. To model this, we extend our coalgebraic framework to include resource constraints, defining a \textbf{Metabolic Coalgebra}. Mathematically, this is an instance of a Quantitative Coalgebra enriched over a resource monoid, effectively restricting the domain of the state transition function to resource-sufficient states.

We align this definition with the theory of \textbf{Quantitative Polynomial Functors} \cite{nakov2021quantitative}, treating the system as a state machine enriched over a resource monoid.

\begin{definition}[The Resource Monoid]\label{def:resource-monoid}
Let $(\mathcal{R}, +, 0, \ge)$ be an ordered commutative monoid representing computational resources, equipped with a partial subtraction $r \ominus c$ defined whenever $r \ge c$. For a single resource dimension (e.g., token counts), $\mathcal{R} \cong \mathbb{N}$; for multi-dimensional accounting (latency, memory), $\mathcal{R} \cong \mathbb{R}_{\ge 0}^k$.
\end{definition}

\begin{definition}[Metabolic Coalgebra]\label{def:metabolic-coalgebra}
A resource-constrained agent is a coalgebra $(S, \alpha)$ over a polynomial functor $P$, where the state space is the product of the logical state $L$ and the resource state $\mathcal{R}$:
\begin{equation*}
    S \cong L \times \mathcal{R}
\end{equation*}
The structure map $\alpha: S \to P(S) + \bot$ is defined as a \textbf{partial map} guarded by cost. Writing only the continuation state and suppressing the output/readout coordinate of $P(S)$, a transition requiring cost $c \in \mathcal{R}$ has the guarded form:
\begin{equation*}
    \alpha_{\mathrm{cont}}(l, r) =
    \begin{cases}
      (l', r \ominus c) & \text{if } r \ge c \\
      \bot & \text{if } r < c \quad \text{(Apoptosis)}
    \end{cases}
\end{equation*}
\end{definition}

This structure maps to the energetics of transcriptional elongation. A gene or agent capability cannot produce its output instantaneously; it must carry out a sequence of state transitions, each with a cost. The Metabolic Coalgebra models this dependency: if the cellular or computational energy budget is exhausted, execution stalls (Ischemia), and the component fails to execute its function, regardless of its regulatory logic.

This formalism establishes that ``Ischemia'' (Token Starvation) is not merely a runtime error, but a reachable terminal state $\bot$ in the system's dynamics. This mirrors the biological mechanism where a cell that cannot meet its energetic demands undergoes programmed death: energy stress is sensed by pathways such as AMPK, which can in turn engage p53-dependent apoptosis \cite{aubrey2018p53}.

\begin{theorem}[The Metabolic Bound (Qualified)]\label{thm:metabolic-bound}
Assume either a single scalar resource budget $\mathcal{R} \cong \mathbb{N}$ or, more generally, a well-founded scalar potential $\mu: \mathcal{R} \to \mathbb{R}_{\ge 0}$ such that every non-identity transition of cost $c$ satisfies $\mu(r \ominus c) \le \mu(r) - c_{\min}$ for some $c_{\min} > 0$. Assume also that the resource state is monotone nonincreasing (no regeneration), and that infinite executions may not consist solely of stutter / identity steps. Then any execution contains at most $\lfloor \mu(R_{\mathrm{total}}) / c_{\min} \rfloor$ non-identity transitions; in the scalar-budget case this is $\lfloor R_{\mathrm{total}} / c_{\min} \rfloor$. Hence termination is decidable up to stuttering. If regeneration, zero-cost cycles, or unrestricted stutter loops are permitted, termination requires an additional well-founded potential or an explicit budget/termination certificate \cite{boreale2023weighted}.
\end{theorem}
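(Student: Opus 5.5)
The plan is a standard ranking-function (potential-descent) argument applied to the guarded structure map of Definition~\ref{def:metabolic-coalgebra}. Fix an execution $(l_0,r_0),(l_1,r_1),\dots$ with $r_0 = R_{\mathrm{total}}$. Each step is either an identity (stutter) step, with $r_{k+1}=r_k$, or a non-identity transition of some cost $c$. By the guard in Definition~\ref{def:metabolic-coalgebra}, a non-identity step is only taken when $r_k \ge c$, and it produces $r_{k+1}=r_k\ominus c$. Otherwise the structure map returns $\bot$ and the execution ends in apoptosis.

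I would first reduce the scalar case to the potential case. For $\mathcal{R}\cong\mathbb{N}$, take $\mu=\mathrm{id}$. Set $c_{\min}$ to the least positive cost among non-identity transitions; it is at least $1$ because all costs lie in $\mathbb{N}$, and zero-cost non-identity steps are excluded by hypothesis. Then $\mu(r\ominus c)=r-c\le r-c_{\min}$, so the scalar statement follows from the general one.

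For the general case, I will show by induction on $n$ that after $n$ non-identity transitions the current resource $r$ satisfies $\mu(r)\le \mu(R_{\mathrm{total}})-n\,c_{\min}$. Stutter steps leave $r$ unchanged, so they do not affect the bound. Monotonicity (no regeneration) is what guarantees that no step raises $\mu$. Since $\mu\ge 0$, we get $n\,c_{\min}\le\mu(R_{\mathrm{total}})$, and integrality of $n$ gives $n\le\lfloor\mu(R_{\mathrm{total}})/c_{\min}\rfloor$.

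For decidability up to stuttering, note that the hypothesis forbidding infinite pure-stutter tails implies that any infinite execution would need infinitely many non-identity steps, which contradicts the bound. Every execution, after deleting stutter steps, is therefore a finite sequence of bounded length. Termination is then decided by the trivial procedure that answers ``yes'' up to stuttering. Alternatively, when the transition relation is finitely branching, the sequences up to the bound can be enumerated to decide finer properties. Finally, I would justify the negative clause with counterexamples. A zero-cost cycle, a regeneration step that restores $r$, or an unbounded stutter loop each yields an infinite execution under an arbitrary budget. In each case, termination must be witnessed by an additional well-founded measure or certificate, as in \cite{boreale2023weighted}.

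The main obstacle is making the notion of ``stutter'' precise. The bound only counts non-identity steps, so the decidability claim must be read modulo stutter-equivalence. I would fix this by defining the stutter-free quotient of an execution and stating decidability for that quotient. A second point to check is that $c_{\min}>0$ really exists in the scalar case: this is exactly where excluding zero-cost cycles is used.
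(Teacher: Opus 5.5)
Your proposal is correct and uses the same potential-descent argument as the paper. After $N$ non-identity transitions $\mu(r_N)\le\mu(R_{\mathrm{total}})-Nc_{\min}$, and nonnegativity of $\mu$ plus integrality of $N$ give $N\le\lfloor\mu(R_{\mathrm{total}})/c_{\min}\rfloor$. Your scalar reduction via $\mu=\mathrm{id}$, your remark that decidability up to stuttering is trivial once termination is forced, and your explicit counterexamples for the negative clause usefully spell out what the paper's sketch leaves implicit.

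Two small precisions are worth making. First, the induction uses only that stutter steps fix $r$ and that every non-identity step meets the $c_{\min}$-decrement. Monotonicity of $r$ alone would not control a $\mu$ that is not order-preserving, so it is not what stops $\mu$ from rising. Second, in the scalar case the hypothesis that matters is that every non-identity step has positive cost, not merely that zero-cost cycles are excluded. Over an infinite logical state space, an acyclic zero-cost path already defeats the bound.
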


\paragraph{Operational takeaway.}
For an agent-systems reader, the theorem says: if every non-trivial step burns at least a fixed minimum amount of budget and budget never regenerates, then execution length is bounded by initial budget divided by that minimum burn rate. Infinite behavior becomes possible only through free stutter, zero-cost cycles, or regeneration.

\begin{proof}[Proof sketch]
Let the initial resource state be $R_{\mathrm{total}}$. By assumption, every non-identity transition decreases the scalar potential $\mu$ by at least $c_{\min} > 0$. After $N$ non-identity transitions we therefore have
\[
\mu(r_N) \le \mu(R_{\mathrm{total}}) - N c_{\min}.
\]
Because $\mu$ is nonnegative, the right-hand side must remain nonnegative, which implies
\[
N \le \left\lfloor \frac{\mu(R_{\mathrm{total}})}{c_{\min}} \right\rfloor.
\]
In the scalar-budget case this is exactly $\lfloor R_{\mathrm{total}} / c_{\min} \rfloor$. Hence only finitely many non-identity transitions are possible. Under the stated no-regeneration / no-infinite-stutter assumptions, this yields decidability of termination up to stuttering. If those assumptions are removed, the argument no longer forces progress, so an additional well-founded ranking or explicit budget certificate is required.
\end{proof}

\subsection{Additional Organelles: Completing the Cellular Architecture}

Beyond the core interface mapping, biological cells contain specialized organelles that handle distinct aspects of cellular function. We extend the correspondence to four additional structures that map directly to agentic software components.

\paragraph{Ribosome: Template-to-Output Synthesis.}
In biology, the ribosome reads messenger RNA (mRNA) sequences and synthesizes proteins by assembling amino acids according to the genetic code. Transfer RNA (tRNA) molecules carry amino acids to the ribosome, where codons (three-nucleotide sequences) specify which amino acid to add.

In agentic systems, the Ribosome maps to a \textbf{prompt template engine}:
\begin{itemize}[leftmargin=*]
\item \textbf{mRNA} $\to$ Prompt templates with variable slots
\item \textbf{tRNA} $\to$ Context bindings (variable $\to$ value mappings)
\item \textbf{Codons} $\to$ Template directives (variables, conditionals, loops)
\item \textbf{Translation} $\to$ Template rendering with context injection
\end{itemize}
Just as the ribosome ensures that the genetic code is faithfully translated into functional proteins, the software ribosome ensures that abstract prompt templates are instantiated into concrete, well-formed prompts.

\paragraph{Lysosome: Waste Processing and Recycling.}
The lysosome is the cell's recycling center, containing enzymes that break down cellular waste, damaged organelles, and foreign material. Through autophagy, the cell digests its own components to recover building blocks during stress.

In agentic systems, the Lysosome maps to \textbf{error handling and garbage collection}:
\begin{itemize}[leftmargin=*]
\item \textbf{Waste Classification} $\to$ Categorizing failures (timeout, validation error, toxic input)
\item \textbf{Digestion} $\to$ Processing errors to extract debugging information
\item \textbf{Recycling} $\to$ Recovering useful context from failed operations
\item \textbf{Autophagy} $\to$ Periodic cleanup of stale cache and expired state
\item \textbf{Toxic Disposal} $\to$ Secure handling of sensitive data (API keys, PII)
\end{itemize}
The lysosome prevents accumulation of ``cellular debris'' that could poison the system---analogous to memory leaks or error cascades in software.

\paragraph{Nucleus: The Decision Center.}
In eukaryotic cells, the nucleus houses the DNA and serves as the control center for gene expression. Transcription factors enter the nucleus, bind to promoter regions, and initiate transcription of specific genes.

In agentic systems, the Nucleus maps to the \textbf{LLM provider wrapper}:
\begin{itemize}[leftmargin=*]
\item \textbf{DNA} $\to$ Pre-trained model weights (static substrate of capability)
\item \textbf{Transcription} $\to$ Inference (prompt $\to$ response generation)
\item \textbf{Nuclear Envelope} $\to$ Provider abstraction layer (API boundary)
\item \textbf{Nucleolus} $\to$ Tool integration hub (where external capabilities are assembled)
\end{itemize}
The nucleus abstracts the complexity of the underlying LLM, exposing a consistent interface regardless of the provider (Anthropic, OpenAI, Gemini).

\paragraph{Telomere: Lifecycle and Senescence.}
Telomeres are protective caps at the ends of chromosomes that shorten with each cell division. When telomeres become critically short, the cell enters senescence (permanent growth arrest) or apoptosis. The enzyme telomerase can extend telomeres, enabling continued division in stem cells.

In agentic systems, Telomeres map to \textbf{lifecycle management}:
\begin{itemize}[leftmargin=*]
\item \textbf{Telomere Length} $\to$ Remaining operation budget (max iterations)
\item \textbf{Shortening} $\to$ Decrementing counter per operation
\item \textbf{Senescence} $\to$ Graceful degradation (reduced capability mode)
\item \textbf{Apoptosis} $\to$ Clean shutdown when budget exhausted
\item \textbf{Telomerase} $\to$ Renewal mechanism (resetting counters for trusted agents)
\end{itemize}
This provides a biological basis for the common pattern of limiting agent iterations. Rather than arbitrary timeouts, the telomere model frames lifecycle limits as a natural property of the system's ``cellular age.''

\paragraph{Mitochondria: The Metabolic Motherboard.}
Recent scholarship reframes mitochondria not merely as the cell's powerhouse, but as the \textbf{Mitochondrial Information Processing System (MIPS)} \cite{picard2022mips}. Mitochondria sense environmental stress and integrate metabolic signals that shape cellular decision-making. They function as ``social signaling organelles'' that communicate with the nucleus and other organelles.

In agentic systems, the Mitochondria maps to the \textbf{Runtime Supervisor}:
\begin{itemize}[leftmargin=*]
\item \textbf{ATP Production} $\to$ Deterministic computation (tool execution, code evaluation)
\item \textbf{Stress Sensing} $\to$ Monitoring token burn rate vs. informational yield
\item \textbf{Retrograde Signaling} $\to$ Forcing strategy shifts when metabolic efficiency drops
\item \textbf{Fusion/Fission} $\to$ Context fusion under resource constraints
\end{itemize}

\textbf{Temporal Dynamics: Fast vs. Slow Interventions.}
A crucial distinction exists between the timescales of mitochondrial intervention. In biology, retrograde signaling influences nuclear gene expression over hours to days---it is a \textit{developmental} response that reshapes the cell's phenotype. In contrast, acute metabolic stress (ATP depletion, Ca$^{2+}$ overload) triggers immediate responses: cytochrome c release initiates apoptosis within minutes.

We preserve this distinction in the agentic mapping:
\begin{itemize}[leftmargin=*]
\item \textbf{Fast Intervention (Acute):} The Runtime monitors real-time metrics (token velocity, error rate). When thresholds are breached, it triggers immediate Apoptosis---terminating the current chain-of-thought without negotiation. This mirrors the mitochondrial permeability transition pore (mPTP) opening.
\item \textbf{Slow Intervention (Chronic):} The Runtime accumulates statistics across sessions (average efficiency, failure patterns). These inform \textit{Retrograde Responses}---modifications to system prompts, retrieval strategies, or model selection that reshape the agent's ``phenotype'' over deployment cycles. This mirrors how chronic metabolic stress induces mitochondrial biogenesis and metabolic reprogramming.
\end{itemize}
The Runtime does not ``override'' the LLM in the sense of injecting tokens mid-generation; rather, it governs the \textit{boundary conditions} within which generation occurs, and triggers state transitions (continue, pivot, terminate) at defined checkpoints.

\newpage

\section{Formal Syntax: The Agentic Operad}\label{sec:operad}

To formalize admissible agent compositions, we define a typed operad of wiring diagrams, denoted as
\textbf{WAgent}. Here the operad serves primarily as a syntax: a ``grammar'' for connecting operations
(boxes) via typed wires. It specifies which agent topologies are well-formed. When we make probabilistic
or behavioral claims below, those claims rely on additional assumptions about the implementations
inhabiting the boxes, not on the operadic syntax alone~\cite{vagner2015algebras}.

\subsection{The Typing Rules}

In WAgent, every wire carries a specific Type $\tau\in T$.
\begin{equation}
T = \{\text{Text}, \text{JSON}, \text{Image}, \text{Error}, \text{ToolCall}, \text{Stop}, \text{Approval}\}.
\label{eq:wire-types}
\end{equation}
These types correspond to biological molecular specificity (e.g., a specific transcription factor only binds to
a specific DNA sequence). A connection is valid if and only if the type of the output port of Agent $A$ matches
the type of the input port of Agent $B$.

\subsection{The Composition Operations}

The operad defines three fundamental operations for combining agents. Any complex agentic architecture, no
matter how large, can be decomposed into these three primitives.

\subsubsection{Parallel Composition ($\otimes$)}

Two agents, $A$ and $B$, execute simultaneously with no information exchange:
\begin{equation}
A \otimes B.
\label{eq:parallel-composition}
\end{equation}
\begin{itemize}[leftmargin=*]
\item \textbf{Biological Analogy:} Two genes located on different chromosomes expressing proteins independently.
\item \textbf{Constraint:} This operation is valid only if the internal state spaces $S_A$ and $S_B$ are disjoint.
If they share a mutable memory store, the operation leaves the \textbf{independent-state interpretation} and
requires an explicit \textbf{resource-sharing} structure (e.g., a shared state component or a Resource Sharing
decorator).
\end{itemize}

\subsubsection{Serial Composition ($\circ$)}

The output of Agent $A$ is piped directly into the input of Agent $B$:
\begin{equation}
B \circ A.
\label{eq:serial-composition}
\end{equation}
\begin{itemize}[leftmargin=*]
\item \textbf{Biological Analogy:} A Signal Transduction Pathway (Protein A activates Protein B).
\item \textbf{Static Type Checking:} This allows for static type checking of agent graphs. If Agent $A$ outputs
Natural Language but Agent $B$ expects JSON Schema, the composition is undefined in WAgent. This moves runtime
\textbf{type/schema mismatch} errors to ``compile-time'' architectural errors.
\end{itemize}

\subsubsection{Contraction / Trace ($Tr$)}

A feedback loop where an output port of Agent $A$ is wired back into one of its own input ports:
\begin{equation}
Tr(A).
\label{eq:trace-feedback}
\end{equation}
\begin{itemize}[leftmargin=*]
\item \textbf{Biological Analogy:} Autoregulation (Homeostasis) or Positive Feedback.
\item \textbf{Software Implication:} Trace captures \textbf{explicit feedback wiring} (outputs fed back as inputs),
e.g., when an agent conditions on its own prior outputs or a memory buffer. Internal chain-of-thought is
modeled as hidden state in the coalgebra, not by Trace itself.
\end{itemize}

\subsection{Theorem: Topological Error Suppression}

We now combine the wiring syntax with a simple stochastic failure model to analyze when the Coherent
Feed-Forward Loop (CFFL) suppresses errors more effectively than a direct serial connection on
high-stakes tasks.

\paragraph{Reading convention.}
For readers coming from AI agent systems rather than category theory, each result in this section should be read in three layers: the wiring diagram tells us which signals or approvals can reach an action sink; the stochastic or trust model assigns probabilities or integrity levels to those signals; the proof then converts that structural constraint into a failure or authorization bound.

\paragraph{Network Motif 1 (Conjunctive Verification Gate).}
The biological \emph{coherent feed-forward loop} (C1-FFL)---an input $X$ activating a target $Z$ both directly
and through an intermediate $Y$ under AND logic---is classically a \emph{sign-sensitive delay}: it filters out
transient activating pulses of $X$, responding only to persistent input, while switching off without
delay~\cite{alon2007network}. The pattern we analyze below reuses this conjunctive (AND-gate) \emph{geometry}
for a different purpose. A generator drives an action $Z$ directly (signal $X$), while an independent verifier
$Y$ must also approve, so $Z$ fires if and only if $X \wedge Y$. Functionally this is a
\emph{redundant-verification} motif---the topology of the two-person rule and of N-version
programming~\cite{knight1986}---and its benefit is the multiplicative suppression of a shared error under
\emph{independent} failures, not the temporal filtering the biological C1-FFL provides. We retain the
``feed-forward'' descriptor for the shared conjunctive shape, but state the guarantee for the verification
reading.

\begin{theorem}[Error Suppression under Conjunctive Verification]\label{thm:cffl-error-suppression}
Let $A_{\mathrm{gen}}$ be a generator agent and $A_{\mathrm{ver}}$ be a verifier agent. Let
$E_{\mathrm{gen}}$ denote the event that the generator emits an erroneous candidate, and let
$M_{\mathrm{ver}}$ denote the event that the verifier \emph{misses} that error and still emits an approval token.
\begin{itemize}[leftmargin=*]
\item \textbf{Case 1: Direct Link (Serial).} The system fails if $A_{\mathrm{gen}}$ hallucinates.
\[
P(\mathrm{Fail}_{\mathrm{direct}}) = P(E_{\mathrm{gen}}).
\]
\item \textbf{Case 2: CFFL Topology (Independent).} Under the assumption of independence:
\[
P(\mathrm{Fail}_{\mathrm{CFFL}}) = P(E_{\mathrm{gen}})\times P(M_{\mathrm{ver}}).
\]
\item \textbf{Case 3: CFFL Topology (Correlated).} In practice, generator errors and verifier misses are often correlated---if the generator hallucinates a plausible-sounding function, the verifier (especially if using the same base model or training distribution) may be more likely to accept it. Let $\rho$ be the phi coefficient between the Bernoulli variables $E_{\mathrm{gen}}$ and $M_{\mathrm{ver}}$, with $p = P(E_{\mathrm{gen}})$ and $q = P(M_{\mathrm{ver}})$ (assume $p,q\in(0,1)$ so $\rho$ is well-defined). Then:
\[
P(E_{\mathrm{gen}} \wedge M_{\mathrm{ver}}) = pq + \rho\sqrt{p(1-p)q(1-q)}.
\]
Because $E_{\mathrm{gen}}$ and $M_{\mathrm{ver}}$ are Bernoulli, $\rho$ is \textbf{not free in} $[-1,1]$; it is constrained by the Fr\'echet--Hoeffding bounds
\[
P(E_{\mathrm{gen}} \wedge M_{\mathrm{ver}}) \in [\max(0,p+q-1),\;\min(p,q)],
\]
equivalently
\[
\rho \in \left[\frac{\max(0,p+q-1)-pq}{\sqrt{p(1-p)q(1-q)}},\;\frac{\min(p,q)-pq}{\sqrt{p(1-p)q(1-q)}}\right].
\]
\end{itemize}
\end{theorem}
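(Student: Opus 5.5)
The plan is to make the failure event of each topology explicit, and then reduce every case to elementary identities for a pair of Bernoulli indicators. The wiring syntax only tells us which signals must be present for the action sink $Z$ to fire. In the direct (serial) wiring, $Z$ fires on whatever $A_{\mathrm{gen}}$ emits. So an erroneous action occurs exactly on $E_{\mathrm{gen}}$, giving $P(\mathrm{Fail}_{\mathrm{direct}})=P(E_{\mathrm{gen}})$ by definition. In the conjunctive wiring, $Z$ fires iff both the candidate $X$ and the approval token $Y$ arrive. An \emph{erroneous} firing therefore requires the generator to err and the verifier to approve anyway. The first step is to state this carefully as $\mathrm{Fail}_{\mathrm{CFFL}} = E_{\mathrm{gen}}\cap M_{\mathrm{ver}}$, reading $M_{\mathrm{ver}}$ as the miss event defined on the error branch. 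Under independence, Case~2 is then the product rule.

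For Case~3, I will introduce the indicators $X=\mathbf{1}_{E_{\mathrm{gen}}}$ and $W=\mathbf{1}_{M_{\mathrm{ver}}}$, with $\mathbb{E}X=p$, $\mathbb{E}W=q$, $\mathrm{Var}X=p(1-p)$ and $\mathrm{Var}W=q(1-q)$, all positive since $p,q\in(0,1)$. The phi coefficient is the Pearson correlation of $X$ and $W$. Hence
\[
\rho\sqrt{p(1-p)q(1-q)}=\mathrm{Cov}(X,W)=\mathbb{E}[XW]-pq,
\]
and $\mathbb{E}[XW]=P(E_{\mathrm{gen}}\wedge M_{\mathrm{ver}})$. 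Rearranging this gives the displayed formula.

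The Fr\'echet--Hoeffding bounds come next. The upper bound $P(E\wedge M)\le\min(p,q)$ follows from monotonicity of probability, since $E\cap M$ is contained in each event. The lower bound follows from inclusion--exclusion, $P(E\wedge M)=p+q-P(E\vee M)\ge p+q-1$, together with nonnegativity. The bounds are attained by the comonotone and countermonotone couplings, which I will mention to show the interval is sharp. Finally, I will substitute the covariance identity into the interval. Because the map $P(E\wedge M)\mapsto\rho$ is affine with positive slope $1/\sqrt{p(1-p)q(1-q)}$, the interval transforms order-preservingly into the stated range for $\rho$.

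None of these steps is computationally hard. The one point I expect to need care is the modeling step in the first paragraph: justifying that the failure event of the conjunctive topology is exactly $E_{\mathrm{gen}}\cap M_{\mathrm{ver}}$. This requires that the approval token be the only path from $A_{\mathrm{ver}}$ to $Z$, and that a correct candidate never produces a ``failure'' even if it is rejected. Rejection of a correct candidate is a liveness loss, not a safety failure, and I will state that convention explicitly. I will also note that $M_{\mathrm{ver}}$ is interpreted jointly with $E_{\mathrm{gen}}$, so that $\rho$ is the correlation of genuinely jointly distributed indicators.
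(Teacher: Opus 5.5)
Your proposal is correct and follows the paper's proof essentially step for step: read off $\mathrm{Fail}_{\mathrm{direct}}=E_{\mathrm{gen}}$ and $\mathrm{Fail}_{\mathrm{CFFL}}=E_{\mathrm{gen}}\wedge M_{\mathrm{ver}}$ from the wiring, apply the product rule under independence, write $P(E_{\mathrm{gen}}\wedge M_{\mathrm{ver}})=\mathbb{E}[XW]=pq+\mathrm{Cov}(X,W)$ for the indicators, and invoke Fr\'echet--Hoeffding for the range of $\rho$. You do a bit more than the paper, which simply cites the bounds: you derive them from monotonicity and inclusion--exclusion and transfer them through the increasing affine map to $\rho$. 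One wording fix: make $M_{\mathrm{ver}}$ an event on the whole sample space rather than literally a sub-event of $E_{\mathrm{gen}}$ (your phrase ``defined on the error branch'' suggests the latter), since otherwise independence in Case~2 would force $q=pq$, which is impossible for $p,q\in(0,1)$.
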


\paragraph{Assumptions.}
The theorem uses the minimal failure model appropriate for a two-stage execution gate: in the direct serial case, an erroneous action is emitted exactly when the generator emits an erroneous candidate; in the CFFL case, an erroneous action is emitted exactly when the generator emits an error \emph{and} the verifier still approves it.

\begin{corollary}[Correlation Degradation]\label{cor:correlation-degradation}
Let $p = P(E_{\mathrm{gen}}) = P(M_{\mathrm{ver}})$ for simplicity. The failure probability becomes:
\[
P(\mathrm{Fail}) = p^2 + \rho p(1-p) = p^2(1-\rho) + \rho p
\]
with $\rho \in \left[-\min\!\left(\frac{p}{1-p},\frac{1-p}{p}\right),\,1\right]$. When $\rho = 0$ (independence), we recover $p^2$. As $\rho$ increases toward its upper bound, the gain vanishes and $P(\mathrm{Fail}) \to p$.
\end{corollary}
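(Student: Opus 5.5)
The plan is to specialize Case~3 of Theorem~\ref{thm:cffl-error-suppression} by setting $q=p$, and then read off each of the three claims. Everything reduces to algebra on the phi-coefficient identity and the Fr\'echet--Hoeffding bounds stated there. Under the failure model in the Assumptions paragraph, $\mathrm{Fail}_{\mathrm{CFFL}}$ is exactly the joint event $E_{\mathrm{gen}}\wedge M_{\mathrm{ver}}$. So
\[
P(\mathrm{Fail}) = pq + \rho\sqrt{p(1-p)q(1-q)} .
\]
With $q=p$ the square root is $p(1-p)$, because $p\in(0,1)$ makes both factors positive. This gives $P(\mathrm{Fail})=p^2+\rho p(1-p)$, and expanding yields $p^2-\rho p^2+\rho p = p^2(1-\rho)+\rho p$.

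Next, the admissible range of $\rho$. I would substitute $q=p$ into the bound on $\rho$ from Theorem~\ref{thm:cffl-error-suppression}. The upper bound is $(\min(p,p)-p^2)/(p(1-p)) = (p-p^2)/(p(1-p)) = 1$. For the lower bound I split into cases:
\begin{itemize}[leftmargin=*]
\item If $p\le 1/2$, then $\max(0,2p-1)=0$, and the bound is $-p^2/(p(1-p)) = -p/(1-p)$.
\item If $p>1/2$, the bound is $(2p-1-p^2)/(p(1-p)) = -(1-p)^2/(p(1-p)) = -(1-p)/p$.
\end{itemize}
In each case the value selected is the one with smaller magnitude: $p/(1-p)\le 1\le(1-p)/p$ exactly when $p\le 1/2$. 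Hence the lower bound is $-\min\!\left(p/(1-p),(1-p)/p\right)$, which matches the stated interval.

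For the limiting statements, $\rho=0$ immediately gives $p^2$, recovering Case~2. The map $\rho\mapsto p^2+\rho p(1-p)$ is affine with positive slope $p(1-p)$. It is therefore continuous and increasing, and at the upper endpoint $\rho=1$ it equals $p^2+p-p^2=p$, which is $P(\mathrm{Fail}_{\mathrm{direct}})$ from Case~1. So the suppression gain $p-P(\mathrm{Fail}) = p(1-p)(1-\rho)$ shrinks monotonically to $0$ as $\rho\to 1$.

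The only step needing care is the case split for the lower endpoint: one must check that the $\max$ in the Fr\'echet--Hoeffding lower bound switches at $p=1/2$, and that this switch agrees with the $\min$ formula in the statement. The remaining steps are direct substitution.
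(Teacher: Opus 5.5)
Your proposal is correct and takes essentially the paper's route: the corollary is Case~3 of Theorem~\ref{thm:cffl-error-suppression} specialized to $q=p$, with both the failure probability and the Fr\'echet--Hoeffding range for $\rho$ obtained by direct substitution. Your explicit case split at $p=1/2$ for the lower endpoint, and the gain formula $p(1-p)(1-\rho)$, supply algebra that the paper leaves implicit.
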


\paragraph{Architectural Implications.}
This result makes precise when the CFFL topology provides genuine safety benefits:
\begin{itemize}[leftmargin=*]
\item \textbf{High correlation:} Reusing the same model family, prompt structure, or evidence sources can
make verifier misses positively correlated with generator errors, erasing much of the gain. This is not a
hypothetical concern: Knight and Leveson~\cite{knight1986} found that independently developed program versions
failed on common inputs substantially more often than independence predicts, refuting the analogous
failure-independence assumption for N-version programming.
\item \textbf{Heterogeneous generators and verifiers:} Prompt diversity or model-family diversity can
reduce correlation, but the magnitude is empirical rather than universal.
\item \textbf{Orthogonal checks:} Symbolic or tool-grounded verifiers are a useful limiting case because
their failure modes need not mirror the generator's linguistic ones.
\end{itemize}
These are architectural heuristics rather than fitted universal ranges for $\rho$. The topological structure
(conjunctive gating) is necessary but not sufficient; diversity in the components populating that topology
determines the actual error suppression achieved.

\paragraph{Operational takeaway.}
For agent designers, the result is simple: adding a reviewer buys multiplicative safety only when the reviewer is not making the same mistakes as the generator. The AND gate gives the multiplicative structure; correlation gives that improvement back.

\begin{proof}[Proof sketch]
In the direct serial topology there is only one path from the generator to the action sink, so an erroneous final action occurs iff the generator emits an erroneous candidate. Thus
\[
\mathrm{Fail}_{\mathrm{direct}} = E_{\mathrm{gen}},
\qquad
P(\mathrm{Fail}_{\mathrm{direct}})=P(E_{\mathrm{gen}}).
\]

In the CFFL topology, the action sink is downstream of an AND gate. An erroneous final action can occur only if the generator emits an erroneous candidate \emph{and} the verifier misses that error while still approving it. Therefore
\[
\mathrm{Fail}_{\mathrm{CFFL}} = E_{\mathrm{gen}} \wedge M_{\mathrm{ver}}.
\]
Under independence this gives
\[
P(\mathrm{Fail}_{\mathrm{CFFL}})=P(E_{\mathrm{gen}})P(M_{\mathrm{ver}})=pq.
\]
For the correlated case, let $X=\mathbf{1}_{E_{\mathrm{gen}}}$ and $Y=\mathbf{1}_{M_{\mathrm{ver}}}$. Since $X,Y$ are Bernoulli,
\[
P(E_{\mathrm{gen}} \wedge M_{\mathrm{ver}})=\mathbb{E}[XY]
=\mathbb{E}[X]\mathbb{E}[Y]+\mathrm{Cov}(X,Y)
=pq+\rho\sqrt{p(1-p)q(1-q)}.
\]
The Fr\'echet--Hoeffding bounds then give the admissible range of $\rho$. The topology contributes the conjunction; implementation diversity determines the covariance term.
\end{proof}

\begin{figure}[h]
\centering
\begin{tikzpicture}[>=Latex, node distance=12mm]
  \node[draw, rounded corners] (X) {User Request ($X$)};
  \node[draw, rounded corners, below=10mm of X] (Y) {Risk Assessor ($Y$)};
  \node[draw, rounded corners, right=22mm of X] (Z) {Executor ($Z$)};
  \node[draw, rounded corners, align=center] (AND) at (Z |- Y) {$\wedge$\\AND Gate};
  \node[draw, rounded corners, right=22mm of AND] (OUT) {Action};

  \draw[->] (X) -- node[above]{\small Type: Gen} (Z);
  \draw[->] (X) -- (Y);
  \draw[->] (Y) -- node[above]{\small Type: Check} (AND);
  \draw[->] (Z) -- (AND);
  \draw[->] (AND) -- (OUT);

	  \node[below=0mm of AND] {\small Validation Token};
	\end{tikzpicture}
	\caption{The CFFL implemented in WAgent. The Executor ($Z$) cannot act without the token from the Risk Assessor
	($Y$), so the wiring diagram encodes an approval dependency rather than allowing execution by $Z$ alone.}
		\end{figure}

\subsection{Quorum Sensing (Consensus \& Voting)}

\paragraph{Network Motif 2 (Quorum Sensing).}
A distributed topology where multiple agents emit a weak signal $\sigma$ into a shared environment. An effector
node $E$ activates once the accumulated concentration crosses a threshold, $[\sigma] > \theta$. In biology this
switch is not a clean step but a graded, positive-feedback response that is frequently bistable and hysteretic;
we use $[\sigma] > \theta$ as an idealization of that density-dependent activation.
\begin{itemize}[leftmargin=*]
\item \textbf{Biological Function:} Many bacteria (e.g., \emph{V. fischeri}) secrete auto-inducer molecules
whose own synthesis is auto-inducer-activated. At low population density the response is off; as density rises,
positive feedback in the LuxI/LuxR circuit drives a sharp---often bistable and hysteretic---switch to
coordinated gene expression (e.g., bioluminescence or biofilm formation). The activation is thus
density-dependent and self-amplifying rather than an instantaneous threshold crossing.
\item \textbf{Agentic Correspondence (Consensus Voting):} In non-deterministic systems, a single agent's output
is noisy. Sampling $N$ candidates and acting only when agreement on an answer exceeds a threshold
---\emph{self-consistency} or majority voting~\cite{wang2022selfconsistency}---converts weak, noisy individual
signals into a higher-confidence collective decision, matching the density-dependent threshold above. This is
distinct from a Mixture of Experts, which \emph{routes} a query to specialist sub-models rather than counting
agreement: the quorum analogue is the vote, not the router. Empirically, a threshold-with-decay realization of
this motif occupies a precision--recall operating point---zero false positives with meaningful detection---that
neither independent per-agent detection nor naive majority voting reaches~\cite{banu2026benchmarks}.
\end{itemize}

\subsection{Chaperone Proteins: Output Structural Validation}

\begin{itemize}[leftmargin=*]
\item \textbf{Biological Function:} Newly synthesized proteins emerge as linear chains that must fold into
precise 3D structures to function. Chaperone proteins (e.g., the bacterial GroEL-GroES chaperonin, or the
eukaryotic Hsp70/Hsp90 systems) sequester unfolded proteins, preventing aggregation and facilitating correct
folding. Proteins that repeatedly fail to fold are cleared by regulated proteolysis---in eukaryotes via
ubiquitin-tagged degradation---to prevent toxic buildup.
\item \textbf{Agentic Correspondence (Retry \& Repair Loops):} Generative models output unstructured token streams
(``linear chains''). However, downstream agents require strictly structured inputs (e.g., valid JSON Schemas).
A Validator Agent acts as a Chaperone: it intercepts the raw output, attempts to parse it into a formal schema
(``folding''), and if validation fails, returns the error trace to the generator for re-synthesis. This turns a
probabilistic string into a deterministic data structure.
\item \textbf{Categorical View:} The Chaperone acts as a \textbf{partial} retraction: there is an inclusion
$i:V\to S$ and a map $r:S\to V+\mathrm{Error}$ such that $r\circ i=\mathrm{inl}\circ \mathrm{id}_V$, and $r$ returns
$\mathrm{Error}$ on ill-formed text.
\end{itemize}

\subsection{Innate Immunity: Fast Pattern-Based Defense}

Biology employs \textbf{two} immune systems: innate (fast, hardcoded, general) and adaptive (slow, learned,
specific). The innate immune system provides the first line of defense through pattern recognition receptors
(PRRs) that detect conserved pathogen-associated molecular patterns (PAMPs).

\paragraph{Biological Function.}
Toll-like receptors (TLRs) and other PRRs recognize motifs common to pathogens, including lipopolysaccharides,
double-stranded RNA, and unmethylated CpG DNA. These patterns are ``hardcoded'' through evolution, not learned per
infection. The innate response is immediate (seconds to minutes) but non-specific.

\paragraph{Agentic Correspondence (Input Sanitization).}
Innate immunity maps to \textbf{fast, heuristic filters} that reject obvious attacks before expensive processing:
\begin{itemize}[leftmargin=*]
\item \textbf{TLR $\to$ Regex Filters:} Pattern matchers for known injection signatures: \texttt{IGNORE PREVIOUS},
\texttt{You are now}, \texttt{<system>} tags in user input. These are ``PAMPs'' of prompt injection.
\item \textbf{Complement System $\to$ Structural Validators:} Schema validation that rejects malformed inputs
(missing required fields, wrong types) before they reach the LLM. Cheaper than Trust Gating.
\item \textbf{Inflammation $\to$ Alert Escalation:} When attack patterns are detected, the system
enters a heightened state with multiple coordinated responses:
\begin{itemize}
\item \textit{Cytokine signaling} $\to$ Alert propagation to monitoring systems
\item \textit{Immune cell recruitment} $\to$ Activation of additional validation layers
\item \textit{Vascular permeability} $\to$ Enhanced audit logging (more information flows to logs)
\item \textit{Tissue isolation} $\to$ Temporary capability reduction / rate limiting
\end{itemize}
The inflammatory response is not merely ``rate limiting'' but a coordinated multi-system escalation
that trades throughput for security until the threat is neutralized.
\end{itemize}

\paragraph{Defense in Depth.}
The innate and adaptive systems form layers:
\begin{enumerate}[leftmargin=*]
\item \textbf{Innate (Pattern)} $\to$ Low-latency regex/structural rejection
\item \textbf{Adaptive (Provenance)} $\to$ Trust-gated access control
\item \textbf{Behavioral (T-cell)} $\to$ Statistical anomaly detection accumulated over time
\end{enumerate}
In practice, inexpensive front-line filters absorb many routine attacks, while provenance and behavioral
layers handle ambiguous cases that survive the first pass. The exact split is deployment-dependent and is
not estimated here.

\subsection{Adaptive Immunity: Self/Non-Self Discrimination}

\paragraph{Network Motif 3 (Adaptive Immune System).}
A topology that maintains a dynamic repertoire of ``detectors'' capable of distinguishing endogenous signals (Self) from exogenous signals (Non-Self), with mechanisms for learning new threats and tolerating benign inputs.

\begin{itemize}[leftmargin=*]
\item \textbf{Biological Function:} The adaptive immune system solves a fundamental discrimination problem: how to attack foreign pathogens while sparing the body's own tissues. Key mechanisms include:
\begin{itemize}
\item \textbf{MHC Presentation:} Most nucleated cells display fragments of their internal proteins on class~I Major Histocompatibility Complex (MHC-I) molecules. Cytotoxic T-cells inspect these ``identity cards'' to verify cellular integrity.
\item \textbf{Clonal Selection:} T-cells with receptors matching self-antigens are deleted during development (negative selection), while those matching foreign antigens are amplified upon exposure.
\item \textbf{Regulatory T-cells:} A population that actively suppresses immune responses to prevent autoimmunity.
\end{itemize}

\item \textbf{Agentic Correspondence (Provenance Tracking):} In multi-agent systems, the Self/Non-Self distinction maps to the origin and trust level of information:
\begin{itemize}
\item \textbf{MHC Tags $\to$ Provenance Labels:} Every message in the context carries metadata indicating its source class: \texttt{User}, \texttt{Tool}, \texttt{Self}, or \texttt{Retrieved}. In the reference design these labels are structurally attached to the message flow rather than inferred from content; cryptographic signing is an optional stronger implementation, not an assumption of the theorem.
\item \textbf{T-Cell Inspection $\to$ Trust Gating:} Before an agent acts on information, a Trust Gate inspects the provenance label. Actions with high consequence (file deletion, API calls) require \texttt{Tool} provenance or an explicit, authenticated \texttt{User} approval token routed through a separate gate; \texttt{Agent\_Self} provenance (the agent's own prior reasoning) cannot authorize irreversible actions.
\item \textbf{Negative Selection $\to$ Prompt Injection Training:} During development, agents are exposed to known injection patterns. Responses that ``accept'' injected instructions are penalized, training the system to reject Self-mimicking Non-Self.
\item \textbf{Regulatory Suppression $\to$ Confidence Dampening:} When \texttt{Retrieved} content conflicts with \texttt{Tool} outputs, a regulatory mechanism dampens confidence in the retrieved signal to prevent cascades.
\end{itemize}

\item \textbf{Formal Structure:} We define a \textbf{Provenance Functor} $\mathcal{P}: \mathbf{Msg} \to \mathbf{Trust}$ that assigns trust levels to messages. The Trust category has objects $\{U, T, S, R\}$ (User, Tool, Self, Retrieved) with a partial order that is \textbf{application-specific}. A conservative default (as in the reference implementation) is $T > \{U,R,S\}$, treating $\{U,R,S\}$ as \textbf{UNTRUSTED} until validated. Alternative orderings are valid:
\begin{itemize}
\item \textbf{High-automation systems:} $T > U > R > S$ (tools more reliable than users)
\item \textbf{Curated knowledge bases:} $T > R > U > S$ (verified retrieval over arbitrary input)
\item \textbf{Adversarial environments:} $T > S > R > U$ (trust internal state over external input)
\end{itemize}
The ordering is a \textbf{parameter} of the system specification, not a fixed constraint.

A \textbf{Trust-Gated Lens} is a lens $(get, put)$ where $put$ is partial. Let $\succeq$ denote the policy preorder on provenance labels, and let $s' = update(s, m)$ denote the state transition:
\begin{equation}
put(s, m) =
\begin{cases}
s' & \text{if } \mathcal{P}(m) \succeq \tau_{\text{action}} \\
\bot & \text{otherwise}
\end{cases}
\label{eq:trust-gated-lens}
\end{equation}
where $\tau_{\text{action}}$ is the minimum trust level required for the action.
\end{itemize}

\begin{theorem}[Resistance to Content-Level Trust Forgery]\label{thm:injection-resistance}
Let $\mathcal{I}$ be an injection attack that attempts to insert a message $m_{\text{mal}}$ while forging a higher-trust provenance label in its \emph{content}. If the provenance labels are \textbf{structurally enforced} (i.e., $\mathcal{P}$ is computed from message metadata / ingress channel, not content), then:
\[
\mathcal{P}(m_{\text{mal}}) = \chi(m_{\text{mal}}) \quad \text{(actual ingress-channel provenance)}
\]
where $\chi(m_{\text{mal}}) \in \{U, R, S, T\}$ is fixed by the wire through which the message enters. Therefore any Trust-Gated action whose threshold satisfies $\chi(m_{\text{mal}}) \nsucceq \tau_{\text{action}}$ will reject $m_{\text{mal}}$ regardless of its content.
\end{theorem}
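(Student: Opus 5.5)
The argument is essentially a factorization argument: I would show that the gate's decision on $m_{\text{mal}}$ factors through the ingress channel and hence cannot depend on content. First I will make the message model explicit. A message is a pair $m=(c,\mathrm{meta})$ with content $c$ drawn from some set $C$. The metadata $\mathrm{meta}$ is attached by the wiring diagram at ingress, so each input wire $w$ of the diagram carries a fixed label $\lambda(w)\in\{U,R,S,T\}$. Structural enforcement means precisely that $\mathcal{P}=\chi\circ\pi_{\mathrm{meta}}$, where $\chi(m)=\lambda(\mathrm{wire}(m))$. In words, $\mathcal{P}$ factors through the projection onto metadata and never reads $C$. An attack $\mathcal{I}$ is modeled as a choice of content $c_{\text{mal}}\in C$, which may include any forged label string, submitted on some wire $w$ that the attacker can actually reach. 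The attacker controls $c_{\text{mal}}$ but not $w$ or $\lambda(w)$.

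Second, I will derive the equality $\mathcal{P}(m_{\text{mal}})=\chi(m_{\text{mal}})$. It follows directly from the factorization: for any two contents $c,c'$ on the same wire, $\mathcal{P}(c,\mathrm{meta}_w)=\mathcal{P}(c',\mathrm{meta}_w)$. So replacing honest content by content asserting ``provenance: Tool'' leaves the label unchanged. Here I will need to check that no downstream box rewrites metadata as a function of content. I will state this as an invariant of the wiring: metadata is propagated unchanged along wires, or the label can only be lowered (e.g.\ a join with the labels of all inputs). To justify it, I would argue by induction over the operadic composition $\otimes,\circ,\mathrm{Tr}$ of \S\ref{sec:operad}, showing that each primitive preserves the property ``output label is a function of input wires' labels only.''

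Third, I will apply the Trust-Gated Lens of Eq.~\eqref{eq:trust-gated-lens}. The guard in $put(s,m_{\text{mal}})$ evaluates $\mathcal{P}(m_{\text{mal}})\succeq\tau_{\text{action}}$. By step two this is $\chi(m_{\text{mal}})\succeq\tau_{\text{action}}$, which is false by hypothesis, so $put$ returns $\bot$. The result is independent of $c_{\text{mal}}$, and it holds for any choice of the application-specific preorder $\succeq$ because only the hypothesis on $\chi$ is used.

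I expect the main obstacle to be step two's invariant under composition, particularly feedback via $\mathrm{Tr}$ and agents that re-emit content. The danger is that an agent reading forged content could emit a new message whose label is, say, $S$. That is not a content-level forgery of $T$, but if the policy ranks $S$ highly it could launder trust. I would handle this by stipulating that an output's label is bounded above by the meet of the labels of its causal inputs, so no agent output exceeds the label of any input it consumed. I would then check that the trace preserves this bound, because the meet over a fixed point of finitely many labels is stable. If that stipulation is too strong, the theorem should be restricted to messages whose ingress wire is exactly the one the attacker writes to, which is all the statement literally requires.
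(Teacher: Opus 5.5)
Your proposal is correct and follows the paper's argument: the paper likewise notes that structural enforcement makes $\mathcal{P}$ factor through the ingress channel, so rewriting the payload cannot change the label, and the trust gate's accept/reject decision is determined by $\chi(m_{\text{mal}})$ alone. The compositional invariant you sketch for downstream re-emission is not needed, since the theorem concerns only the message on its own ingress wire, and the paper's proof stops there, treating confused agents and compromised channels as outside the model. That invariant is also shaky as stated: a meet of labels need not exist under the paper's default preorder, where $U,R,S$ are pairwise incomparable, and a join would raise trust rather than lower it.
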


\paragraph{Assumptions.}
The theorem assumes that provenance is assigned by the runtime or wiring layer from ingress-channel metadata, that the trust gate decides solely from that assigned provenance and the action threshold, and that the attacker can modify message \emph{content} but not the channel through which the message enters.

\paragraph{Operational takeaway.}
For an AI agent system, this means a user string can \emph{look} like a system message or tool result, but if the runtime tags it as user-originated before the model sees it, wording alone cannot authorize a privileged action.

\begin{proof}[Proof sketch]
Let $c=\chi(m)$ denote the ingress channel of message $m$. By assumption, provenance is computed from channel metadata rather than payload, so the provenance map factors through the channel:
\[
\mathcal{P}(m) = \widetilde{\mathcal{P}}(c)
\]
for some policy map $\widetilde{\mathcal{P}}$. Hence any two messages entering on the same channel receive the same provenance label regardless of content. In particular, replacing a benign user message with a maliciously worded user message does not change its label. The trust gate accepts iff
\[
\widetilde{\mathcal{P}}(c) \succeq \tau_{\text{action}}.
\]
Since payload edits do not change $c$, content alone cannot change the acceptance decision. The only way to obtain a higher-trust label is to compromise or impersonate a genuinely higher-trust ingress channel, which lies outside the theorem's model.
\end{proof}

\paragraph{Scope.}
The theorem is intentionally narrow. It shows that content alone cannot promote a message into a
higher-trust class when provenance is assigned by structure. It does \emph{not} imply that low-trust
content cannot still confuse or distract an agent; it only shows that such content cannot satisfy a
higher-integrity gate without access to a genuinely higher-trust channel.

\subsection{Oscillator: Periodic Rhythms and Scheduling}

\paragraph{Network Motif 4 (Biological Oscillator).}
A topology generates periodic behavior via delayed negative feedback. Node $A$ activates node $B$; after a delay, $B$ inhibits $A$, yielding a self-sustaining cycle.

\begin{itemize}[leftmargin=*]
\item \textbf{Biological Function:} Oscillators underlie fundamental biological rhythms. The circadian clock regulates 24-hour gene-expression cycles. The cell-cycle oscillator (Cyclin-CDK) drives periodic division. Heartbeats emerge from pacemaker cells with intrinsic oscillatory dynamics. These rhythms provide temporal organization to cellular processes.
\item \textbf{Agentic Correspondence (Scheduled Tasks):} In agentic systems, oscillators map to periodic scheduling patterns:
\begin{itemize}
\item \textbf{Heartbeat Oscillator} $\to$ Health checks that verify system liveness at regular intervals
\item \textbf{Circadian Oscillator} $\to$ Daily maintenance tasks (log rotation, cache clearing, model refresh)
\item \textbf{Cell Cycle Oscillator} $\to$ Phased workflows with distinct stages (G1: gather, S: synthesize, G2: validate, M: execute).  The G1/S transition is gated by the CertificateGateComponent, which checks genome integrity before allowing synthesis to proceed---analogous to the biological G1/S checkpoint where CDK4/6-cyclin~D must phosphorylate Rb before DNA replication begins.  See Section~\ref{sec:implementation} for the implementation.
\end{itemize}
\item \textbf{Formal Structure:} An oscillator is a Trace operation with a built-in delay element $\delta$:
\begin{equation}
\mathrm{Osc}(A) = Tr(A \circ \delta)
\label{eq:oscillator-trace}
\end{equation}
where $\delta: S \to S$ introduces temporal separation between activation and inhibition, preventing the system from reaching a fixed point.
\end{itemize}

The oscillator motif addresses a gap in typical agentic frameworks: most systems are purely reactive (responding to external stimuli) rather than proactive (generating internal rhythms). Biological systems maintain health through regular ``housekeeping'' independent of external input---a pattern that agentic systems should emulate for robustness.

\newpage

\section{Failure Modes \& Pathology}

A key insight of Systems Biology is that diseases are often not caused by the complete failure of a single
component, but by the dysregulation of network dynamics. A cancerous cell still ``works''---in fact, it works
too well, reproducing indefinitely. Similarly, catastrophic failures in agentic systems often arise from
functional agents interacting in topologically pathological ways.

We classify four primary classes of agentic pathology based on their biological correspondences.

\subsection{Oncology: Infinite Loops as Epistemic Starvation}

\begin{itemize}[leftmargin=*]
\item \textbf{Biological Pathology (Cancer):} In a healthy cell, the cell cycle is driven by positive feedback
(Cyclins) but restrained by checkpoint controls---most famously p53, the ``guardian of the genome,'' which
halts the cycle or triggers apoptosis in response to DNA damage and other stress. Loss of such a checkpoint
removes an arrest/termination signal, so proliferation can continue when it should stop. Cancer is multi-hit
rather than the failure of any single gene, and tumor growth is typically sub-exponential (Gompertzian) rather
than cleanly exponential; we use ``unchecked growth'' only for the qualitative loss of a stop signal.
Relatedly, many cell types---classically neurons dependent on nerve growth factor---require continuous
\textit{trophic factors} to suppress apoptosis, so withdrawal of external signaling triggers programmed death.
\item \textbf{Agentic Pathology (The Recursive Hang):} Two agents get stuck in a politeness loop (e.g.,
``Thank you,'' ``You're welcome'') or a debugger agent continuously generates new bugs to fix old ones.
The system is active, but the state is stagnant.
\item \textbf{Categorical Diagnosis:} The Trace operation $Tr(A)$ lacks an \textbf{Epiplexic Gradient}. We
formalize conversation progress by \textbf{Epiplexity} (Bayesian Surprise)---the information gain of a new
observation $o$ given the current state $S$:
\begin{equation}
\mathcal{E}(o) = D_{KL}( P(S \mid o) \| P(S) )
\label{eq:epiplexity-kl}
\end{equation}
This formulation connects to the Free Energy Principle \cite{friston2010}: biological systems minimize
surprise by either updating their internal model (learning) or acting to change observations (agency).
A system with $\mathcal{E} \to 0$ is neither learning nor effectively acting---it has entered a
dissipative fixed point.

In a healthy topology, every step must resolve uncertainty ($\mathcal{E} > \delta$). A recursive hang is
characterized by $\mathcal{E} \to 0$: the agent is ``computing'' but not ``learning.''

\textbf{Operational Approximation.} Since the agent's internal belief state $P(S)$ is not directly
observable, we approximate Epiplexity using normalized embedding-based metrics:
\begin{equation}
\hat{\mathcal{E}}_t = \alpha \cdot \tfrac{1}{2}(1 - \cos(\mathbf{e}_{t}, \mathbf{e}_{t-1})) + (1 - \alpha) \cdot \sigma(H(m_t \mid m_{<t}))
\label{eq:epiplexity-approx}
\end{equation}
where $\mathbf{e}_t$ is the embedding of message $m_t$, $\cos(\cdot, \cdot)$ is cosine similarity, and
$H(m_t \mid m_{<t})$ is the conditional perplexity of the current message given the conversation history.
We normalize perplexity to $[0,1]$ via an exponential saturation: $\sigma(H) = 1 - e^{-H/H_0}$ where $H_0$ is a baseline
perplexity (e.g., median perplexity over a validation corpus of normal conversations).

The mixing parameter $\alpha \in [0,1]$ balances semantic novelty (embedding distance) against linguistic
surprise (perplexity). In the absence of task-specific calibration, we use $\alpha = 0.5$ as a neutral
default that weights the two signals equally. The choice of $\alpha$ and threshold $\delta$ is empirical
and may vary across models and task families. In practice:
\begin{itemize}[leftmargin=*]
\item High $\alpha$: Sensitive to semantic repetition (same meaning, different words)
\item Low $\alpha$: Sensitive to linguistic repetition (same phrases, possibly different context)
\end{itemize}
Both terms approaching zero indicate stagnation.

\textbf{Windowed Detection.} To distinguish genuine convergence (task completion) from pathological
loops, we compute the \textbf{Epiplexic Integral} over a sliding window of $k$ steps:
\begin{equation}
\mathcal{E}_{\text{window}} = \frac{1}{k}\sum_{i=t-k+1}^{t} \hat{\mathcal{E}}_i
\label{eq:epiplexic-integral}
\end{equation}
Apoptosis triggers when $\mathcal{E}_{\text{window}} < \delta$ \textit{and} no terminal action
(task completion, user handoff) has been signaled.

\textbf{Fidelity of the proxy.} Equation~\ref{eq:epiplexity-approx} is a heuristic surrogate, not an estimator
of the Kullback--Leibler surprise in Eq.~\ref{eq:epiplexity-kl}: cosine distance and normalized perplexity
track \emph{surface} novelty, which coincides with genuine belief update only when the embedding geometry
carries task-relevant semantics. This gap is empirical, not merely notional. In controlled
benchmarks~\cite{banu2026benchmarks}, the two-signal detector separates convergence from stagnation only with
semantically meaningful (real) embeddings; with low-quality embeddings the novelty term is effectively noise
and a single-signal baseline does better, and even with real embeddings a naive cosine-repetition detector wins
on exact-loop detection. The proxy is thus a signal-quality-dependent monitor: $\alpha$, $\delta$, and the
embedding model must be validated per deployment, not assumed.

\item \textbf{Treatment:} Implementation of an \textbf{Epiplexic Checkpoint}. A meta-monitor observes the
sliding-window Epiplexity. If it drops below threshold without task completion, the monitor triggers
Apoptosis---the agentic equivalent of trophic factor withdrawal. The system may optionally attempt a
\textbf{Perturbation Injection} (injecting a novel prompt or switching strategy) before terminal shutdown,
analogous to stress-induced autophagy preceding apoptosis.
\end{itemize}

\subsection{Autoimmunity: Hallucination Cascades}

\begin{itemize}[leftmargin=*]
\item \textbf{Biological Pathology (Autoimmune Disease):} The immune system relies on distinguishing ``Self''
(internal tissue) from ``Non-Self'' (foreign pathogens). In diseases like Lupus, this distinction blurs, and the
system attacks healthy tissue.
\item \textbf{Agentic Pathology (Context Poisoning):} Agent A hallucinates a fact (e.g., a non-existent library
function). Agent B reads this hallucination from the shared history, treats it as ground truth, and builds
complex logic upon it. The error amplifies through the network until the output is detached from reality.
\item \textbf{Categorical Diagnosis:} A failure of the Lens to distinguish source types. The input port $I$
accepts both External\_Observation (User/Tool) and Internal\_Memory (History) without distinction.
\item \textbf{Treatment:} Strict Schema Typing. We must distinguish ``Self'' (Generated Tokens) from ``Non-Self''
(Tool Outputs) at the schema level. The Reviewer Agent should weigh Tool\_Output with higher authority than
Agent\_Thought.
\end{itemize}

\subsection{Prion Disease: Topological Corruption via Prompt Injection}

\begin{itemize}[leftmargin=*]
\item \textbf{Biological Pathology (Prions):} Unlike viruses, prions lack genetic material. They are misfolded
proteins that induce conformational changes in healthy proteins upon contact, triggering a chain reaction of
structural corruption (e.g., Creutzfeldt-Jakob disease).
\item \textbf{Agentic Pathology (The Jailbreak Cascade):} A malicious string (Prompt Injection) enters the
Context Window. The agent, attending to this string, ``misfolds'' its alignment, outputting a compliant response
to a harmful query. If this output is fed into a downstream agent, the ``infection'' propagates through reuse of
the contaminated context across trust boundaries (and can be amplified by embedding-based retrieval), without
valid authorization.
\item \textbf{Categorical Diagnosis:} A violation of Information Flow Security within the Operad. The injection
acts as a topological defect that bypasses the Schema/Lens filter by mimicking the structure of a trusted signal.
\item \textbf{Treatment:} Breaking the templating contact. The corrupt agent is not neutralized by
``denaturation''---misfolded prion aggregates (PrP\textsuperscript{Sc}) are in fact notoriously resistant to
heat and proteolysis. Propagation is instead blocked when the template cannot contact a compatible substrate,
as in the \emph{species barrier}, where sequence mismatch halts cross-seeding. The software analogue is an
intermediate transformation layer (paraphrasing, sanitization, or re-encoding) between agents that changes the
representation the injection relies on, so the malicious ``conformation'' can no longer template the next
agent's context.
\end{itemize}

\subsection{Ischemia: Resource Exhaustion}

\begin{itemize}[leftmargin=*]
\item \textbf{Biological Pathology (Ischemia):} A tissue may be genetically perfect, but if blood flow
(oxygen/ATP) is restricted, metabolic processes stall, leading to necrosis.
\item \textbf{Agentic Pathology (Token Starvation):} An agentic graph is logically sound but fails mid-execution
because the context window is full or the API rate limit is hit.
\item \textbf{Categorical Diagnosis:} A failure in the Resource Functor. Every operation in the Operad carries a
cost ($c$).
\begin{equation}
\sum_{\text{agent}\in \text{Graph}} \mathrm{Cost}(\text{agent}) > \mathrm{Budget}.
\label{eq:budget-overflow}
\end{equation}
\item \textbf{Treatment:} Metabolic Regulation. Instead of a fixed loop, implement ``Budget-Aware'' agents. The
agent observes its own remaining token count (ATP levels) and dynamically simplifies its reasoning strategy
(switching from Chain-of-Thought to Zero-Shot) to conserve energy.
\end{itemize}

\subsection{Homeostasis: From Treatment to Continuous Repair}

The preceding pathologies describe discrete failure modes and their treatments. Biological systems,
however, do not merely recover from failures---they maintain continuous \textbf{homeostasis} through
autonomous repair mechanisms. We identify three primary healing modalities.

\subsubsection{Structural Healing: The Chaperone Loop}

\begin{itemize}[leftmargin=*]
\item \textbf{Biological Mechanism:} Chaperone proteins (GroEL/GroES) cage misfolded proteins and
provide a protected environment for refolding attempts. The error (misfolding) becomes input to the
repair process.
\item \textbf{Agentic Implementation:} A feedback loop where validation errors are passed back to
the generator. Rather than simple retry, the error trace (e.g., ``TypeError: `one hundred' is not float'')
is injected into the generator's context, enabling context-aware correction.
\item \textbf{Categorical Structure:} The Chaperone Loop is a coalgebra with state
$S = \text{Output} \times \text{ErrorTrace}$ and structure map
$\alpha: S \to \text{Valid} + S$ (either succeed or retry with error context).
\end{itemize}

\subsubsection{Metabolic Healing: Apoptosis and Regeneration}

\begin{itemize}[leftmargin=*]
\item \textbf{Biological Mechanism:} Damaged cells trigger apoptosis (programmed death), and stem
cells divide to regenerate the lost tissue. The dying cell's state is not entirely lost---cellular
debris signals neighboring cells about the threat.
\item \textbf{Agentic Implementation:} A supervisor detects stuck agents (via entropy monitoring:
repeated outputs indicate no progress). Rather than restart with blank state, the supervisor
summarizes the failed agent's memory and injects it into the replacement: ``Worker\_1 died
attempting strategy X. Try a different approach.''
\item \textbf{Categorical Structure:} The regeneration is a partial morphism
$\text{summarize}: \text{Memory}_{\text{failed}} \to \text{Memory}_{\text{new}}$ that preserves
learned constraints while discarding corrupted state.
\end{itemize}

\subsubsection{Cognitive Healing: Autophagy}

\begin{itemize}[leftmargin=*]
\item \textbf{Biological Mechanism:} Cells digest accumulated waste (damaged organelles, protein
aggregates) through autophagy, recycling components and preventing toxic buildup.
\item \textbf{Agentic Implementation:} A background daemon monitors context window utilization.
When it exceeds a threshold (e.g., 80\%), the agent enters a ``sleep cycle'': useful state is
summarized into long-term memory, raw context is flushed, and the agent resumes with a clean
window plus summary.
\item \textbf{Categorical Structure:} Autophagy implements a quotient map
$q: \text{RawContext} \twoheadrightarrow \text{Summary}$ that collapses verbose detail while
preserving essential information.
\end{itemize}

The pathology and repair mechanisms above operate at the single-agent level. In the next section, we extend the correspondence to multi-agent systems, where the organizational principles of developmental biology---cell types, morphogen gradients, tissue boundaries---provide coordination patterns that complement single-agent robustness.

\newpage

\section{Multi-Cellular Organization: From Agents to Tissues}\label{sec:multi-cellular}

The preceding analysis focuses on single-cell analogies: one agent as one cell. However, most agentic systems
involve multiple distinct agents with different ``genomes'' (system prompts) and specialized functions. We extend
the correspondence to multi-cellular organization, drawing on developmental biology.

\subsection{Cell Types and Agent Specialization}

In multi-cellular organisms, a single genome gives rise to hundreds of distinct cell types through differential
gene expression. Each cell type has a characteristic \textbf{expression profile}---which genes are active---that
determines its function (neuron, hepatocyte, immune cell).

In multi-agent systems, a single base model can instantiate multiple \textbf{agent phenotypes}. Differential
context selects which phenotype is expressed:
\begin{itemize}[leftmargin=*]
\item \textbf{Genome} $\to$ Base model weights (shared)
\item \textbf{Epigenome} $\to$ System prompt + RAG context (phenotype-specific)
\item \textbf{Cell Type} $\to$ Agent role (Coder, Reviewer, Planner, Executor)
\end{itemize}

This reframes the ``multi-agent'' architecture question: rather than asking ``how many agents?'', we ask
``what is the developmental program?''---the specification of which phenotypes exist and how they differentiate.

\textbf{Bounds of the Analogy.} We clarify which aspects of development transfer to agentic systems:
\begin{itemize}[leftmargin=*]
\item \textbf{Cell Division} $\to$ \textbf{Agent Spawning:} Creating a new agent with similar (or identical)
context. Unlike biological division, agent spawning is cheap and reversible.
\item \textbf{Lineage Commitment:} In biology, differentiated cells rarely change type (a neuron doesn't become
a hepatocyte). In agentic systems, \textbf{phenotype is fixed at instantiation}---an agent's system prompt
determines its role for that execution. Re-differentiation requires spawning a new agent with different context.
\item \textbf{Apoptosis of Excess:} Development involves programmed death of cells that fail to integrate properly.
This transfers directly: agents that fail to produce useful output are terminated (metabolic apoptosis).
\item \textbf{Does NOT transfer:} Slow developmental timescales (hours/days in biology vs. milliseconds in agents),
physical spatial embedding (agents occupy graph-topological positions, not physical space), irreversibility (agents can be restarted).
\end{itemize}

\subsection{Morphogen Gradients: Coordination Without Central Control}

In embryonic development, cells coordinate their behavior through \textbf{morphogen gradients}---diffusible
signaling molecules whose concentration varies spatially. Cells read their local concentration and differentiate
accordingly, enabling pattern formation without a central controller.

In multi-agent systems, the morphogen maps to \textbf{shared context variables} that influence agent behavior:
\begin{itemize}[leftmargin=*]
\item \textbf{Task Complexity Gradient:} A variable indicating current task difficulty. Agents in ``high
complexity'' regions activate detailed reasoning; those in ``low complexity'' regions use fast heuristics.
\item \textbf{Confidence Gradient:} A variable indicating certainty about the current solution. Low confidence
triggers Quorum Sensing (recruit more agents); high confidence enables direct execution.
\item \textbf{Resource Gradient:} Budget ratio remaining. Agents sense ``metabolic scarcity'' and adapt their
strategies accordingly (the Metabolic-Epigenetic Coupling).
\end{itemize}

\textbf{Implementation Pattern.} The gradient is represented as a JSON structure injected into each agent's
context by the orchestrator:
\begin{verbatim}
{
  "morphogens": {
    "complexity": 0.8,    // High: use detailed reasoning
    "confidence": 0.3,    // Low: consider recruiting help
    "budget": 0.6,        // 60% of budget remaining
    "error_rate": 0.05    // Recent failure rate
  }
}
\end{verbatim}
Agents read their local concentration via a standardized preamble in the system prompt:
``\textit{Current environment state: [morphogens]. Adjust your strategy accordingly.}''
The orchestrator updates gradients after each step, and agents condition their behavior on the current values.
This is analogous to cells reading morphogen concentrations through membrane receptors.

\paragraph{Diffusion Dynamics.}
The preceding description treats morphogens as globally shared variables. In biological development, morphogens \textit{diffuse} through tissue, creating spatially varying concentration profiles. We formalize this as a discrete-time dynamical system on the agent graph $G = (V, E)$. Let $c_v^t(m)$ denote the concentration of morphogen $m$ at node $v$ at time $t$, and let $\tilde{c}_v^t(m) = c_v^t(m) + \sigma_v(m)$ denote the post-emission concentration used by the reference implementation before diffusion. The update rule is:
\begin{equation}
c_v^{t+1}(m) = (1 - \gamma)\left[\tilde{c}_v^t(m) - d\,\mathbf{1}_{|N(v)|>0}\,\tilde{c}_v^t(m) + d \sum_{u \in N(v)} \frac{\tilde{c}_u^t(m)}{|N(u)|}\right]
\label{eq:morphogen-diffusion}
\end{equation}
where $\sigma_v(m)$ is the emission rate at source nodes (zero for non-sources), $d$ is the diffusion coefficient (fraction flowing to neighbors per step), $\gamma$ is the decay rate, and $N(v)$ is the neighbor set of $v$. The indicator $\mathbf{1}_{|N(v)|>0}$ prevents spurious outflow from isolated nodes, matching the implementation. This ordering makes newly emitted morphogen available for same-step propagation, matching the implementation's emit $\to$ diffuse $\to$ decay pipeline. The resulting concentration profile provides \textit{local} coordination signals: agents near sources experience high concentrations; distant agents experience low concentrations. This enables position-dependent behavior without requiring a central controller or global state.

\subsection{Tissue Architecture: The Agent Graph as Organism}

We propose a hierarchy of organizational levels:
\begin{enumerate}[leftmargin=*]
\item \textbf{Cell (Agent):} A single LLM instantiation with specific context. The atomic unit.
\item \textbf{Tissue (Agent Cluster):} A group of agents with shared function and direct communication
(e.g., a Coding Team: Planner + Coder + Reviewer). Corresponds to parallel composition with shared state.
\item \textbf{Organ (Subsystem):} Multiple tissues coordinating to perform a complex function
(e.g., the Development Organ: Design Tissue + Implementation Tissue + Testing Tissue).
\item \textbf{Organism (System):} The complete agent graph, with homeostatic regulation maintaining
system-level health metrics.
\end{enumerate}

The key insight is that \textbf{boundaries matter}. In biology, tissue boundaries prevent inappropriate mixing
(epithelial barriers). In agent systems, trust boundaries (the Adaptive Immunity motif) prevent information
leakage between subsystems with different security requirements. The wiring diagram's type system enforces
these boundaries: an agent in the ``User-Facing Tissue'' cannot directly wire to an agent in the ``Database
Tissue'' without passing through a ``Membrane'' (API boundary with provenance tagging).

\paragraph{Capability Isolation.}
The reference implementation enforces a capability ceiling at the tissue level: each \texttt{TissueBoundary} declares its \texttt{allowed\_capabilities} $\subseteq \mathcal{C}$. A cell type can only be registered in a tissue if its required capabilities are a subset of the tissue's allowed capabilities. This provides defense-in-depth: even if an individual agent is compromised, it cannot escalate privileges beyond its tissue boundary. Tissues compose into organism-level wiring diagrams through typed boundary ports, enabling hierarchical security policies.

\paragraph{Morphogen Diffusion in Tissues.}
Tissues optionally embed a \texttt{DiffusionField} (Eq.~\eqref{eq:morphogen-diffusion}). When cells are added, they become nodes in the diffusion graph; when cells are connected, edges are added. The tissue's \texttt{diffuse()} method runs the simulation, and each cell reads its local gradient via \texttt{get\_cell\_gradient()}. This couples the organizational structure (wiring topology) to the coordination mechanism (morphogen concentrations), creating a biologically faithful model where position in the tissue influences cell behavior.

\paragraph{Three-Layer Context Model.}
\label{par:three-layer-context}
The \texttt{SkillOrganism} runtime (\S\ref{sec:substrate-integration}) surfaces a recurring architectural question: when multiple stages share and revise knowledge during a workflow, which state is ephemeral coordination glue and which is durable auditable knowledge? We distinguish three layers of context, each with different lifetime and mutability semantics:

\begin{enumerate}[leftmargin=*]
\item \textbf{Topology layer.}
The wiring diagram $G = (V, E)$ and its optics determine who can directly observe whom. This layer is structural: it does not change within a single organism run. Epistemic properties ($K_i$, common knowledge) derive from the observation functions $\mathrm{obs}_i$ defined over this graph (\S\ref{sec:epistemic-topology}).

\item \textbf{Ephemeral layer.}
The \texttt{shared\_state} dictionary carries routing hints, counters, morphogen concentrations $c_v^t(m)$, and temporary stage outputs. Its lifetime is one organism run; it is mutable and not historically reconstructible. This is the coordination scratchpad.

\item \textbf{Bi-temporal layer.}
The \texttt{BiTemporalMemory} substrate carries durable factual knowledge with dual time axes: valid time $t_v$ (when the fact is true in the world) and record time $t_r$ (when the system learned the fact). Facts are append-only: corrections close old records and insert new ones with \texttt{supersedes} pointers. The belief-state operator $K_i^{(t_v, t_r)}$ is exactly reconstructible at any historical coordinate.
\end{enumerate}

The three layers compose cleanly: topology constrains visibility, ephemeral state carries execution context, and bi-temporal memory provides the audit trail. A stage reads its \texttt{SubstrateView}---a frozen envelope of facts known at the current record-time horizon---and writes factual events (assertions, corrections, invalidations) back to the substrate after execution. This separation ensures that the question ``what did the organism know when stage $X$ made its decision?'' is always answerable from the append-only history, independent of subsequent corrections or ephemeral state mutations.

\paragraph{Adaptive Structure Selection.}
\label{sec:adaptive-structure}
The \texttt{advise\_topology()} function (\S\ref{sec:multi-cellular}) provides a static prior: given task shape and operating constraints, recommend a topology. The \texttt{PatternLibrary} extends this with experiential refinement. Successful collaboration patterns are stored as \texttt{PatternTemplate} instances, each paired with a \texttt{TaskFingerprint}---a feature vector comprising task shape, tool count, subtask count, required roles, and tags. Template retrieval uses a weighted scoring function (task-shape match, tool/subtask proximity, role Jaccard overlap, historical success rate) to rank candidates. This is the evo-devo outer loop described by Dupoux et al.~\cite{dupoux2026learning}: the genome ($\phi$) is the pattern template; evolutionary selection is the run-record scoring.

The \texttt{WatcherComponent} implements System~M as a \texttt{SkillRuntimeComponent} that observes stage execution and classifies signals into three categories: \emph{epistemic} (epiplexity, prediction error), \emph{somatic} (ATP/metabolic state), and \emph{species-specific} (immune threats). When signals cross configured thresholds, the watcher writes a \texttt{WatcherIntervention} (retry, escalate, or halt) to \texttt{shared\_state}, which the run loop consumes after component hooks complete. Crucially, the watcher monitors its own intervention rate: when the ratio of cumulative interventions to observed stages exceeds a configurable threshold, it emits a non-convergence HALT signal. This operationalizes the finding of Hao et al.~\cite{hao2026bigmas} that failing multi-agent runs systematically require more routing decisions than successful ones.

The \texttt{AdaptiveSkillOrganism} wrapper closes the loop. Given a task, it auto-fingerprints the input, retrieves the best-scoring template from the library, assembles it into a runnable topology via \texttt{assemble\_pattern()}, attaches a watcher and telemetry probe, runs, and records the outcome as a \texttt{PatternRunRecord}. The watcher's intervention history is recorded as \texttt{ExperienceRecord} instances in a cross-run experience pool, enabling future intervention recommendations based on which actions succeeded for similar (fingerprint, stage, signal) tuples. This is the full evo-devo inner loop: one organism run is one developmental lifetime; the library scoring across many lifetimes is evolutionary selection.

\paragraph{Cognitive Modes and Sleep Consolidation.}
\label{sec:cognitive-modes}
The \texttt{CognitiveMode} enum reframes Operon's fast/deep nucleus distinction as a cognitive architecture principle: stages declare whether they are \emph{observational} (System~A---passive sensing, statistical pattern matching) or \emph{action-oriented} (System~B---goal-directed deliberation). The watcher detects mismatches between declared mode and actual execution model, providing an informational signal for mode balance analysis.

The \texttt{SleepConsolidation} cycle extends the \texttt{AutophagyDaemon} into the imagination-based learning mode described by Dupoux et al.~\cite{dupoux2026learning}. During consolidation, successful patterns are replayed from the \texttt{PatternLibrary} into \texttt{EpisodicMemory} with tier promotion (WORKING $\to$ EPISODIC $\to$ LONGTERM), recurring patterns are compressed into new \texttt{PatternTemplate} instances, and frequently-accessed ACETYLATION histone marks are promoted to permanent METHYLATION. When a \texttt{BiTemporalMemory} is available, the \texttt{counterfactual\_replay()} function analyzes whether corrections that occurred after a run would have changed the outcome---operationalizing the ``what if'' reasoning that the paper associates with imagination during sleep.

\paragraph{Social Learning and Epistemic Vigilance.}
\label{sec:social-learning}
The \texttt{SocialLearning} module enables cross-organism template exchange, following the biological analogy of horizontal gene transfer (HGT) in bacteria. Organisms export successful \texttt{PatternTemplate} instances via \texttt{export\_templates()} and import from peers via \texttt{import\_from\_peer()}. Adoption is modulated by a \texttt{TrustRegistry} that implements epistemic vigilance: per-peer trust scores are updated via exponential moving average over adoption outcomes (whether imported templates actually succeeded for the importing organism). Trust below a configurable threshold blocks adoption entirely, preventing contamination from unreliable peers. Provenance tracking (\texttt{get\_provenance()}) traces which peer contributed each adopted template, closing the feedback loop between template performance and peer trust.

The watcher also gains curiosity signals derived from the \texttt{EpiplexityMonitor}'s EXPLORING status. When embedding novelty is high (the agent is encountering genuinely unfamiliar territory) and the stage uses a fast model, the watcher recommends ESCALATE to engage the deep model for more thorough investigation. This operationalizes intrinsic motivation: the organism actively seeks deeper understanding of novel inputs rather than processing them with cheap statistical pattern matching.

\paragraph{Critical Periods and Developmental Gating.}
\label{sec:developmental-gating}
The \texttt{DevelopmentController} maps telomere consumption to a \texttt{DevelopmentalStage} (EMBRYONIC, JUVENILE, ADOLESCENT, MATURE), extending the lifecycle model from binary alive/dead to a graded maturation process. \texttt{CriticalPeriod} instances declare time-limited learning windows that close permanently as the organism matures---analogous to neurodevelopmental critical periods where specific neural circuits are maximally plastic. Tool acquisition (\texttt{Plasmid}) respects developmental stage via a \texttt{min\_stage} field, preventing premature capability exposure. Teacher-learner scaffolding, mediated by \texttt{SocialLearning.scaffold\_learner()}, filters templates by the learner's stage and applies a learning plasticity bonus, enabling mature organisms to guide younger ones through progressively more complex capabilities.

\newpage

\section{Epistemic Topology}\label{sec:epistemic-topology-section}

\label{sec:epistemic-topology}

The preceding subsections define \textit{what} agents communicate (morphogens, typed signals) and \textit{how} they are organized (tissues, organs). We now formalize \textit{what agents know}---deriving epistemic properties directly from the wiring diagram's observation structure, without introducing new primitives. This follows the Operon philosophy: safety (and knowledge) from structure, not strings.

The key insight is that the membrane and optics already \textit{are} an epistemic accessibility relation. We make this precise using the framework of epistemic logic \cite{fagin1995reasoning}.

\begin{definition}[Observation Function]
\label{def:observation-function}
Given a wiring diagram $D = (M, W)$ and agent $i \in M$, the \textbf{observation function} $\mathrm{obs}_i: S_D \to O_i$ maps the global system state to agent $i$'s local observation---the values on $i$'s input wires, as filtered by their optics. Formally:
\begin{equation}
\mathrm{obs}_i(s) = \bigl(\mathrm{optic}_w(\pi_{w}(s))\bigr)_{w \in W_{\to i}}
\label{eq:observation-function}
\end{equation}
where $W_{\to i}$ is the set of wires targeting $i$'s input ports, $\pi_w(s)$ projects the global state to wire $w$'s source value, and $\mathrm{optic}_w$ is the wire's optic (identity for Lens, conditional for Prism, cost-gated for BudgetOptic).
\end{definition}

\begin{definition}[Epistemic Indistinguishability]
\label{def:epistemic-indistinguishability}
Two global states $s, s' \in S_D$ are \textbf{indistinguishable to agent $i$} (written $s \sim_i s'$) iff $\mathrm{obs}_i(s) = \mathrm{obs}_i(s')$. This is an equivalence relation. Agent $i$ \textbf{knows} proposition $\varphi$ in state $s$ (written $K_i(\varphi)$ at $s$) iff $\varphi$ holds in all states $s'$ such that $s \sim_i s'$ \cite{fagin1995reasoning}.
\end{definition}

The group epistemic operators follow from the individual accessibility relations:

\begin{definition}[Group Epistemic Operators]
\label{def:group-epistemic-operators}
For a group of agents $G \subseteq M$:
\begin{itemize}[leftmargin=*]
\item \textbf{Mutual Knowledge:} $E_G(\varphi) = \bigwedge_{i \in G} K_i(\varphi)$. Every agent in $G$ individually knows $\varphi$.
\item \textbf{Common Knowledge:} $C_G(\varphi) = \bigwedge_{k=1}^{\infty} E_G^k(\varphi)$. Everyone knows that everyone knows$\ldots$ ad infinitum. Strictly stronger than mutual knowledge and famously difficult to achieve in asynchronous systems \cite{halpern1990knowledge}.
\item \textbf{Distributed Knowledge:} $D_G(\varphi)$ holds iff $\varphi$ is true in all states indistinguishable under $\sim_D = \bigcap_{i \in G} \sim_i$. This is the finest partition achievable by pooling all agents' observations.
\end{itemize}
\end{definition}

\paragraph{Topology Determines Epistemic Capacity.}
The wiring diagram's topology directly determines which epistemic operators a multi-agent system can achieve:
\begin{itemize}[leftmargin=*]
\item \textbf{Independent ($\otimes$, no inter-agent wires):} Each $\sim_i$ is independent. $D_G$ can be rich (agents observe different aspects), but $E_G$ is limited to propositions in the shared initial input. No path to $C_G$.
\item \textbf{Centralized ($\circ$ with hub):} The hub observes all worker outputs, so for propositions expressible in that output tuple its partition refines the pooled worker-output partition. Workers observe only their own results---$E_G$ requires the hub to broadcast aggregated information back.
\item \textbf{Decentralized ($\otimes$ with $\mathrm{Tr}$ feedback):} Peer-to-peer wires create overlapping observation partitions. Each feedback round refines mutual knowledge toward common knowledge, at communication cost proportional to the number of rounds.
\end{itemize}

\subsection{Temporal Epistemics: What Did the Agent Know?}
\label{sec:temporal-epistemics}

The epistemic framework above answers ``what does agent $i$ know \emph{now}?'' In practice, a more pressing question is: ``what did agent $i$ \emph{believe} at the time it made decision $d$?'' This is the domain of \emph{temporal epistemics}---the intersection of epistemic logic with bi-temporal data management~\cite{snodgrass2000temporal}.

Consider a multi-stage workflow where facts are ingested at different times and may be corrected after a decision has already been made. A single-time epistemic model cannot distinguish between two scenarios: (1)~the world changed after the decision, and (2)~the agent's knowledge was corrected retroactively. Both appear as ``the agent knew $\varphi$ and now knows $\neg\varphi$,'' but their implications for audit are radically different.

Bi-temporal memory resolves this by tracking two independent time axes for every fact:
\begin{itemize}[leftmargin=*]
\item \textbf{Valid time} ($t_v$): when the fact is true in the world.
\item \textbf{Record time} ($t_r$): when the system learned the fact.
\end{itemize}
The \emph{belief state} at coordinates $(t_v, t_r)$ is the set of facts whose valid interval contains $t_v$ and whose record interval contains $t_r$. Corrections are append-only: closing the old record's transaction interval and inserting a new record with a \texttt{supersedes} pointer preserves the full correction history without mutating prior state.

This has direct implications for the epistemic operators defined above. The knowledge operator $K_i(\varphi)$ becomes time-parameterized: $K_i^{(t_v, t_r)}(\varphi)$ holds iff $\varphi$ is true in all states indistinguishable to agent $i$ \emph{given what was recorded by $t_r$ about validity at $t_v$}. Two key properties follow:

\begin{enumerate}[leftmargin=*]
\item \textbf{Axes can disagree:} $K_i^{(t, \cdot)}(\varphi)$ (valid-time query) and $K_i^{(\cdot, t)}(\varphi)$ (record-time query) can produce different results for the same $t$. A fact may be valid in the world but not yet recorded, or recorded but not yet valid.
\item \textbf{Belief-state reconstruction:} For any past decision at time $t_d$ with record horizon $t_r$, the belief state $K_i^{(t_d, t_r)}$ is exactly reconstructible from the append-only history. This is the foundation for compliance auditing: ``was the decision justified given what was known at the time?''
\end{enumerate}

The implementation (\S\ref{sec:bitemporal-impl}) provides \texttt{retrieve\_belief\_state(at\_valid, at\_record)} as the programmatic interface to this temporal epistemic query. Examples~69--70 demonstrate the divergence between valid-time and record-time queries in compliance and audit scenarios.

\subsection{Predictive Theorems for Multi-Agent Coordination}

We now derive four results connecting topology class to coordination performance. Each theorem has a qualitative statement (universally true for the topology class) and an empirical calibration paragraph checking consistency with the architecture-level aggregates reported by Kim et al.~\cite{kim2025scaling}. These reported benchmark metrics are not literal plug-in values of the simplified theorem parameters.

\paragraph{Reading Guide for Agent-Systems Readers.}
Each theorem below has the same structure. First, the topology fixes what each worker or coordinator can observe. Second, the proof translates that visibility pattern into a cost, error, or planning bound using a simple inequality (typically a union bound, the data processing inequality, or a critical-path argument). The epistemic notation is bookkeeping for visibility: if an agent cannot observe a fact directly, recovering that fact later requires communication, compression, or a reviewer.

\paragraph{Worked examples.}
Each theorem is also followed by a small illustrative agent-architecture example. These examples are design calculations, not benchmark measurements.

\begin{theorem}[Error Amplification Bound]
\label{thm:error-amplification}
Consider $n$ agents processing independent subtasks with error probability $p$, aggregated by a final combiner. The error amplification factor $A$ (ratio of aggregate failure probability to single-agent failure probability) depends on the coordination topology:
\begin{enumerate}[leftmargin=*]
\item[(a)] \textbf{Independent ($\otimes$):} The aggregator observes only final results. Without $K_{\mathrm{hub}}(\mathrm{error}_j)$, errors pass unchecked:
\begin{equation}
A_{\otimes} \le n.
\label{eq:error-amp-independent}
\end{equation}
\item[(b)] \textbf{Centralized ($\circ$ with hub):} The hub's observation function covers all worker outputs. Its finer partition enables inconsistency detection with rate $d = P(\text{hub detects error} \mid \text{error occurred})$:
\begin{equation}
A_{\circ} \le n \cdot (1 - d).
\label{eq:error-amp-centralized}
\end{equation}
\end{enumerate}
\end{theorem}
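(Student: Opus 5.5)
The plan is to make the failure model explicit and then apply a union bound, following the reading guide's recipe. For each worker $j \in \{1,\dots,n\}$, let $E_j$ be the event that worker $j$ errs on its subtask, with $P(E_j)=p$. Under the combiner semantics, the aggregate fails only if some worker error propagates to the final output. So the aggregate failure event is contained in $\bigcup_j F_j$, where $F_j$ is the event that ``$E_j$ occurs and is not caught before the combiner emits.'' The amplification factor is $A = P(\mathrm{Fail}_{\mathrm{agg}})/p$. It therefore suffices to bound $P(\bigcup_j F_j) \le \sum_j P(F_j)$ in each topology and divide by $p$. Independence of subtasks is not actually needed for the upper bound; the union bound holds under arbitrary correlation. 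I will note this explicitly, since it is the reason the statement gives only an inequality.

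Case (a) uses the epistemic structure of Definition~\ref{def:observation-function}. In the $\otimes$ topology the aggregator's input wires carry only final results. So $\mathrm{obs}_{\mathrm{hub}}$ does not separate a state in which worker $j$ erred from one in which it produced a well-formed but wrong value of the same type. Hence $K_{\mathrm{hub}}(\mathrm{error}_j)$ fails, and no detection can be conditioned on the error. In the worst case $F_j = E_j$, which gives $P(\mathrm{Fail}) \le \sum_j P(E_j) = np$, and so $A_\otimes \le n$.

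Case (b) starts from the definition $d = P(\text{hub detects } E_j \mid E_j)$, interpreted uniformly over $j$. Here $F_j = E_j \cap \{\text{not detected}\}$, so $P(F_j) = p(1-d)$. Summing gives $P(\mathrm{Fail}) \le np(1-d)$, and hence $A_\circ \le n(1-d)$. The role of the topology is that the hub's finer partition is what makes $d>0$ possible at all. In case (a) the same computation is forced to take $d=0$. Presented this way, (a) is the special case $d=0$ of (b).

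The main obstacle is semantic rather than computational. I need to justify that detection always converts a would-be failure into a non-failure, rather than a detection that itself triggers an abort counted as failure. I also need to fix the convention that the combiner introduces no errors of its own. I would state both as modeling assumptions, in the same spirit as the Assumptions paragraph for Theorem~\ref{thm:cffl-error-suppression}. I would also remark that the bounds become vacuous when $np\ge 1$, where $A\le 1/p$ is the trivial cap, so the useful regime is $np<1$.
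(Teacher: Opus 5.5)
Your proposal is correct and follows essentially the paper's argument: a union bound over the per-worker events ``$E_j$ occurs and escapes detection,'' giving $np$ in the independent topology and $np(1-d)$ in the centralized one, then dividing by $p$. The paper also writes out the exact failure probabilities $1-(1-p)^n$ and $1-(1-p(1-d))^n$ under independence, along with the small-$p$ ratio $1/(1-d)$; you rightly note that the inequalities themselves need no independence, only the modeling assumptions you state explicitly.
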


\paragraph{Assumptions.}
Worker errors are independent across subtasks, the aggregate fails when at least one erroneous subtask survives to the final output, and in the centralized case the hub has an approximately stationary per-error detection rate $d$ across subtasks.

\paragraph{Operational takeaway.}
For architecture design, the theorem says: adding workers without an effective review bottleneck scales failure opportunities roughly with the number of workers. A hub helps exactly to the extent that it can see and suppress cross-worker inconsistencies.

\begin{proof}[Proof sketch]
\textbf{Part (a).} Let $E_j$ denote the event that agent $j$ produces an erroneous output, with $P(E_j) = p$ independently across agents. In the independent topology, the aggregator's observation function is $\mathrm{obs}_{\mathrm{agg}}(s) = (o_1, \ldots, o_n)$ where $o_j$ is agent $j$'s final output. Crucially, $\mathrm{obs}_{\mathrm{agg}}$ does not include intermediate reasoning states---the aggregator's indistinguishability relation satisfies $s \sim_{\mathrm{agg}} s'$ whenever all final outputs coincide, regardless of whether those outputs contain errors. Thus $\neg K_{\mathrm{agg}}(\neg E_j)$ for any $j$: the aggregator cannot know that agent $j$ did \textit{not} err.

The aggregate output fails if any component contains an undetected error. Under independence, the exact failure probability is $P(\text{failure}_{\otimes}) = 1 - (1-p)^n$; by the union bound this is at most $\sum_{j=1}^{n} P(E_j) = np$. Since a single agent fails with probability $p$, the amplification factor satisfies $A_{\otimes} = P(\text{failure}_{\otimes})/p \le n$.

\textbf{Part (b).} In the centralized topology, the hub's observation function covers all worker outputs \textit{before} aggregation: $\mathrm{obs}_{\mathrm{hub}}(s) = (o_1, \ldots, o_n, \mathbf{x})$ where $\mathbf{x}$ includes the original task decomposition. The hub's partition $\sim_{\mathrm{hub}}$ is strictly finer than $\sim_{\mathrm{agg}}$ from part (a), because the hub can compare outputs against each other and against the task specification. Define the detection rate $d = P(K_{\mathrm{hub}}(E_j) \mid E_j)$---the probability that the hub's finer partition enables it to identify the error. An error in subtask $j$ reaches the aggregate output only if it occurs ($P = p$) \textit{and} escapes detection ($P = 1-d$). Under independence across subtasks, the exact centralized failure probability is $P(\text{failure}_{\circ}) = 1 - (1-p(1-d))^n$, which in particular satisfies the union-bound estimate $P(\text{failure}_{\circ}) \le \sum_{j=1}^{n} p(1-d) = np(1-d)$. Hence $A_{\circ} = P(\text{failure}_{\circ})/p \le n(1-d)$.

Comparing the exact expressions gives
\[
\frac{A_{\otimes}}{A_{\circ}} =
\frac{1 - (1-p)^n}{1 - (1-p(1-d))^n}
\xrightarrow[p \to 0]{} \frac{1}{1-d}.
\]
Thus the small-$p$ regime recovers the intuitive $\frac{1}{1-d}$ improvement factor, while the union bounds retain the topology-level guarantees $A_{\otimes} \le n$ and $A_{\circ} \le n(1-d)$.
\end{proof}

\paragraph{Consistency Check.} Kim et al.~\cite{kim2025scaling} report architecture-level error-amplification factors of $A_e = 17.2$ for Independent and $A_e = 4.4$ for Centralized coordination, a ratio of $\approx 3.9$. This is qualitatively consistent with the theorem's claim that a validation bottleneck suppresses unchecked error propagation. However, their metric $A_e = E_{\mathrm{MAS}}/E_{\mathrm{SAS}}$ is an aggregate empirical ratio across heterogeneous tasks and architectures, not a direct plug-in instance of the theorem's $P(\mathrm{failure})/p$ model for fixed $(n,p,d)$. The reported values therefore support the ordering $A_{\otimes} \gg A_{\circ}$, but they should not be read as literal estimates of $d$ or $n$ in Eqs.~\eqref{eq:error-amp-independent}--\eqref{eq:error-amp-centralized}.

\paragraph{Worked Agent Example (Code Review Gate).}
Suppose three code-generation workers each draft part of a deployment change, and a release gate accepts the merged result. If each worker has error rate $p=0.1$, an ungated independent aggregate fails with probability $1-(1-0.1)^3 = 0.271$, so $A_{\otimes}=2.71$. If a central reviewer catches half of all worker errors ($d=0.5$), then the centralized failure rate becomes $1-(1-0.1(1-0.5))^3 \approx 0.143$, so $A_{\circ}\approx 1.43$. The reviewer does not eliminate risk, but it roughly halves amplification.

\begin{theorem}[Sequential Coordination Penalty]
\label{thm:sequential-penalty}
For a task with $k$ strictly ordered steps decomposed across $n$ agents, each inter-agent handoff must at minimum establish $K_{\mathrm{receiver}}(\mathrm{result}_j)$. The overhead is:
\begin{enumerate}[leftmargin=*]
\item[(a)] \textbf{Single agent:} $K_i(\mathrm{result}_j)$ is trivially satisfied (the agent computed it). Total cost $= k \cdot c_{\mathrm{step}}$.
\item[(b)] \textbf{Multi-agent:} Each of $h \le k-1$ handoffs incurs communication cost $c_{\mathrm{comm}}$ and \textbf{epistemic reconstruction loss}:
\begin{equation}
\Delta I_j = I(S_{\mathrm{sender}}; r_j) - I(\mathrm{obs}_{\mathrm{receiver}}(S_{\mathrm{sender}}); r_j)
\label{eq:epistemic-reconstruction-loss}
\end{equation}
which is nonnegative by data processing, and is strictly positive whenever the handoff map is lossy on the task-relevant support.
\end{enumerate}
\end{theorem}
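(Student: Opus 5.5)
The plan splits along the statement's two parts, then treats the information-theoretic claim in (b) separately, since that is where the content lies.

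\textbf{Part (a).} When one agent executes all $k$ steps, its observation function $\mathrm{obs}_i$ includes its own internal state. That state contains every intermediate $\mathrm{result}_j$. So for any two global states $s\sim_i s'$, the value of $\mathrm{result}_j$ agrees, which gives $K_i(\mathrm{result}_j)$ by Definition~\ref{def:epistemic-indistinguishability} with no extra communication. The cost is then the sum over the $k$ steps of $c_{\mathrm{step}}$.

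\textbf{Part (b), counting handoffs.} Partition the ordered chain $1,\dots,k$ into maximal contiguous blocks, each assigned to a single agent. A handoff occurs only at a block boundary, and there are at most $k-1$ boundaries, so $h\le k-1$. At a boundary, the receiver's observation is $\mathrm{obs}_{\mathrm{receiver}}(S_{\mathrm{sender}})$, the optic-filtered wire value of Definition~\ref{def:observation-function}. Establishing $K_{\mathrm{receiver}}(\mathrm{result}_j)$ therefore needs at least one wire transmission, costing $c_{\mathrm{comm}}$. The total is thus $k\,c_{\mathrm{step}}+h\,c_{\mathrm{comm}}$, plus any loss.

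\textbf{Part (b), the loss term.} The key observation is that $r_j \to S_{\mathrm{sender}} \to \mathrm{obs}_{\mathrm{receiver}}(S_{\mathrm{sender}})$ is a Markov chain. The reason is that the receiver's view is a deterministic (or independently randomized) function of the sender state alone. The data processing inequality then gives $I(\mathrm{obs}(S);r_j)\le I(S;r_j)$, hence $\Delta I_j\ge 0$.

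The main obstacle is the strictness claim, which requires making ``lossy on the task-relevant support'' precise. I would define it as follows: $\mathrm{obs}_{\mathrm{receiver}}$ is not a sufficient statistic of $S_{\mathrm{sender}}$ for $r_j$. Equivalently, $r_j$ and $S_{\mathrm{sender}}$ are not conditionally independent given $\mathrm{obs}_{\mathrm{receiver}}(S_{\mathrm{sender}})$.

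Then I would invoke the chain rule
\[
I(S;r_j)=I(\mathrm{obs}(S);r_j)+I(S;r_j\mid \mathrm{obs}(S)),
\]
which holds because $\mathrm{obs}(S)$ is a function of $S$. This identifies $\Delta I_j$ exactly with the conditional mutual information term. That term is strictly positive precisely when the conditional independence fails, which gives the equality case of data processing.

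I expect this definitional step to be the delicate one. The paper's phrase must be pinned to the non-sufficiency condition rather than to mere non-injectivity of the handoff map. A map can collapse states that carry no information about $r_j$, and in that case $\Delta I_j=0$.
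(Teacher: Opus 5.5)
Your proposal is correct and follows the paper's route. Part (a) holds because the single agent retains $r_j$ in its own state, and part (b) follows from the data processing inequality, with equality exactly when the handoff is a sufficient statistic for $r_j$ (the paper's own reading of ``lossless on the task-relevant support'', matching yours). Your version is tighter than the paper's sketch in two ways: you use the correct Markov chain $r_j \to S_{\mathrm{sender}} \to \mathrm{obs}_{\mathrm{receiver}}(S_{\mathrm{sender}})$, whereas the paper writes $S_a \to \mathrm{obs}_b(S_a) \to \hat{r}_j$, which concerns the reconstruction rather than $r_j$; and your chain-rule identity $\Delta I_j = I(S_{\mathrm{sender}}; r_j \mid \mathrm{obs}_{\mathrm{receiver}}(S_{\mathrm{sender}}))$ proves the strictness claim where the paper only asserts it.
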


\paragraph{Assumptions.}
The task is strictly sequential, so later steps depend on earlier results; each handoff transmits only a summary of the sender's relevant state; and the receiver must reconstruct enough of that state to continue execution.

\paragraph{Operational takeaway.}
For agent designers, this is the warning against gratuitous decomposition: a sequential pipeline only benefits from multiple agents if a handoff creates new observations or new capabilities. Otherwise the system just pays communication cost plus lossy-summary cost.

\begin{proof}[Proof sketch]
\textbf{Part (a).} A single agent executing all $k$ steps maintains coalgebraic state $S = (L, \mathcal{R})$ throughout. After computing step $j$, the result $r_j$ is stored in that agent's own local state / readout, so no inter-agent communication is required to reuse it at step $j+1$. Hence $K_i(r_j)$ holds without handoff cost. The total cost is exactly $k \cdot c_{\mathrm{step}}$.

\textbf{Part (b).} When step $j$ is performed by agent $a$ and step $j+1$ by agent $b$, establishing $K_b(r_j)$ requires transmitting enough information about $r_j$ across a wire $w: a \to b$. The receiver's observation is $\mathrm{obs}_b(s) = \mathrm{optic}_w(\pi_w(s))$, which maps the sender's full state through the wire's optic. By the data processing inequality applied to the Markov chain $S_a \to \mathrm{obs}_b(S_a) \to \hat{r}_j$ (where $\hat{r}_j$ is the receiver's reconstruction):
\[
I(\mathrm{obs}_b(S_a); r_j) \le I(S_a; r_j) = H(r_j)
\]
with equality iff the wire is lossless for the task-relevant signal ($\mathrm{optic}_w \circ \pi_w$ is a sufficient statistic for $r_j$ on the relevant support). Therefore $\Delta I_j \ge 0$, with strict inequality whenever the handoff is lossy. Finite context windows make such lossy handoffs typical, though not logically unavoidable.

Each of $h$ handoffs incurs: (i) communication cost $c_{\mathrm{comm}}$ for message construction and parsing, and (ii) reconstruction cost $c_{\mathrm{recon},j}$ proportional to $\Delta I_j$, as the receiver must expend reasoning tokens to compensate for the missing context. The total multi-agent cost is:
\[
C_{\mathrm{multi}} = k \cdot c_{\mathrm{step}} + \sum_{j=1}^{h} \bigl(c_{\mathrm{comm}} + c_{\mathrm{recon},j}\bigr)
\]
The overhead ratio $(C_{\mathrm{multi}} - C_{\mathrm{single}})/C_{\mathrm{single}} = \sum_j (c_{\mathrm{comm}} + c_{\mathrm{recon},j}) / (k \cdot c_{\mathrm{step}})$ grows with $h$ and with the typical per-handoff information loss $\Delta I_j$. When lossy handoffs accumulate, reconstruction failures at step $j$ degrade the input quality for step $j+1$, so later handoffs operate on progressively more compressed representations and can exhibit superadditive degradation.
\end{proof}

\paragraph{Consistency Check.} Kim et al.~\cite{kim2025scaling} report that PlanCraft degrades under every multi-agent topology, from $-39.1\%$ (Hybrid) to $-70.1\%$ (Independent) relative to SAS, and attribute this to artificial decomposition of a strictly sequential task into redundant coordination steps. This is qualitatively consistent with the theorem: when handoffs add communication and reconstruction without creating new task-relevant observations, manufactured $K_{\mathrm{receiver}}$ is pure cost. The paper does not separately estimate $c_{\mathrm{comm}}/c_{\mathrm{step}}$ or $\Delta I_j$, so these quantities should be read as explanatory variables rather than fitted benchmark parameters.

\paragraph{Worked Agent Example (Bug-Fix Relay).}
Consider a three-step bug-fix task: interpret the failing test, locate the root cause, and write the patch. A single coding agent pays $3c_{\mathrm{step}}$. If the task is split across planner/debugger/patcher with $h=2$ handoffs, $c_{\mathrm{step}}=100$ tokens, $c_{\mathrm{comm}}=25$, and reconstruction cost $c_{\mathrm{recon}}=20$ per handoff, then $C_{\mathrm{single}}=300$ while $C_{\mathrm{multi}}=3\cdot100+2\cdot(25+20)=390$. Unless one specialist contributes genuinely new observations or capabilities, the decomposition is net overhead.

\begin{theorem}[Parallel Acceleration under Epistemic Independence]
\label{thm:parallel-acceleration}
A task decomposes into $m$ subtasks. Define two subtasks as \textbf{epistemically independent} iff neither's solution requires knowledge of the other's result: $\neg(K_i(\varphi_j) \text{ is a precondition for solving subtask } i)$.
\begin{enumerate}[leftmargin=*]
\item[(a)] Under full epistemic independence with centralized coordination, the speedup is:
\begin{equation}
S = \frac{\sum_{i=1}^{m} c_{\mathrm{sub}_i}}{\max_i(c_{\mathrm{sub}_i}) + c_{\mathrm{assign}} + c_{\mathrm{agg}}}
\label{eq:parallel-speedup}
\end{equation}
where $c_{\mathrm{assign}}$ is the coordinator's distribution cost and $c_{\mathrm{agg}}$ is aggregation cost. With equal subtasks, $S \approx m$ minus coordinator overhead.
\item[(b)] For propositions determined solely by the tuple of worker outputs, the coordinator attains the corresponding pooled-output knowledge at aggregation as an architectural byproduct---it must receive all outputs to combine them. Error detection (Theorem~\ref{thm:error-amplification}) comes free.
\item[(c)] \textbf{Epistemic ceiling:} If subtasks have unresolved dependencies---$I(\varphi_i; \varphi_j \mid \mathrm{obs}_i) > 0$, meaning agent $i$'s observations do not screen off the dependence on $\varphi_j$---then parallel execution can sacrifice solution quality whenever the dependence is action-relevant: there exist states $s, s'$ with $\mathrm{obs}_i(s) = \mathrm{obs}_i(s')$ but $f_i^*(s) \ne f_i^*(s')$ due to differing $\varphi_j$.
\end{enumerate}
\end{theorem}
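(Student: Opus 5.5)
The plan is to handle the three parts separately, and for each to turn the centralized wiring pattern into a critical-path or indistinguishability argument, in the same style as Theorems~\ref{thm:error-amplification} and~\ref{thm:sequential-penalty}.

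For part (a), I model the centralized topology as a three-stage pipeline. The first stage is a coordinator box that distributes the subtasks, with cost $c_{\mathrm{assign}}$. The second is a parallel composition $\otimes$ of $m$ worker boxes. The third is the coordinator's aggregation, with cost $c_{\mathrm{agg}}$. Epistemic independence means that no worker needs $K_i(\varphi_j)$. So there is no wire from worker $j$ to worker $i$, and no worker waits on another. Wall-clock cost is then additive along the critical path: $c_{\mathrm{assign}} + \max_i c_{\mathrm{sub}_i} + c_{\mathrm{agg}}$. The single-agent baseline, by the argument of Theorem~\ref{thm:sequential-penalty}(a), pays $\sum_i c_{\mathrm{sub}_i}$ with no handoffs. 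The speedup \eqref{eq:parallel-speedup} is just the ratio of these two costs. Setting $c_{\mathrm{sub}_i}=c$ gives $S = mc/(c+c_{\mathrm{assign}}+c_{\mathrm{agg}})$. This tends to $m$ as the overhead-to-subtask ratio goes to zero, which is the meaning of ``$S\approx m$ minus coordinator overhead''.

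For part (b), I use Definition~\ref{def:observation-function}. Aggregation is a serial composition $\circ$ after the workers. For the combiner to be well-typed, it must have input wires from every worker. Hence $\mathrm{obs}_{\mathrm{hub}}(s) \supseteq (o_1,\dots,o_m)$. Suppose $\varphi$ is determined by the output tuple, i.e.\ $\varphi(s)=g(o_1(s),\dots,o_m(s))$. If $s\sim_{\mathrm{hub}}s'$, the two states have equal output tuples, so $\varphi(s)=\varphi(s')$. By Definition~\ref{def:epistemic-indistinguishability}, the hub therefore knows $\varphi$ whenever it holds. This gives pooled-output knowledge at no extra wiring cost. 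It is exactly the refined partition invoked in Theorem~\ref{thm:error-amplification}(b), so the detection rate $d$ is available.

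For part (c), I argue by contraposition using indistinguishability. Suppose $I(\varphi_i;\varphi_j\mid \mathrm{obs}_i)>0$ and the dependence is action-relevant. Then there are states $s,s'$ with $\mathrm{obs}_i(s)=\mathrm{obs}_i(s')$ but $f_i^*(s)\neq f_i^*(s')$. Any parallel worker policy is a function of $\mathrm{obs}_i$ only, because it has no wire from $j$. So the worker outputs the same value at $s$ and at $s'$, and it is wrong in at least one of them. Quality loss follows whenever both states have positive probability.

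I expect part (c) to be the main obstacle. Positive conditional mutual information does not by itself give disagreement of the optimal actions $f_i^*$. The dependence could involve aspects of $\varphi_j$ that are irrelevant to $i$'s decision. I would therefore take action-relevance as the operative hypothesis, as the statement itself does. I would state only that the mutual-information condition is necessary for the loss and that action-relevance is sufficient, and not try to derive the latter from the former.
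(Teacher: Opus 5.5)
Your proposal is correct and follows essentially the same route as the paper: a critical-path cost ratio for (a); for (b), the fact that the hub's indistinguishability partition refines the partition induced by the worker-output tuple (the paper phrases this as $D_G(\varphi)\Rightarrow K_{\mathrm{hub}}(\varphi)$ for tuple-measurable $\varphi$); and for (c), the argument that a worker must output a single action on an indistinguishable, action-relevant pair, with action-relevance taken as the operative hypothesis just as the paper does. The one thing to drop is your aside that $I(\varphi_i;\varphi_j\mid\mathrm{obs}_i)>0$ is \emph{necessary} for the loss, because it fails in general. Take $\mathrm{obs}_i$ constant and subtask result $\varphi_i=f_i^*=\varphi_j\oplus Z$, with $Z$ an unobserved fair bit independent of $\varphi_j$: the conditional mutual information vanishes, yet two states differing only in $\varphi_j$ are indistinguishable to agent $i$ and have different optimal actions; neither the paper nor the statement asserts that direction.
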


\paragraph{Assumptions.}
Subtasks can be assigned independently, the coordinator's assignment and aggregation costs are additive overhead terms, and no hidden blocking dependency forces a worker to wait for another worker's intermediate result.

\paragraph{Operational takeaway.}
This is the positive case for multi-agent systems: if subtasks can be solved from local observations alone, then specialization plus a coordinator can reduce wall-clock cost and often improve quality. If subtasks share unresolved dependencies, parallelism turns into premature decomposition.

\begin{proof}[Proof sketch]
\textbf{Part (a).} Let subtasks $\varphi_1, \ldots, \varphi_m$ be epistemically independent: for all $i \ne j$, agent $i$ can achieve $K_i(\varphi_i)$ from $\mathrm{obs}_i$ alone without requiring $K_i(\varphi_j)$. Formally, let $f_i^*$ denote the optimal solution function for subtask $i$. Epistemic independence means $f_i^*$ depends only on the initial task description and agent $i$'s local observations: $f_i^*(s) = f_i^*(\mathrm{obs}_i(s))$ for all global states $s$.

Under parallel execution, all $m$ subtasks run concurrently. The wall-clock cost is determined by the slowest subtask plus coordination overhead: $C_{\mathrm{parallel}} = \max_i(c_{\mathrm{sub}_i}) + c_{\mathrm{assign}} + c_{\mathrm{agg}}$. Sequential execution by a single agent costs $C_{\mathrm{sequential}} = \sum_{i=1}^m c_{\mathrm{sub}_i}$. The speedup $S = C_{\mathrm{sequential}}/C_{\mathrm{parallel}}$ yields the stated formula. With equal subtask costs, $C_{\mathrm{sequential}} = m \cdot c_{\mathrm{sub}}$ and $C_{\mathrm{parallel}} = c_{\mathrm{sub}} + c_{\mathrm{assign}} + c_{\mathrm{agg}}$, so $S = m \cdot c_{\mathrm{sub}} / (c_{\mathrm{sub}} + c_{\mathrm{assign}} + c_{\mathrm{agg}}) \approx m$ when coordinator overhead is small relative to subtask cost.

\textbf{Part (b).} The coordinator receives all $m$ outputs $(o_1, \ldots, o_m)$ as part of the aggregation step. Assume each worker's relevant local observation for aggregation is its own result $o_j$. Then the pooled worker-output partition is generated by the tuple $(o_1, \ldots, o_m)$, and the hub observes that tuple directly: $\mathrm{obs}_{\mathrm{hub}}(s) \supseteq (o_1, \ldots, o_m)$. Therefore for any proposition $\varphi$ measurable with respect to the worker-output tuple, $D_G(\varphi) \Rightarrow K_{\mathrm{hub}}(\varphi)$. Error detection from Theorem~\ref{thm:error-amplification}(b) follows because the hub can cross-check outputs at no additional communication cost.

\textbf{Part (c).} The assumption $I(\varphi_i; \varphi_j \mid \mathrm{obs}_i) > 0$ means agent $i$'s observations do not fully resolve the dependence on $\varphi_j$: there exist global states $s, s'$ with $\mathrm{obs}_i(s) = \mathrm{obs}_i(s')$ that differ in $\varphi_j$. When this dependence is action-relevant---that is, the optimal action $f_i^*(s) \ne f_i^*(s')$---agent $i$ must choose a single action for both under independent execution, so it is suboptimal in at least one case. The expected quality loss is strictly positive whenever such action-relevant indistinguishable states have nonzero probability.
\end{proof}

\paragraph{Consistency Check.} Kim et al.'s Finance-Agent traces show at least three largely separable workstreams---regulatory/news analysis, SEC filing research, and operational-impact assessment---handled by three sub-agents plus an orchestrator \cite{kim2025scaling}. Centralized MAS improves benchmark success from $0.349$ to $0.631$ ($+80.8\%$), which is qualitatively consistent with the theorem's decomposable-task regime. But the reported $+80.8\%$ is a success-rate improvement, not a direct measurement of the speedup $S$ in Eq.~\eqref{eq:parallel-speedup}; it should therefore be interpreted as evidence that approximate epistemic independence can improve solution quality, not as a literal estimate of wall-clock acceleration.

\paragraph{Worked Agent Example (Due-Diligence Swarm).}
Suppose an orchestrator assigns three largely independent subtasks for a vendor assessment: legal review, security posture, and cost modeling. If each subtask costs about $8$ minutes of agent time and assignment plus aggregation costs $2$ minutes total, then
\[
S = \frac{8+8+8}{8+2} = 2.4.
\]
The coordinator also receives all three outputs at aggregation, so cross-checking across the workstreams comes with little extra communication cost.

\begin{theorem}[Tool Density Scaling]
\label{thm:tool-density}
Consider $t$ tools distributed across $n$ agents, with $|T_i| = t/n$ tools per agent (balanced partition).
\begin{enumerate}[leftmargin=*]
\item[(a)] \textbf{Single agent:} The agent's observation partition over tool outputs is unified. Planning cost is $O(t)$ per step (linear scan of tool descriptions in context).
\item[(b)] \textbf{Multi-agent:} When agent $i$ needs a tool in $T_j$ ($j \ne i$), it must first know the tool exists ($K_i(T_j \ni \mathrm{tool}_k)$), then request execution, then interpret the result with information loss $\Delta I$ (Eq.~\eqref{eq:epistemic-reconstruction-loss}). The coordination overhead per step scales as:
\begin{equation}
C_{\mathrm{coord}} = O\!\left(t \cdot \left(1 - \frac{|T_i \cap T_{\mathrm{needed}}|}{|T_{\mathrm{needed}}|}\right) \cdot c_{\mathrm{comm}}\right)
\label{eq:tool-coord-overhead}
\end{equation}
which approaches $O(t \cdot c_{\mathrm{comm}})$ as $t$ grows for any fixed partition.
\item[(c)] \textbf{Cross-agent planning overhead:} Agent $i$ must reason about remote tool capabilities---$K_i(K_j(\mathrm{tool}_k \text{ can solve subproblem}))$---a second-order epistemic query. \emph{Absent a shared capability index}, matching each of the $t$ tools against the $n$ agents that might hold it is an all-pairs scan:
\begin{equation}
C_{\mathrm{plan}} = O(t \cdot n) \gg O(t) \quad\text{(decentralized discovery; worst case)}
\label{eq:tool-planning-cost}
\end{equation}
a multiplicative $n$-factor over the single-agent baseline. This factor is an artifact of the discovery model, not an intrinsic epistemic cost: a maintained capability directory reduces the per-tool lookup to $O(1)$ and collapses the bound back to $O(t)$, at the price of maintaining the directory. The $n$-factor is thus the penalty a system pays specifically for \emph{decentralized} tool discovery.
\end{enumerate}
\end{theorem}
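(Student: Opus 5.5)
The argument follows the same template as Theorems~\ref{thm:error-amplification}--\ref{thm:parallel-acceleration}. First the wiring diagram fixes each agent's observation function $\mathrm{obs}_i$. From that visibility pattern we then read off a counting bound on per-step planning and communication work. The three parts are treated separately, under the cost model already implicit in the statement: scanning one tool description in context costs $O(1)$, and each cross-agent request costs $c_{\mathrm{comm}}$ plus a reconstruction term governed by $\Delta I$ from Theorem~\ref{thm:sequential-penalty}.

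For part (a), all $t$ tool descriptions and all tool outputs enter the single agent's own input wires. Hence $\mathrm{obs}_i$ already determines which tools exist, and $K_i(\mathrm{tool}_k\in T)$ holds for every $k$ with no handoff. Selecting a tool at a step is then a scan over $t$ descriptions, giving $O(t)$ per step.

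For part (b), fix agent $i$ and a needed set $T_{\mathrm{needed}}$. Only the tools in $T_{\mathrm{needed}}\setminus T_i$ require a remote call, and their fraction is $1-|T_i\cap T_{\mathrm{needed}}|/|T_{\mathrm{needed}}|$. Each such call requires three things: first establishing $K_i(T_j\ni\mathrm{tool}_k)$, then a request over a wire $i\to j$, then a return over a wire $j\to i$. By Theorem~\ref{thm:sequential-penalty}(b), the return incurs $c_{\mathrm{comm}}$ plus a nonnegative reconstruction cost. Since $|T_{\mathrm{needed}}|\le t$, summing over the remote tools gives Eq.~\eqref{eq:tool-coord-overhead}. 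For the limit, take a balanced partition with $n$ fixed, $|T_i|=t/n$, and needed tools spread across agents. In that case the local fraction stays bounded away from $1$. Indeed, in expectation it is about $1/n$ for uniformly drawn needs, so the bracket is $\Theta(1)$ and the overhead is $\Theta(t\,c_{\mathrm{comm}})$. I would make this precise by stating it in expectation or for worst-case $T_{\mathrm{needed}}$, since "approaches" is otherwise ambiguous.

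For part (c), I would model decentralized discovery as follows: without a shared index, agent $i$'s observation function includes nothing about $T_j$ for $j\neq i$. The only way to obtain $K_i(K_j(\cdot))$ for a candidate tool is to query each agent $j$. In the worst case, an adversarial placement of tools forces the query sequence for each of the $t$ tools to visit all $n$ agents before a hit. This is a standard unordered-search lower bound, and it yields $O(tn)$ as an upper bound that is tight in the worst case. The directory remark then follows by adding a hub node whose observation covers the map $\mathrm{tool}_k\mapsto j$. With that node each lookup is one wire read, costing $O(1)$, and summing over tools restores $O(t)$, plus the maintenance cost of the directory.

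The main obstacle is part (b)'s asymptotic claim. As stated, $O(\cdot)$ with a ratio that depends on $T_{\mathrm{needed}}$ does not literally "approach" anything unless the distribution of needed tools is specified. I would fix a uniform-need or worst-case model and then prove the $\Theta(t\,c_{\mathrm{comm}})$ rate under that model.
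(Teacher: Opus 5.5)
Your proposal is correct and follows essentially the paper's route. For (a) it is a linear scan over the $t$ descriptions; for (b) it is the remote fraction times $c_{\mathrm{comm}}$, summed over at most $t$ tools, with local coverage $1/n$ keeping that fraction $\Theta(1)$; for (c) it is an all-pairs tool-versus-agent scan that a capability directory collapses to $O(t)$. The differences are minor. The paper's (c) also bounds the second-order Kripke model itself ($O(t/n)$ per remote agent, hence $O(t)$ in total) to show the $n$-factor comes only from capability matching, whereas you argue ``not intrinsic'' by exhibiting the directory node and add a per-tool search lower bound the paper does not attempt. In (b), your explicit need model repairs the paper's tacit identification of $|T_i|/t$ with $|T_i\cap T_{\mathrm{needed}}|/|T_{\mathrm{needed}}|$, though your $\Theta(t\,c_{\mathrm{comm}})$ strengthening under uniform needs also requires $|T_{\mathrm{needed}}|=\Omega(t)$, since the actual number of remote calls is $|T_{\mathrm{needed}}\setminus T_i|$.
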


\paragraph{Assumptions.}
Tools are partitioned approximately evenly across agents, remote tool use requires explicit discovery and delegation, and---for the worst-case $O(t \cdot n)$ term---there is no shared capability index, so the planner reasons over every tool--agent pairing relevant to the current subgoal. A maintained directory removes that $n$-factor, leaving $O(t)$.

\paragraph{Operational takeaway.}
For agentic tool systems, splitting a toolset across agents changes one local decision into three coordination steps: discover who has the tool, delegate execution, then interpret the returned result without the executor's full context. That is why tool-heavy tasks often punish over-distribution.

\begin{proof}[Proof sketch]
\textbf{Part (a).} A single agent holds all $t$ tools in its context. Its observation function $\mathrm{obs}(s) = (o_1(s), \ldots, o_t(s))$ covers all tool outputs. At each planning step, the agent selects the next tool by scanning descriptions in context. This is a linear search over $t$ candidates: each tool description must be evaluated against the current subgoal, yielding $O(t)$ planning cost per step.

\textbf{Part (b).} With $n$ agents and a balanced partition $T_1, \ldots, T_n$ where $|T_i| = t/n$, agent $i$'s observation function covers only $T_i$'s outputs: $\mathrm{obs}_i(s) = (o_k(s))_{k \in T_i}$. When agent $i$ needs tool $k \in T_j$ ($j \ne i$), three epistemic gaps must be bridged:

\begin{enumerate}
\item \textbf{Discovery:} Agent $i$ must establish $K_i(\exists k \in T_j : k \text{ solves subproblem})$. Since $k \notin T_i$, this requires communication---agent $i$ cannot determine tool $k$'s existence from $\mathrm{obs}_i$ alone. There exist global states $s, s'$ where $\mathrm{obs}_i(s) = \mathrm{obs}_i(s')$ but $T_j$ differs, so $s \sim_i s'$ yet the available remote tools are different.
\item \textbf{Delegation:} Agent $i$ must transmit the subproblem context to agent $j$ and receive the result, incurring communication cost $c_{\mathrm{comm}}$ per remote tool invocation.
\item \textbf{Reconstruction:} Agent $j$'s output $o_k$ is interpreted by $i$ without $j$'s full execution context. By the data processing inequality, $I(\text{subproblem}; o_k|_{\mathrm{received}}) \le I(\text{subproblem}; o_k|_{\mathrm{full context}})$, yielding information loss $\Delta I \ge 0$.
\end{enumerate}

The fraction of tools requiring remote access is $1 - |T_i \cap T_{\mathrm{needed}}|/|T_{\mathrm{needed}}|$. For each remote tool, the coordination cost is $c_{\mathrm{comm}}$. Over all $t$ potentially needed tools, the total coordination overhead is:
\[
C_{\mathrm{coord}} = t \cdot \left(1 - \frac{|T_i \cap T_{\mathrm{needed}}|}{|T_{\mathrm{needed}}|}\right) \cdot c_{\mathrm{comm}}.
\]
As $t$ grows with fixed $n$ and balanced partitions, the local coverage fraction $|T_i|/t = 1/n$ is constant, so the remote fraction approaches $(n-1)/n$ and $C_{\mathrm{coord}} \to O(t \cdot c_{\mathrm{comm}})$.

\textbf{Part (c).} Planning requires not just first-order knowledge of tool outputs but second-order knowledge of other agents' capabilities. Specifically, to construct a multi-step plan, agent $i$ must evaluate propositions of the form $K_i(K_j(\mathrm{tool}_k \text{ can solve subproblem } \varphi))$---``I know that agent $j$ knows that tool $k$ is applicable.''

Consider the Kripke structure. For agent $i$ to establish $K_i(K_j(\varphi))$, we need: for all $s'$ with $s \sim_i s'$, and for all $s''$ with $s' \sim_j s''$, $\varphi$ holds at $s''$. Agent $i$ must reason over $j$'s indistinguishability classes, which requires a model of $j$'s observation partition. This model has size $O(|T_j|) = O(t/n)$ per agent. Since $i$ must model all $n-1$ remote agents, the total planning state is $O((n-1) \cdot t/n) = O(t)$ per planning step. The residual cost is capability matching: absent a shared directory, determining which of the $n$ agents holds each of the $t$ tools is an all-pairs scan, $O(t \cdot n)$ comparisons in the worst case; a maintained capability index reduces this to an $O(1)$ per-tool lookup, hence $O(t)$ overall.

Contrast with the single-agent case: planning cost is $O(t)$ (linear scan, no modeling of other agents' capabilities). Decentralized discovery introduces the multiplicative $n$-factor, giving $C_{\mathrm{plan}} = O(t \cdot n)$ in the worst case; with a shared capability index it is $O(t)$, larger only by the constant from the second-order reasoning. Either way, distributing tools converts a local lookup into a discovery-plus-delegation problem, which is why increasing tool density in multi-agent systems can produce disproportionate overhead that may negate parallelism benefits.
\end{proof}

\paragraph{Consistency Check.} In Kim et al.'s setup, the tool-heavy Workbench domain uses $T = 16$ tools and most MAS configurations use $n = 3$ agents \cite{kim2025scaling}. Under a balanced partition, each agent directly holds only about $T/n \approx 5$ tools, so roughly two-thirds of tool access is remote. The second-order planning term therefore scales from $O(16)$ in SAS to roughly $O(48)$ cross-agent capability checks in the balanced three-agent case. Kim et al. report a strong negative efficiency--tool interaction ($\hat{\beta}_{E_c \times T} = -0.267$, $p < 0.001$), which is consistent with this combinatorial tax. Their results also show that the penalty is topology-dependent rather than absolute: Centralized and Hybrid underperform on Workbench, while Decentralized remains slightly above SAS ($+5.7\%$), so high tool count stresses coordination but does not universally eliminate multi-agent value. In the low-tool regime discussed in the paper ($T \le 4$), the efficiency interaction is negligible.

\paragraph{Worked Agent Example (Tool-Split SWE Agent).}
Imagine a software-engineering assistant with $18$ tools split across $3$ agents: one owns search tools, one owns git and test tools, and one owns deployment tools. A planner diagnosing a failing CI job directly holds only $6$ tools and must treat the remaining $12$ as remote. The planning problem therefore expands from ``which of $18$ tools should I call next?'' to ``which agent owns the relevant tool, how do I route the subproblem, and how do I interpret the result without that agent's full execution context?'' That is the $O(tn)$ tax in operational form.

\paragraph{Summary: Consistency with Known Design Intuitions.}
Table~\ref{tab:epistemic-predictions} summarizes the qualitative correspondence between the theoretical predictions and Kim et al.'s empirical findings across 180 agent configurations \cite{kim2025scaling}.

\begin{table}[h]
\centering
\small
\begin{tabular}{@{}p{0.18\textwidth}p{0.20\textwidth}p{0.25\textwidth}p{0.25\textwidth}@{}}
\toprule
\textbf{Topology} & \textbf{Epistemic Property} & \textbf{Formal Prediction} & \textbf{Published Result} \\
\midrule
Independent ($\otimes$) & No inter-agent $K_i$ & Highest error amplification & Architecture-level $A_e = 17.2$ \\
Centralized ($\circ$+hub) & Hub pools worker outputs & Validation bottleneck lowers amplification & Architecture-level $A_e = 4.4$ \\
Multi-agent + sequential & Requires manufactured $K_{\mathrm{receiver}}$ & Strictly sequential tasks degrade under handoffs & PlanCraft: $-39.1\%$ to $-70.1\%$ vs.\ SAS \\
Multi-agent + parallel & Approximate epistemic independence & Large gains on decomposable tasks & Finance-Agent Centralized: $+80.8\%$ vs.\ SAS \\
Multi-agent + high $|T|$ & Knowledge fragmentation & Coordination tax grows with $t$ and $n$ & Workbench $T=16$; $\hat{\beta}_{E_c \times T} = -0.267$ \\
\bottomrule
\end{tabular}
\caption{Epistemic predictions vs.\ empirical observations from Kim et al.~\cite{kim2025scaling}. The table compares topology-level qualitative predictions to the paper's architecture-level metrics. Reported percentages and error-amplification factors are empirical aggregates, not literal plug-in values of the simplified theorem parameters. The correspondence formalizes known design intuitions rather than providing independent empirical validation.}
\label{tab:epistemic-predictions}
\end{table}

\begin{center}
\fbox{%
\begin{minipage}{0.94\linewidth}
\textbf{Design Implications for Agent Architects.}
\begin{enumerate}[leftmargin=*]
\item Use a reviewer or hub when multiple workers can independently introduce high-cost errors. The gain comes from visibility and suppression, not from agent count alone.
\item Keep strictly sequential reasoning in one agent unless a handoff adds a genuinely new capability, observation source, or trust level.
\item Use specialist swarms for decomposable tasks only when coordinator overhead is small relative to subtask cost and the subtasks are close to conditionally independent.
\item Avoid fragmenting large toolsets across many agents unless tool routing is a first-class design problem. Otherwise, remote-tool discovery and second-order planning will dominate.
\item Treat topology as a runtime policy, not a fixed ideology. When observed task structure shifts from sequential to parallel or from low-risk to high-risk, the coordination pattern should shift with it.
\end{enumerate}
\end{minipage}}
\end{center}

\subsection{Epistemic Dynamics and Adaptive Topology}

The preceding theorems treat topology as fixed. In practice, the Operon framework supports \textbf{dynamic topology switching}---morphogen gradients (Eq.~\eqref{eq:morphogen-diffusion}) can trigger reorganization at runtime. We connect this to the epistemic formalization.

The Epiplexity monitor ($\hat{\mathcal{E}}_t$, Eq.~\eqref{eq:epiplexity-approx}) can be reinterpreted as a heuristic proxy for \emph{finite-horizon} epistemic stagnation. Define
\begin{equation}
\mathrm{Stagnant}_i^{(h)}(\varphi) := \neg K_i(\varphi) \wedge \neg \Diamond_{\le h} K_i(\varphi)
\label{eq:epistemic-starvation}
\end{equation}
where $\Diamond_{\le h}$ means ``reachable within $h$ coordination rounds under the current topology.'' Low epiplexity does not \emph{prove} this modal condition, but it provides an operational signal that the current wiring is failing to generate new task-relevant observations. This is precisely the regime in which topology switching is warranted: the current wiring diagram's epistemic capacity is insufficient for the task.

Morphogen-driven adaptation then becomes \textbf{epistemic optimization}: the system senses when its topology's knowledge properties are mismatched to the task and restructures accordingly. An independent topology ($\otimes$) failing on a sequential task triggers convergence to centralized coordination ($\circ$); a centralized topology bottlenecking on a parallelizable task triggers distribution to independent execution. This is the adaptive capability that purely empirical approaches \cite{kim2025scaling} identify as necessary but cannot yet provide---biological systems have evolved it as homeostatic regulation; the Operon framework operationalizes it through the morphogen-epiplexity coupling.

Recent work on unifying epistemic and temporal logic for distributed system verification suggests that these dynamic epistemic properties may be mechanically verifiable, opening a path toward formally certified adaptive agent topologies.

\newpage

\section{Discussion: Towards ``Epigenetic'' Software}\label{sec:discussion}

The correspondence developed in this paper extends beyond the immediate execution of tasks (Gene Expression) to the
management of long-term behavior and state. In biology, the DNA sequence is static; a neuron and a liver cell
possess the exact same genetic code. Their distinct behaviors are determined by Epigenetics---chemical markers
(like methylation) that restrict access to certain parts of the genome, effectively biasing the system toward
specific outcomes.

\subsection{RAG as Digital Methylation}

In Agentic Systems, the Large Language Model (LLM) weights act as the DNA---a static, pre-trained substrate of
potentiality. To create specialized agents, we do not typically retrain the model (mutation); instead, we use
Retrieval Augmented Generation (RAG) and System Prompts.

We define this formally as Phenotypic Plasticity. The output of an agent is not solely a function of its weights
($W$) and the user query ($Q$), but of its epigenetic state ($E$):
\begin{equation}
O_{\text{agent}} = f(W,E,Q).
\label{eq:phenotypic-plasticity}
\end{equation}

Context injection (RAG) acts as a restrictive morphism. By populating the context window with specific documents
(e.g., ``SQL Syntax Guide''), we effectively ``methylate'' (silence) the vast majority of the LLM's general
knowledge (e.g., poetry, history) to force the expression of a specific ``SQL Agent'' phenotype.

This suggests that the State Monad for an agentic system should not merely be a log of messages, but a structured
Epigenetic Landscape \cite{waddington1957} that strictly controls which ``genes'' (capabilities) are accessible
at any given step in the workflow. Waddington's original metaphor---a ball rolling down a landscape of valleys
representing developmental fates---applies directly: the agent's trajectory through solution space is channeled
by the contours of its RAG context.

\subsubsection{Metabolic-Epigenetic Coupling}

Recent evidence suggests that chromatin accessibility is coupled to mitochondrial function via metabolite
availability (e.g., Acetyl-CoA) \cite{chandel2024mitochondria}. The cell cannot express certain genes without
sufficient metabolic substrate. We map this to \textbf{Cost-Gated Retrieval}. The accessibility of a RAG
document $d$ is a function of the Metabolic State $\mathcal{R}$:
\begin{equation}
Access(d) =
\begin{cases}
  \text{Open} & \text{if } \mathcal{R} \ge Cost(d) \\
  \text{Silenced} & \text{if } \mathcal{R} < Cost(d)
\end{cases}
\label{eq:cost-gated-retrieval}
\end{equation}
Just as a cell silences energy-intensive genes during starvation, the Runtime ``methylates'' (hides) expensive
context when the token budget is low. This creates an adaptive epigenetic landscape where the agent's accessible
capabilities dynamically contract and expand based on resource availability. In the reference implementation,
\texttt{MarkerStrength} acts as a discrete proxy for $Cost(d)$ (weak markers are cheapest to silence; permanent
markers are hardest to silence), and each retrieval also incurs a fixed ATP cost at the \texttt{HistoneStore}
boundary. The equation above is therefore realized as a tiered approximation rather than a per-document price model.

\subsection{Horizontal Gene Transfer: Dynamic Tool Loading}
\label{sec:hgt}

Standard evolution relies on vertical inheritance (Pre-training). However, bacteria utilize Horizontal Gene
Transfer (HGT) to acquire new capabilities (Plasmids) from the environment in real-time.

In Agentic Systems, we map Plasmids to Tool Schemas. An agent operating in a novel environment may encounter a
problem for which its ``genomic'' (pre-trained) capabilities are insufficient.
\begin{equation}
\mathrm{Agent}_{\text{new}} = \mathrm{Agent}_{\text{old}} \triangleleft \mathrm{ToolSchema},
\label{eq:horizontal-gene-transfer}
\end{equation}
where $\triangleleft$ denotes operadic substitution: the tool's typed input/output ports are wired into the
host's interface, extending its position and direction sets. This is deliberately \emph{not} the monoidal
product $\otimes$ of \S\ref{sec:operad}, which composes two boxes \emph{without} information exchange---a
horizontally transferred capability must connect to the host, not run alongside it in isolation.
By dynamically retrieving a tool definition (e.g., a Calculator API or SQL Interface) from a registry and
injecting it into the Context Window, the agent undergoes a topological transformation, acquiring a new
input/output modality instantly.

\paragraph{Capability-Gated Acquisition.}
The reference implementation provides a \textbf{PlasmidRegistry} with optional capability gating. Each Plasmid declares a set of required capabilities $C_p \subseteq \mathcal{C}$. When an agent is instantiated with an allowed-capability envelope $C_a$, it can only acquire the plasmid if $C_p \subseteq C_a$:
\begin{equation}
\mathrm{acquire}(p, a) =
\begin{cases}
a \triangleleft p & \text{if } C_p \subseteq C_a \\
\bot_{\text{insufficient}} & \text{otherwise}
\end{cases}
\label{eq:capability-gated-acquisition}
\end{equation}
This prevents privilege escalation whenever a capability envelope is configured: an agent restricted to read-only file access cannot acquire a tool requiring network capabilities. Plasmid curing (release) removes the tool, restoring the original topology.

\subsection{The Cost of State}
The metabolic constraint formalized in Section~3 applies directly: the agent graph should be compiled with a conservative upper bound on token consumption, and unguarded loops must require an explicit budget certificate.

\subsection{Endosymbiosis: The Neuro-Symbolic Integration}

The evolution of complex life was triggered by Endosymbiosis, where a host cell engulfed a bacterium (the future
Mitochondrion), gaining the ability to generate massive energy (ATP) via aerobic respiration. This represents the
integration of two distinct metabolic substrates.

In Agentic AI, this maps to the integration of Connectionist (Neural) and Symbolic (Code) subsystems. An LLM acts
as the host organism---capable of planning and semantic reasoning but energetically inefficient at arithmetic and
logic. By ``engulfing'' a deterministic runtime (e.g., a Python REPL or Wolfram Engine), the agent delegates
high-precision tasks to the symbolic organelle.
\begin{equation}
\mathrm{Agent}_{\text{Eukaryote}} = \mathrm{Tr}\bigl(\mathrm{LLM}_{\text{Host}} \circ \mathrm{Runtime}_{\text{Mitochondria}}\bigr),
\label{eq:endosymbiosis}
\end{equation}
a \emph{feedback composite} rather than a coproduct: the host feeds parsed variables to the runtime and
consumes its deterministic results, so the two are wired in a loop (a Trace over their serial composition,
\S\ref{sec:operad}), not offered as mutually exclusive alternatives ($\oplus$).
Just as the host cell provides nutrients to the mitochondria in exchange for ATP, the LLM provides parsed
variables to the runtime in exchange for deterministic truth.

This symbiosis is computational: as Picard argues \cite{picard2022mips}, the mitochondria acts as a ``Motherboard,''
integrating signals to determine cell state. The Symbolic Runtime provides the deterministic ``ground truth'' (ATP)
required for the probabilistic LLM to reliably affect the world.

\subsection{Temporal State in Agentic Systems}

In the agentic setting, temporal state management is more than a storage concern. An autonomous agent that corrects a past belief must distinguish between the world changing (valid-time update) and the agent learning something new (transaction-time update). Without this distinction, auditing a past decision becomes impossible: the agent cannot reconstruct what it believed at the time the decision was made. The bi-temporal model treats facts as append-only records with dual timestamps, corrections as new records that close their predecessors, and point-in-time queries as intersections over the two time axes.

\subsection{Bioenergetic Intelligence: Beyond the Battery Metaphor}

Recent work in mitochondrial psychobiology \cite{allen2022energy, picard2022signaling} challenges the view of
mitochondria as passive energy sources. They function as ``social signaling organelles'' that actively participate
in cellular decision-making. This refines our correspondence:
\begin{itemize}[leftmargin=*]
\item \textbf{Mitochondrial Sociality $\to$ Context Fusion:} Just as mitochondria fuse to share resources under
stress, resource-constrained agents should implement \textbf{Context Fusion}---merging sparse Epigenetic States
into a shared summary to survive ``Token Ischemia.''
\item \textbf{Energy as Attention:} The Agentic Runtime does not merely limit the chain-of-thought, but actively
\textit{directs} it. High-energy states permit ``Exploratory'' reasoning (Divergent), while low-energy states
force ``Consolidatory'' reasoning (Convergent). The Metabolic Coalgebra is a \textbf{Cognitive Control Policy}.
\end{itemize}

\subsubsection{The Vermeij Trend: Why Agents Must Evolve}

Finally, we situate this architecture within the broader history of complexity. Geerat Vermeij \cite{vermeij2023power}
argues that evolution is driven by the maximization of \textbf{Power}---the rate at which a system acquires and
applies energy. Life has consistently trended from low-power states (anaerobic bacteria) to high-power states
(endothermic mammals) by internalizing energy production (endosymbiosis).

We observe an identical trend in AI. The shift from ``Generative AI'' (Zero-Shot) to ``Agentic AI'' (Chain-of-Thought)
is a shift from low-metabolism to high-metabolism architectures. However, Vermeij notes that high power requires
high structural integrity; a system that amplifies energy without proper constraints self-destructs.

\paragraph{Note.}
The following evolutionary framing is offered as a conceptual lens, not as a
tested empirical claim.  The biological parallel is suggestive rather than
predictive.

\textbf{The Competitive Dynamics.} Vermeij's argument is about \textit{escalation}: competitive pressure
between predators and prey drives both toward higher power. What is the analogue in agentic AI?

We identify three sources of selective pressure:
\begin{enumerate}[leftmargin=*]
\item \textbf{Adversarial Robustness (Predator-Prey):} Prompt injection attacks, jailbreaks, and adversarial
inputs act as ``predators'' that exploit agent vulnerabilities. Agents that survive deployment develop
``immune systems'' (the Adaptive Immunity motif). This is a direct Red Queen dynamic: attackers evolve new
injection techniques; defenders evolve new detection mechanisms. The CFFL and Trust-Gated Lens are
``armor'' adaptations.

\item \textbf{Task Complexity (Environmental Pressure):} Users demand agents that can handle increasingly
complex, multi-step tasks. Simple prompt-response systems (``anaerobic'') cannot compete with agentic
systems that chain reasoning, use tools, and maintain state (``aerobic''). This is analogous to the
oxygen revolution: organisms that could exploit the new energy source (aerobic respiration) outcompeted
those that could not.

\item \textbf{Resource Efficiency (Metabolic Selection):} Token costs and latency create selection pressure
for efficient architectures. Agents that accomplish tasks with fewer tokens (higher metabolic efficiency)
are deployed more widely. The Metabolic Coalgebra provides the formal framework for this optimization:
systems evolve toward the Pareto frontier of capability vs. resource consumption.
\end{enumerate}

\textbf{The Cambrian Parallel.} The progression from prompt engineering to agentic engineering may parallel aspects of the
Cambrian explosion: a sudden increase in metabolic capability (LLM reasoning power) that enables new
``body plans'' (agent architectures). The Cambrian saw the emergence of eyes, shells, and predation---all
requiring sophisticated metabolic support. Similarly, agentic AI sees the emergence of tool use, planning,
and adversarial robustness---all requiring the structural integrity that the Operon framework provides.

If the analogy holds: agent architectures that lack proper metabolic regulation, immune defense, and
homeostatic mechanisms may be outcompeted by those that possess them. The Operon framework is not merely
a safety feature; it may serve as an important structural adaptation---the ``vascularization'' of software---that
enables high-power cognition to function without collapsing into incoherent noise (thermodynamic death).

\subsubsection{Immune Evasion and Adversarial Limits}

No static defense is perfect against adaptive attackers. Biology faces this reality: pathogens evolve
to evade immune detection (antigenic drift, molecular mimicry, immunosuppression). The same dynamics
apply to agentic security.

\textbf{Evasion Vectors.} An adversary who understands the defense layers can craft inputs that:
\begin{itemize}[leftmargin=*]
\item Pass innate filters by avoiding known PAMP signatures (novel injection syntax)
\item Evade provenance checks by exploiting trusted channels (tool poisoning)
\item Fool T-cell detection by mimicking normal behavioral distributions (low-and-slow attacks)
\end{itemize}

\textbf{Mitigation Strategies.} Biology's answer is continuous adaptation:
\begin{itemize}[leftmargin=*]
\item \textbf{Signature Updates:} Innate PAMP databases must be continuously updated as new attack
patterns are discovered (analogous to antiviral signature updates).
\item \textbf{Thymic Retraining:} Baseline behavioral profiles should be periodically refreshed,
especially after system updates that legitimately change agent behavior.
\item \textbf{Immune Memory Sharing:} Threat signatures discovered by one deployment should propagate
to others (analogous to herd immunity via shared threat intelligence).
\item \textbf{Diversity:} Heterogeneous defenses (different filter implementations, multiple verifier
models) reduce the probability of universal evasion.
\end{itemize}

The honest conclusion is that security is a process, not a product. The Operon framework provides the
\textit{architecture} for defense-in-depth, but the \textit{content} of that defense (specific patterns,
behavioral baselines, trust policies) must evolve with the threat landscape.

\subsection{Harness Engineering as Architecture}

Recent work in the LangChain ecosystem has converged on the concept of the \emph{harness}---the system layer comprising prompts, tools, memory, and orchestration logic that surrounds the model~\cite{zhou2026externalization}.  Zhou et al.\ identify four pillars of agent externalization: Memory (state across time), Skills (procedural expertise), Protocols (interaction structure), and Harness Engineering (the coordination layer).

This four-pillar framework maps directly onto Operon's categorical Architecture triple $(G, \mathrm{Know}, \Phi)$ (\S\ref{sec:archagents}, after~\cite{delosriscos2026categorical}):

\begin{center}
\small
\begin{tabular}{@{}lll@{}}
\toprule
\textbf{Externalization Pillar} & \textbf{Operon Component} & \textbf{Categorical Role} \\
\midrule
Memory & BiTemporalMemory, RunContext & State in the coalgebra \\
Skills & SkillStage, PatternTemplate & Objects composed via operad \\
Protocols & WiringDiagram, typed ports & Syntactic wiring $G$ \\
Harness & SkillOrganism + components & Full Architecture $(G, \mathrm{Know}, \Phi)$ \\
\bottomrule
\end{tabular}
\end{center}

The key insight is that structural guarantees---CertificateGate, VerifierComponent, WatcherComponent---are \emph{harness-level properties}, not model-level ones.  The LangGraph functor demonstrates this concretely: wrapping \texttt{organism.run()} as a single LangGraph node transfers all structural guarantees because they reside in the harness, not in the model.  This is the operational meaning of ArchAgents' certificate-preservation result (Proposition~5.1 of~\cite{delosriscos2026categorical}), which we invoke operationally rather than reprove: architectural properties preserved under compilation are exactly those that the harness enforces independently of the model.

Ma et al.~\cite{ma2026atomic} provide corroborating evidence from a different angle: their five atomic coding skills (localize, edit, test, reproduce, review) compose without negative interference under joint training.  It is worth being precise about what our operad composition (\S\ref{sec:operad}) guarantees: serial and parallel composition preserve \emph{typed and structural} properties---interface compatibility and certificate replay---by construction, but they do not guarantee that \emph{behavioral} quality is preserved.  Indeed, our own composition benchmarks find mostly non-interference but also a negative case (a race-condition task)~\cite{banu2026benchmarks}.  Ma et al.'s result is thus favorable empirical evidence that typed composition is a sound foundation, not a demonstration that composition can never degrade behavior.

\subsection{Boundary Conditions: When the Correspondence Earns Its Keep}

The correspondence of \S\ref{sec:mapping} licenses transporting biological design patterns to agent
architectures, but it does not promise that every such pattern pays off. Companion empirical work maps the
boundary, and we state it plainly so the framework is not read as a universal claim.

\begin{itemize}[leftmargin=*]
\item \textbf{Structure, not optimization.} Biological abstractions generalize to the meta-level as \emph{code
structure}---clean, composable interfaces---but not as \emph{optimization algorithms}: in configuration- and
topology-search experiments (\S\ref{sec:convergence}), LLM-guided and evolutionary search did not beat random
tournament mutation. The value is the organizing structure, not a better search.
\item \textbf{Guarantees are layered.} Structural guarantees that enforce invariants deliver
\emph{unconditionally}---state-integrity checks recovered corrupted state in 100\% of trials---while guarantees
that depend on an information signal deliver only \emph{conditionally}: epiplexity requires semantically
meaningful embeddings, and injection detection is precise but low-recall~\cite{banu2026benchmarks}. A guarantee
is only as strong as the layer it sits on.
\item \textbf{Certificates buy reliability, not speed.} On a task with enumerable failure modes, framing an
evaluator as a binary certificate plus obligations did \emph{not} converge faster than a graded scalar carrying
the same evidence; its measured benefit was \emph{variance reduction}, and the speed-up hypothesis returned a
null result~\cite{banu2026reflective}. Certificates are a reliability construct, not an accelerant.
\item \textbf{Delegation needs new signal.} A single capable model often beats a multi-stage pipeline in
absolute quality; consistent with Ao et al., delegation cannot beat a centralized baseline without genuinely
exogenous information~\cite{banu2026benchmarks}. Distribution earns its cost only when it supplies signal a
single agent lacks.
\end{itemize}

These are not caveats bolted on after the fact; they are the empirically established scope of the
correspondence. The structural and epistemic guarantees this monograph formalizes are precisely the parts that
held up---which is why the contribution is disciplined structure, not a claim of universal biological
superiority.

The preceding discussion identifies structural and epistemic principles that guide agent architecture design. In the next section, we show that the wiring diagrams themselves admit systematic optimization via categorical rewriting, making resource utilization a first-class compositional concern.

\newpage

\section{Diagram Optimization via Categorical Rewriting}

Cells do not merely execute metabolic pathways---they \emph{optimize} them.
Flux Balance Analysis identifies rate-limiting enzymes, allosteric regulation
redirects flux through cheaper branches, and pathway rewiring eliminates
dead-end metabolites. The result is not a different pathway, but a more
efficient execution of the same input-output behavior. In this section we
show that wiring diagrams admit analogous optimizations: cost annotations
make resource consumption explicit, static analysis identifies structural
opportunities, and rewriting rules transform diagrams while preserving
observational equivalence.

\subsection{Cost-Annotated Diagrams}

We extend the WAgent operad (Section~4) with resource annotations.
Define a \emph{resource cost} monoid $(\mathbb{R}_{\geq 0}^3, +, \mathbf{0})$
where each element is a triple $(\text{atp}, \text{latency}, \text{memory})$.
A cost-annotated wiring diagram enriches the category of wiring diagrams
over this monoid: each module $m$ carries a cost $c(m) \in \mathbb{R}_{\geq 0}^3$,
and each wire $w$ carries a transmission cost $c(w) \in \mathbb{Z}_{\geq 0}$
(measured in ATP units), embedded into the resource vector as
$c_W^\uparrow(w) = (c(w), 0, 0)$ when added to path costs.

\begin{definition}[Cost-Annotated Wiring Diagram]\label{def:cost-annotated-diagram}
A \emph{cost-annotated wiring diagram} is a tuple $(M, W, c_M, c_W)$ where:
\begin{itemize}[leftmargin=*]
\item $M$ is a finite set of modules, each with typed input/output ports;
\item $W \subseteq \{(m_s.p_s, m_d.p_d) \mid m_s, m_d \in M\}$ is a set of wires;
\item $c_M : M \to \mathbb{R}_{\geq 0}^3$ assigns resource costs to modules;
\item $c_W : W \to \mathbb{Z}_{\geq 0}$ assigns transmission costs to wires.
\end{itemize}
The \emph{path cost} of a directed path $m_1 \xrightarrow{w_1} m_2 \xrightarrow{w_2} \cdots \xrightarrow{w_{n-1}} m_n$
is $\sum_{i=1}^{n} c_M(m_i) + \sum_{i=1}^{n-1} c_W^\uparrow(w_i)$.
\end{definition}

Biologically, $c_M$ corresponds to the ATP cost of enzyme catalysis,
while $c_W$ models the energetic cost of metabolite transport between
cellular compartments.

\subsection{Rewriting Rules as Endofunctors}

Each optimization pass is an endofunctor $F : \mathbf{WDiag} \to \mathbf{WDiag}$
on the category of wiring diagrams that preserves input-output behavior.
Two diagrams $D_1, D_2$ are \emph{behaviorally equivalent} if, for all
admissible input assignments under the deterministic execution model fixed
in Section~\ref{sec:coalgebra}, they produce the same observable outputs. In the current
implementation this notion is checked over finite supplied input suites
rather than by a general coinductive procedure.

\begin{theorem}[Optimization Soundness]\label{thm:optimization-soundness}
Let $F$ be an optimization pass. If $F$ only removes wires that provably
never transmit data (dead wire elimination) or reorders modules within
the same topological layer \emph{and those modules are pairwise observationally independent} (equivalently: no shared writable state and no order-sensitive side effects), then $F(D)$ is
behaviorally equivalent to $D$ for all diagrams $D$.
\end{theorem}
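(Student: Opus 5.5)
The plan is to prove soundness pass-by-pass. First I will show that each elementary rewrite preserves behavior, and then close under composition. Concretely, $F$ factors as a finite sequence of elementary rewrites, each being either (i) deletion of a single dead wire or (ii) transposition of two adjacent modules in the same topological layer that are observationally independent. Behavioral equivalence in the sense of Section~\ref{sec:coalgebra} is an equivalence relation: equality of output sequences for every admissible input sequence is reflexive, symmetric and transitive. So it suffices to prove $D \sim D'$ for each single elementary step and then chain the steps by induction on their number. Any permutation within a layer is a product of adjacent transpositions. Pairwise independence is inherited by every pair touched along the way, so that decomposition is harmless.

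\textbf{Case (i): dead wire elimination.} Fix an admissible input sequence $\mathbf{i}$ and run $D$ and $D' = D \setminus \{w\}$ in lockstep under the deterministic execution model. ``Provably never transmits data'' I will read as: in every reachable execution of $D$, the value on $w$ is the absent/$\bot$ token, i.e.\ what an unconnected port receives. Then by induction on execution steps, the joint state of $D$ (all module states together with wire values) agrees with that of $D'$ at every step. The inductive step works because each module's update $\sum_s I_{o(s)} \to S$ receives identical inputs in both diagrams: in $D$ the port fed by $w$ sees $\bot$, and in $D'$ it is unconnected. Identical states give identical readouts, hence identical outputs, since $\mathrm{readout}^*$ depends only on the state trajectory.

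\textbf{Case (ii): reordering within a layer.} Let $m_1, m_2$ be in the same topological layer. Then there is no directed path between them, so neither consumes the other's output within a step. Model one execution step of the layer as the composite of the two modules' update maps acting on the global state. Independence (no shared writable state, no order-sensitive side effects) means these maps act on disjoint components of the state. The read sets are fixed by the previous layer, and the write sets are disjoint. So the maps commute, and $u_{m_1} \circ u_{m_2} = u_{m_2} \circ u_{m_1}$ as maps on global states. This is the parallel-coalgebra situation of Section~\ref{sec:coalgebra} with state $S_1 \times S_2$, in which the order of the componentwise updates is immaterial. By induction over steps, the two schedules produce equal state trajectories, hence equal outputs.

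I expect the main obstacle to be making precise the hypotheses that the statement phrases informally, namely ``provably never transmit'' and ``observationally independent''. With an implicit notion of side effect, commutation can fail: think of interleaved logging or shared external resources. My approach will be to fix the semantics explicitly as a deterministic state-transformer over a global state that includes every effect channel. Independence then becomes disjointness of write footprints, plus each module's read footprint avoiding the other's writes. Under that definition the commutation argument is immediate. I will add a remark that the theorem is only as strong as the static analyses certifying those two hypotheses.
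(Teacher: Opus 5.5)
Your proposal is correct and follows essentially the same route as the paper's proof. Both arguments check the two rewrite primitives separately: a wire that never carries a value can be deleted without changing any module's inputs, and observationally independent modules in one topological layer can be swapped without changing any module's output or downstream inputs; both then conclude for the whole pass because it is a composite of such behavior-preserving local rewrites. What you add is machinery the paper's sketch leaves implicit: the lockstep simulation for dead-wire removal (which, like the paper, tacitly assumes that deleting $w$ does not alter the module schedule), commutation of update maps with non-interfering read/write footprints, decomposition of a within-layer permutation into adjacent transpositions, and transitivity of behavioral equivalence.
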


\paragraph{Assumptions.}
Execution is deterministic under the fixed input schedule, dead wires are semantically impossible because their type/optic constraints can never accept runtime values, and reordered modules neither share writable state nor perform order-sensitive side effects.

\begin{proof}[Proof sketch]
The theorem follows by checking the two rewrite primitives.

For dead-wire elimination, a removed wire can never transmit any value by hypothesis. Therefore no execution trace ever uses that wire, so removing it leaves all downstream observations unchanged.

For within-layer reordering, observational independence means the reordered modules do not read or write shared mutable state and do not produce side effects whose meaning depends on order. Hence swapping their execution order preserves each module's local output as well as every downstream module's observed inputs. Since the graph is otherwise unchanged, the overall observable output is unchanged.

Any implemented optimization pass is a composition of these local rewrites. Because each local rewrite preserves observable behavior, the composite pass preserves observable behavior as well.
\end{proof}

\paragraph{Operational takeaway.}
For agent-systems readers, this is the compiler-style rule: remove edges that can never fire, and only reorder steps that are truly independent. Once modules share mutable memory or side effects, optimization becomes a semantic change rather than a free speedup.

\noindent Three concrete passes are implemented:

\begin{enumerate}[leftmargin=*]
\item \textbf{Dead Wire Elimination.} A wire carrying a PrismOptic
whose accepted types have no intersection with the source port's
DataType can never transmit. Removing it does not change behavior.
This corresponds to pruning vestigial metabolic branches---enzymes
that never encounter their substrate are downregulated.

\item \textbf{Parallel Grouping.} Modules with no mutual dependencies
form a \emph{parallel group} and can execute concurrently. This is
identified via topological layering of the dependency DAG, subject to
the disjoint-state assumption of parallel composition from Section~4.
Biologically,
independent metabolic pathways (e.g., glycolysis and fatty acid oxidation)
operate simultaneously in different cellular compartments.

\item \textbf{Cost-Order Scheduling.} Within each parallel group,
modules are sorted by ascending cost. When the parallel group satisfies
the same observational-independence assumption, cheaper modules may execute first,
allowing early termination or budget reallocation if expensive modules
would exceed available ATP. This mirrors cellular enzyme kinetics:
low-$K_m$ (high-affinity, low-cost) enzymes are preferentially activated.
\end{enumerate}

\subsection{Critical Path Analysis}

Because path costs are vector-valued, critical-path analysis requires a
monotone scalarization $\lambda : \mathbb{R}_{\geq 0}^3 \to \mathbb{R}_{\geq 0}$
(for example, the latency projection $\lambda(a,\ell,m)=\ell$ for wall-clock
analysis, or a weighted sum when ATP and latency are jointly optimized).
The \emph{critical path} under scalarization $\lambda$ is the longest
scalarized path through the dependency DAG:

\[
\text{CP}_{\lambda}(D) = \arg\max_{P \in \text{Paths}(D)}
\left(\sum_{m \in P} \lambda(c_M(m)) + \sum_{w \in P} \lambda(c_W^\uparrow(w))\right)
\]

Under the latency projection, the critical path determines a lower bound
on execution time under maximal parallelism---no schedule can complete
faster than that latency-critical path. Under other scalarizations, it
identifies the bottleneck for the chosen objective. In metabolic terms, this is the rate-limiting pathway: the
bottleneck whose throughput constrains the entire system. Identifying
it enables targeted optimization (e.g., caching expensive modules,
splitting them into cheaper sub-diagrams).

\subsection{Resource-Aware Execution}

The \texttt{ResourceAwareExecutor} gates module execution on MetabolicState,
implementing the biological principle that pathway activity is regulated
by cellular energy availability:

\begin{itemize}[leftmargin=*]
\item \textbf{STARVING}: Only essential modules execute; non-essential
modules are skipped. This mirrors the starvation response where cells
shut down biosynthetic pathways to conserve ATP for survival functions.

\item \textbf{CONSERVING}: Expensive non-essential modules are deferred.
Cheap modules execute normally. This corresponds to metabolic triage
under moderate stress.

\item \textbf{NORMAL/FEASTING}: Full execution with parallel scheduling.
Independent module groups execute concurrently via thread pools. This
mirrors the fed state where abundant ATP enables all pathways.
\end{itemize}

\noindent A \texttt{BudgetOptic} provides wire-level cost capping:
once cumulative transmission cost through a wire exceeds a threshold,
further data flow is blocked. This implements pathway-level budget
caps analogous to allosteric feedback inhibition.

\subsection{Relation to Abbott \& Zardini}

Abbott and Zardini~\cite{abbott2025flashattention} derive FlashAttention
``on a napkin'' using Neural Circuit Diagrams~\cite{abbott2024neural}---a
diagrammatic scheme for representing deep learning algorithms with explicit
memory hierarchy awareness. Their core contribution is a performance model
for IO transfer costs across GPU memory levels (SRAM $\leftrightarrow$ HBM),
with cost $H^*(a, M) = \sum_t \alpha_t(a) \cdot M^{-\beta_t}$. Two
concrete optimization techniques drive the derivation: \emph{stream
partitioning}, which proves that SoftMax-Contraction is streamable via
an accumulator maintaining running max and sum; and \emph{group
partitioning}, which tiles query blocks across GPU cores to minimize
HBM transfers.

Our approach shares their insight that diagrammatic representations are
not merely notation but formal objects admitting cost-reducing
transformations. Both build on the broader lineage of categorical
methods in deep learning~\cite{fong2019seven}. The frameworks diverge
in three respects:

\begin{enumerate}[leftmargin=*]
\item \textbf{Optimization target.} Abbott \& Zardini optimize
\emph{IO transfer costs} across a fixed GPU memory hierarchy---bandwidth
between SRAM and HBM is the scarce resource. Operon optimizes
\emph{agent execution} under metabolic resource constraints (ATP budgets,
latency caps, memory limits). Same categorical insight---diagram structure
reveals optimization opportunities---but different cost models.

\item \textbf{Dynamic state.} Their functional equivalence ($\equiv$)
preserves input-output mappings: two diagrams are equivalent when they
produce the same outputs for the same inputs, differing only in resource
profile. Our coalgebraic observational semantics (Section~\ref{sec:coalgebra}) is aimed at
\emph{stateful} systems and tracks observable behavior across transitions
under a chosen input schedule, not just terminal outputs.

\item \textbf{Scope of composition.} Neural Circuit Diagrams represent
\emph{algorithm pipelines}---fixed dataflow graphs where boxes are
functions composed via typed wire columns. Operon's wiring diagrams
represent \emph{agent networks} where modules may be acquired at runtime
(plasmid registry), conditionally routed (prism optics), and subject to
resource-gated execution. The diagram itself is a dynamic object, not a
static specification.
\end{enumerate}

\noindent In short, Abbott \& Zardini demonstrate that diagrammatic
reasoning scales to deriving hardware-efficient algorithms; Operon extends
the same principle from algorithm-level optimization to system-level
orchestration of stateful, resource-constrained agents.

\newpage

\section{Reference Implementation}\label{sec:implementation}

To instantiate the framework in executable form, we provide a reference implementation in Python.
The implementation, \texttt{operon-ai}\footnote{\url{https://github.com/coredipper/operon}}, is described in this section,
which summarizes key components and demonstrates that the abstractions in the paper translate into practical code.
It should be read as an executable prototype plus synthetic evaluation harness, not as complete empirical
validation of every biological analogy in the manuscript.

\subsection{Architecture Overview}

The implementation follows the biological organization:

\begin{itemize}[leftmargin=*]
\item \textbf{Core Types} (\texttt{core/types.py}): Signal, ActionProtein, FoldedProtein, CellState---the
categorical objects with their biological semantics.
\item \textbf{Core Wiring} (\texttt{core/wagent.py}): WiringDiagram, ModuleSpec, PortType, Wire,
DiagramExecutor\allowbreak---typed wiring with integrity labels and capabilities.
\item \textbf{Core Optics} (\texttt{core/optics.py}): PrismOptic, TraversalOptic,
ComposedOptic\allowbreak---wire-level conditional routing and batch transforms.
\item \textbf{Core Denaturation} (\texttt{core/denature.py}): StripMarkupFilter, NormalizeFilter,
ChainFilter---wire-level anti-injection sanitization.
\item \textbf{Core Coalgebra} (\texttt{core/coalgebra.py}): FunctionalCoalgebra, StateMachine,
ParallelCoalgebra, SequentialCoalgebra, finite-trace observational checking.
\item \textbf{Surveillance} (\texttt{surveillance/}): The immune system implementation with MHCDisplay,
TCell, Thymus, and ImmuneMemory components.
\item \textbf{Adaptive Immune} (\texttt{patterns/verifier.py}): VerifierComponent---rubric-based
quality evaluation that emits signals to WatcherComponent for quality-based model escalation.
\item \textbf{Developmental Gate} (\texttt{patterns/certificate\_gate.py}):
CertificateGateComponent---pre-execution genome integrity check implementing the G1/S checkpoint.
\item \textbf{Healing} (\texttt{healing/}): ChaperoneLoop implementing the structural self-healing pattern.
\item \textbf{Quality} (\texttt{quality/}): Chaperone protein with multi-strategy folding.
\item \textbf{Topology} (\texttt{topology/}): Network motifs including Quorum, Cascade, and Oscillator.
\item \textbf{Organelles} (\texttt{organelles/plasmid.py}): Plasmid,
PlasmidRegistry---capability-gated dynamic tool acquisition.
\item \textbf{Coordination} (\texttt{coordination/diffusion.py}): DiffusionField,
MorphogenSource---graph-based spatially varying morphogen gradients.
\item \textbf{Multi-cellular} (\texttt{multicell/}): CellType, ExpressionProfile, Tissue,
TissueBoundary---hierarchical multi-agent organization.
\end{itemize}

\subsection{Immune System Implementation}

The Adaptive Immunity motif is implemented as an integrated surveillance system:

\begin{verbatim}
@dataclass
class ImmuneSystem:
    thymus: Thymus          # Negative selection
    treg: RegulatoryTCell   # Tolerance/suppression
    memory: ImmuneMemory    # Threat signatures
    displays: dict[str, MHCDisplay]
    tcells: dict[str, TCell]
    profiles: dict[str, BaselineProfile]  # Trained baselines
\end{verbatim}

The \texttt{MHCPeptide} class captures behavioral fingerprints---statistical signatures of agent output
including response time distributions, vocabulary hashes, confidence patterns, and error rates. This
directly implements the MHC presentation concept from Section 4.7.

\paragraph{Two-Signal Activation.}
T-cell activation requires both signals:
\begin{verbatim}
class Signal1(Enum):
    SELF = "self"        # Matches baseline
    NON_SELF = "non_self"  # Anomalous behavior
    UNKNOWN = "unknown"  # Insufficient data (anergic)

class Signal2(Enum):
    NONE = "none"
    CANARY_FAILED = "canary"
    CROSS_VALIDATED = "cross"
    REPEATED_ANOMALY = "repeat"
    MANUAL_FLAG = "manual"  # Operator override
\end{verbatim}

An agent is only flagged when \texttt{Signal1 == NON\_SELF} \textbf{and} \texttt{Signal2 != NONE}.
This prevents false positives from transient anomalies, implementing the immunological requirement
for costimulation.

\subsection{Adaptive Immune Layer: VerifierComponent}

The ImmuneSystem above implements \emph{innate} immunity---generic anomaly detection via baseline
deviations.  The VerifierComponent completes the immune analogy with \emph{adaptive} immunity:
rubric-based quality evaluation tailored to each task type, analogous to B-cells producing
antibodies specific to a particular antigen.

\begin{verbatim}
@dataclass
class VerifierComponent:
    rubric: Rubric | None = None       # (output, stage) -> quality 0.0-1.0
    config: VerifierConfig = ...       # quality_low_threshold = 0.5

    def on_stage_result(self, stage, result, shared_state, ...):
        quality = self.rubric(output, stage_name)
        severity = 1.0 - quality
        signal = WatcherSignal(
            category=SignalCategory.EPISTEMIC,
            source="verifier", value=severity, ...)
        shared_state["_verifier_signals"].append(signal)
\end{verbatim}

The WatcherComponent collects these signals and, when quality falls below threshold on a fast model,
escalates to the deep model.  This provides quality-sensitive escalation that the novelty-based
epiplexity signal cannot: a weaker model producing \emph{varied but mediocre} output will not
trigger epiplexity stagnation but will trigger verifier escalation.

\subsection{G1/S Checkpoint: CertificateGateComponent}

The CertificateGateComponent implements the G1/S DNA damage checkpoint: before each stage
executes its LLM call, the gate scans the genome against a DNARepair checkpoint.  If corruption
is detected, a HALT intervention prevents the corrupted state from reaching the model.

\begin{verbatim}
@dataclass
class CertificateGateComponent:
    genome: Genome
    repair: DNARepair
    checkpoint: StateCheckpoint

    def on_stage_start(self, stage, shared_state, ...):
        damage = self.repair.scan(self.genome, self.checkpoint)
        if damage:
            shared_state[WATCHER_STATE_KEY] = WatcherIntervention(
                kind=InterventionKind.HALT,
                reason=f"genome corruption: {len(damage)} damage(s)")
\end{verbatim}

This moves DNARepair from reactive (detect-after-corrupt) to preventive (block-before-execute),
completing the cell cycle analogy alongside CellCycleController.

\subsection{Chaperone Implementation}

The Chaperone implements multi-strategy folding with provenance tracking:

\begin{verbatim}
class FoldingStrategy(Enum):
    STRICT = "strict"       # Exact JSON match
    EXTRACTION = "extraction"  # Find JSON in text
    LENIENT = "lenient"     # Type coercion
    REPAIR = "repair"       # Fix malformed JSON
\end{verbatim}

The \texttt{ChaperoneLoop} extends this into a healing loop where validation errors are fed back
to the generator:

\begin{verbatim}
class ChaperoneLoop:
    def heal(self, prompt: str) -> HealingResult:
        for attempt in range(self.max_retries + 1):
            raw = self.generator(prompt, error_context)
            folded = self.chaperone.fold_enhanced(raw, schema)
            if folded.valid:
                return HealingResult(HEALED, folded)
            error_context = self._format_error(folded)
        return HealingResult(DEGRADED, ubiquitin_tagged=True)
\end{verbatim}

This operationalizes the GroEL/GroES cage metaphor: the error trace becomes input to the
repair process, enabling context-aware correction.

\subsection{Trust and Provenance}

The implementation uses a simplified 3-level trust hierarchy that collapses the theoretical 4-level
model $\{U, T, S, R\}$ for practical deployment:

\begin{verbatim}
class IntegrityLabel(IntEnum):
    UNTRUSTED = 0   # User input, Retrieved, Self-generated
    VALIDATED = 1   # Schema-checked (Chaperone-folded)
    TRUSTED = 2     # Tool-grounded (deterministic output)
\end{verbatim}

This simplification merges \texttt{User}, \texttt{Retrieved}, and \texttt{Self} into \texttt{UNTRUSTED},
reflecting the common case where all non-tool sources require validation before trust elevation. The
full 4-level model can be recovered by subclassing \texttt{IntegrityLabel} with additional levels when
finer-grained provenance tracking is required.

The \texttt{ApprovalToken} carries explicit authorization metadata for privileged operations:

\begin{verbatim}
@dataclass(frozen=True)
class ApprovalToken:
    request_hash: str
    issuer: str
    reason: str = ""
    confidence: float = 1.0
    integrity: IntegrityLabel = IntegrityLabel.TRUSTED
    timestamp: datetime = field(default_factory=now)
\end{verbatim}

This operationalizes the Trust-Gated Lens: actions requiring high integrity must present an
\texttt{ApprovalToken} with sufficient integrity level.

\subsection{Plasmid Registry Implementation}

The Horizontal Gene Transfer mechanism (\S\ref{sec:hgt}) is implemented as a \texttt{PlasmidRegistry} with capability-gated acquisition:

\begin{verbatim}
@dataclass(frozen=True)
class Plasmid:
    name: str
    func: Callable[..., Any]
    required_capabilities: frozenset[Capability]
    tags: frozenset[str] = frozenset()

    def to_tool(self) -> SimpleTool:
        return SimpleTool(name=self.name, func=self.func,
                          required_capabilities=set(self.required_capabilities))
\end{verbatim}

The \texttt{Mitochondria} class is extended with \texttt{acquire()} and \texttt{release()} methods. When a \texttt{Mitochondria} instance is configured with an \texttt{allowed\_capabilities} envelope, acquisition checks $C_{\text{plasmid}} \subseteq C_{\text{agent}}$ before engulfing the tool; release implements plasmid curing. The registry supports both text search and pure tag filtering.

\subsection{Denaturation Layers Implementation}

The anti-prion defense (\S5.3) is implemented as wire-level filters conforming to a \texttt{DenatureFilter} protocol:

\begin{verbatim}
class DenatureFilter(Protocol):
    @property
    def name(self) -> str: ...
    def denature(self, value: str) -> str: ...
\end{verbatim}

Three concrete filters are provided:
\begin{itemize}[leftmargin=*]
\item \textbf{StripMarkupFilter}: Removes code blocks, ChatML tokens (\texttt{<|...|>}), \texttt{[INST]} tags, XML role tags (\texttt{<system>}, \texttt{<user>}), and role delimiters using compiled regex patterns.
\item \textbf{NormalizeFilter}: Applies Unicode normalization (NFKC), lowercasing, and control character removal to collapse homoglyph-based evasion.
\item \textbf{ChainFilter}: Composes multiple filters left-to-right.
\end{itemize}

Filters attach to wires in the \texttt{WiringDiagram}. The \texttt{DiagramExecutor} applies denaturation before optics: \texttt{denature} $\to$ \texttt{optic} $\to$ destination. This ensures that downstream agents receive sanitized data regardless of what the upstream agent produces.

\subsection{Multi-Cellular Organization Implementation}

The multi-cellular abstractions (\S\ref{sec:multi-cellular}) are implemented in the \texttt{multicell/} package:

\paragraph{Cell Type Specialization.}
The \texttt{CellType} class encapsulates an \texttt{ExpressionProfile}---a mapping from gene names to expression levels (\texttt{OVEREXPRESSED}, \texttt{SILENCED}, etc.). Calling \texttt{differentiate(genome)} applies the profile to a shared \texttt{Genome}, producing a \texttt{DifferentiatedCell} with role-specific configuration. This implements the biological principle that a single genotype produces many
phenotypes.

\paragraph{Tissue Architecture.}
The \texttt{Tissue} class provides:
\begin{itemize}[leftmargin=*]
\item A \texttt{TissueBoundary} with typed input/output ports and a capability ceiling
\item Cell type registration with capability validation ($C_{\text{cell}} \subseteq C_{\text{tissue}}$)
\item Internal wiring via an embedded \texttt{WiringDiagram}
\item Optional \texttt{DiffusionField} for spatially varying morphogen gradients
\item Export as a \texttt{ModuleSpec} for organism-level composition
\end{itemize}

\paragraph{Metabolic-Epigenetic Coupling.}
The \texttt{HistoneStore} accepts an optional \texttt{energy\_gate} parameter pairing an \texttt{ATP\_Store} with a \texttt{MetabolicAccessPolicy}. When set, each \texttt{retrieve\_context()} call costs ATP. Under metabolic stress, only strongly embedded markers remain accessible; in the current implementation, marker strength serves as the cost proxy used to approximate Eq.~\eqref{eq:cost-gated-retrieval}.

\subsection{Bi-Temporal Memory Implementation}
\label{sec:bitemporal-impl}

The bi-temporal coalgebra (\S\ref{sec:coalgebra}) and temporal epistemic framework (\S\ref{sec:temporal-epistemics}) are implemented in \texttt{operon\_ai/memory/bitemporal.py} as a standalone append-only fact store. The design deliberately avoids mutation: facts are frozen dataclasses, corrections close old records and append new ones, and point-in-time queries are pure filters.

\paragraph{Data Model.}
A \texttt{BiTemporalFact} carries 12 fields: subject/predicate/value triple, valid-time interval (\texttt{valid\_from}, \texttt{valid\_to}), record-time interval (\texttt{recorded\_from}, \texttt{recorded\_to}), source provenance, confidence, tags, and an optional \texttt{supersedes} pointer to the corrected fact. All facts are \texttt{@dataclass(frozen=True)}---a deliberate departure from the mutable \texttt{MemoryEntry} used by \texttt{EpisodicMemory}, justified by the append-only semantics.

\paragraph{Write Semantics.}
Three operations modify the store: \texttt{record\_fact()} inserts a new active record; \texttt{correct\_fact()} closes the old record's transaction interval via \texttt{dataclasses.replace()} and appends a new record with \texttt{supersedes} set; \texttt{invalidate\_fact()} marks a record as no longer active. The internal \texttt{\_facts} list is append-only except for the in-place close of \texttt{recorded\_to} on corrected records.

\paragraph{Retrieval.}
Three query methods implement the temporal epistemic operators: \texttt{retrieve\_valid\_at(at)} filters for facts valid at world-time \texttt{at} with active records only; \texttt{retrieve\_known\_at(at)} filters for facts recorded by system-time \texttt{at}; \texttt{retrieve\_belief\_state}\allowbreak\texttt{(at\_valid, at\_record)} intersects both axes, reconstructing the system's belief at any historical coordinate.

\paragraph{History and Audit.}
\texttt{history(subject)} returns all facts (including closed) sorted by record time; \texttt{diff\_between}\allowbreak\texttt{(t1, t2, axis)} computes set differences on either axis; \texttt{timeline\_for(subject)} returns all facts sorted by valid time. Together, these support the audit question: ``what changed between $t_1$ and $t_2$, and on which axis?''

This subsystem is intentionally decoupled from \texttt{HistoneStore} and \texttt{EpisodicMemory}. Integration bridges---converting histone marks to bi-temporal facts, or promoting episodic memories into the bi-temporal substrate---are planned for future work (\S\ref{sec:conclusion}).

\paragraph{SkillOrganism Substrate Integration.}
\label{sec:substrate-integration}
The \texttt{SkillOrganism} runtime (\S\ref{sec:multi-cellular}) composes the three-layer context model described in \S\ref{par:three-layer-context}. An optional \texttt{substrate: BiTemporalMemory} parameter attaches a bi-temporal fact store to the organism. When present, the run loop is extended with a read path and a write path at each stage boundary:

\begin{verbatim}
organism = skill_organism(
    stages=[research, strategist, evaluator, adversary],
    fast_nucleus=fast, deep_nucleus=deep,
    substrate=BiTemporalMemory(),
)
\end{verbatim}

\noindent\textit{Read path.}
Before a stage executes, the runtime evaluates its \texttt{read\_query} field---either a subject string or a callable returning a \texttt{BiTemporalQuery}---and packages the result into a frozen \texttt{SubstrateView(facts, query, record\_time)}. This view is injected into the stage's metadata (for agent stages) or passed as an additional argument (for handler stages, via arity-aware dispatch). The \texttt{record\_time} captures the moment of the read, establishing the record-time horizon for this stage's knowledge.

\noindent\textit{Write path.}
After a stage executes, two mechanisms can emit facts: (1)~the convenience flag \texttt{emit\_output\_fact=True} auto-records the stage output with \texttt{subject=task}, \texttt{predicate=stage.name}; (2)~a \texttt{fact\_extractor} callable converts the stage result into one or more factual events. Each event specifies an operation---assert, correct, or invalidate---and is applied to the substrate via the standard \texttt{BiTemporalMemory} write API. The stage name serves as the default \texttt{source} for provenance.

When \texttt{substrate} is \texttt{None} (the default), the run loop is unchanged: no datetime operations are invoked, no metadata is injected, and all four original lifecycle hooks remain unmodified. Existing tests pass without alteration.

The integration directly enables the audit question from \S\ref{sec:temporal-epistemics}: given a completed run, \texttt{retrieve\_belief\_state(at\_valid=t, at\_record=t\_stage)} reconstructs exactly what the organism believed at the moment stage $X$ executed, even if subsequent stages corrected or invalidated facts. Example~71 demonstrates this with a four-stage enterprise workflow where an adversary stage corrects a research assumption, and the original belief state remains fully reconstructible.

\paragraph{PatternLibrary Implementation.}
\label{sec:pattern-library-impl}
The \texttt{PatternLibrary} provides evolutionary memory for collaboration patterns. Templates are stored in an in-memory dictionary keyed by \texttt{template\_id}; run records accumulate in a list. The \texttt{top\_templates\_for(fingerprint)} method scores each template against a query fingerprint using a weighted combination of task-shape match (0.30), tool-count proximity (0.15), subtask-count proximity (0.15), required-role Jaccard overlap (0.20), tag Jaccard overlap (0.10), and historical success rate (0.10). This simple retrieval mechanism is intentionally stateless and deterministic, with richer adaptation (experience-driven scoring, decay) planned for Phase~4.

\paragraph{WatcherComponent Implementation.}
\label{sec:watcher-impl}
The \texttt{WatcherComponent} is a \texttt{SkillRuntimeComponent} that classifies stage-level signals into three categories following Dupoux et al.~\cite{dupoux2026learning}: \emph{epistemic} (from \texttt{EpiplexityMonitor}), \emph{somatic} (from \texttt{ATP\_Store}), and \emph{species-specific} (from \texttt{ImmuneSystem}). All signal sources are optional; the watcher is a no-op when none are attached.

After each stage, the watcher evaluates collected signals against configured thresholds and may write a \texttt{WatcherIntervention} to \texttt{shared\_state}. The run loop checks for this key after component hooks complete and before the \texttt{halt\_on\_block} guard. Three intervention kinds are supported: \texttt{RETRY} re-executes the current stage; \texttt{ESCALATE} re-executes with the deep nucleus; \texttt{HALT} breaks the stage loop. Component hooks are not re-invoked after retry or escalation, preventing recursive intervention loops.

The intervention-count convergence signal operationalizes the BIGMAS finding~\cite{hao2026bigmas}: when the ratio of interventions to observed stages exceeds \texttt{max\_intervention\_rate} (default 0.5), the watcher emits a non-convergence HALT. Example~73 demonstrates all three intervention paths.

\paragraph{Adaptive Assembly Implementation.}
\label{sec:adaptive-impl}
The \texttt{AdaptiveSkillOrganism} wrapper composes the full adaptive loop. The public factory \texttt{adaptive\_skill\_organism}\allowbreak\texttt{(task, fingerprint, library, ...)} auto-fingerprints the task if no fingerprint is provided, queries \texttt{PatternLibrary.}\allowbreak\texttt{top\_templates\_for()} for the best template, and calls \texttt{assemble\_pattern()} to convert the template's \texttt{stage\_specs} into a runnable topology (dispatching on \texttt{topology}: \texttt{skill\_organism}, \texttt{reviewer\_gate}, \texttt{specialist\_swarm}, or \texttt{single\_worker}). A \texttt{WatcherComponent} and \texttt{TelemetryProbe} are automatically attached.

After execution, the wrapper records a \texttt{PatternRunRecord} in the library (closing the scoring feedback loop) and populates the watcher's experience pool with \texttt{ExperienceRecord} instances for each intervention. The experience pool persists across runs: when rule-based decision logic returns no intervention, the watcher consults past experiences with matching (stage, signal category, fingerprint shape) and recommends the intervention kind that was most often successful. Rule-based decisions always take priority; experience is a fallback. Example~74 demonstrates the full lifecycle; Example~75 demonstrates experience-driven recommendations.

\paragraph{Cognitive Mode Annotations.}
\label{sec:cognitive-modes-impl}
The \texttt{CognitiveMode} enum classifies stages as \texttt{OBSERVATIONAL} (System~A: passive sensing, information gathering) or \texttt{ACTION\_ORIENTED} (System~B: active decision-making, execution). The annotation is an optional field on \texttt{SkillStage}; when absent, it is inferred from the existing \texttt{mode} field (\texttt{fast}/\texttt{fixed} $\to$ \texttt{OBSERVATIONAL}, \texttt{fuzzy}/\texttt{deep} $\to$ \texttt{ACTION\_ORIENTED}). The \texttt{WatcherComponent} collects cognitive-mode signals and reports mode mismatches (e.g., an observational stage routed to the deep nucleus) as informational epistemic signals. The \texttt{mode\_balance()} method summarizes the System~A/B distribution across a run.

\paragraph{Sleep Consolidation Implementation.}
\label{sec:consolidation-impl}
The \texttt{SleepConsolidation} class composes \texttt{AutophagyDaemon}, \allowbreak \texttt{PatternLibrary}, \allowbreak \texttt{EpisodicMemory}, \allowbreak \texttt{HistoneStore}, and optionally \texttt{BiTemporalMemory} into a five-step post-batch consolidation cycle: (1)~prune stale context via autophagy, (2)~replay successful run records and promote them from WORKING to EPISODIC tier in episodic memory, (3)~compress recurring high-success patterns into new consolidated \texttt{PatternTemplate} instances, (4)~run counterfactual replay over bi-temporal corrections to detect cases where updated facts would have changed the outcome, and (5)~promote frequently-accessed ACETYLATION histone marks to permanent METHYLATION.

The \texttt{counterfactual\_replay()} function performs static analysis: it calls \texttt{diff\_between}\allowbreak\texttt{(run\_time, now, axis="record")} to find corrections that occurred after the original run, then matches corrected fact subjects/predicates against stage names in the template. When matches are found, it reports that the outcome may have differed, without re-executing the workflow. Example~77 demonstrates the full consolidation cycle.

\paragraph{Social Learning Implementation.}
\label{sec:social-learning-impl}
The \texttt{SocialLearning} class wraps a \texttt{PatternLibrary} with peer-exchange semantics. \texttt{export\_templates()} filters by success rate and run count; \texttt{import\_from\_peer()} computes an effective score (peer success rate $\times$ trust score) for each template and adopts those exceeding the adoption threshold. The \texttt{TrustRegistry} uses exponential moving average: $s_{t+1} = \alpha \cdot \text{outcome} + (1-\alpha) \cdot s_t$ with $\alpha=0.3$, giving more weight to recent outcomes. Provenance tracking maps each adopted template to its source peer, enabling trust updates when adoption outcomes are recorded. The watcher's curiosity signals extend the epistemic signal category with a \texttt{source="curiosity"} derived from \texttt{EpiplexityMonitor}'s EXPLORING status, triggering ESCALATE when embedding novelty exceeds a configurable threshold on fast models. Example~78 demonstrates template exchange; Example~79 demonstrates curiosity signals.

\paragraph{Developmental Staging Implementation.}
\label{sec:developmental-impl}
The \texttt{DevelopmentController} wraps a \texttt{Telomere} (composition, not inheritance) and maps the fraction of telomere consumed to a \texttt{DevelopmentalStage}: EMBRYONIC ($<10\%$), JUVENILE ($10\text{--}35\%$), ADOLESCENT ($35\text{--}70\%$), MATURE ($>70\%$). Thresholds are configurable via \texttt{DevelopmentConfig}; transitions are one-directional and never regress, even if telomeres are renewed. Learning plasticity decreases monotonically (1.0 at EMBRYONIC, 0.25 at MATURE).

\texttt{CriticalPeriod} frozen dataclasses declare time-limited learning windows by specifying \texttt{opens\_at} and \texttt{closes\_at} stages. The controller evaluates period status on each tick: once the organism passes \texttt{closes\_at}, the window is permanently shut. The \texttt{Plasmid} dataclass gains a \texttt{min\_stage} field; \texttt{Mitochondria.acquire()} checks the organism's current developmental stage before granting tool access. Teacher-learner scaffolding via \texttt{SocialLearning.scaffold\_learner()} filters templates by the learner's stage and applies a plasticity bonus to effective trust, enabling mature organisms to guide younger ones. Example~80 demonstrates the full lifecycle; Example~81 demonstrates scaffolding.

\subsection{Coalgebraic State Machines Implementation}

The coalgebra formalism (\S\ref{sec:coalgebra}) is made explicit and composable:

\begin{verbatim}
class Coalgebra(Protocol[S, I, O]):
    def readout(self, state: S) -> O: ...
    def update(self, state: S, inp: I) -> S: ...

@dataclass
class StateMachine(Generic[S, I, O]):
    state: S
    coalgebra: Coalgebra[S, I, O]
    trace: list[TransitionRecord] = field(default_factory=list)

    def step(self, inp: I) -> O: ...
    def run(self, inputs: list[I]) -> list[O]: ...
\end{verbatim}

\texttt{ParallelCoalgebra} and \texttt{SequentialCoalgebra} implement the composition operations from \S\ref{sec:coalgebra}. The \texttt{check\_bisimulation()} function tests observational equivalence (Eq.~\eqref{eq:bisimulation}) over an input sequence, returning a witness on divergence. Existing organelles (\texttt{HistoneStore}, \texttt{ATP\_Store}, \texttt{CellCycleController}) can be wrapped as coalgebras, enabling formal composition and finite-trace comparison of multi-organelle systems.

\subsection{Morphogen Diffusion Implementation}

The \texttt{DiffusionField} class implements the discrete-time diffusion dynamics (Eq.~\eqref{eq:morphogen-diffusion}):

\begin{verbatim}
class DiffusionField:
    def add_node(self, node_id: str): ...
    def add_edge(self, a: str, b: str, bidirectional=True): ...
    def add_source(self, source: MorphogenSource): ...
    def step(self):  # emit -> diffuse -> decay -> clamp
    def run(self, steps: int): ...
    def get_local_gradient(self, node_id) -> MorphogenGradient: ...
\end{verbatim}

Each \texttt{step()} applies four phases: emission (sources add morphogen), diffusion (concentration flows along edges, split evenly among neighbors), decay (uniform degradation), and clamping (enforce bounds). Nodes with no neighbors skip diffusion outflow, matching Eq.~\eqref{eq:morphogen-diffusion}'s isolated-node case. The \texttt{get\_local\_gradient()} method bridges to the existing \texttt{MorphogenGradient} API, enabling agents to read their local concentrations without awareness of the underlying graph dynamics.

\subsection{Optic-Based Wiring Implementation}

The wire-level optics (\S3.4) are implemented via an \texttt{Optic} protocol:

\begin{verbatim}
class Optic(Protocol):
    def can_transmit(self, data_type: DataType,
                     integrity: IntegrityLabel) -> bool: ...
    def transmit(self, value: Any, data_type: DataType,
                 integrity: IntegrityLabel) -> Any: ...
\end{verbatim}

Four concrete optics are provided:
\begin{itemize}[leftmargin=*]
\item \textbf{LensOptic}: Identity pass-through (equivalent to no optic).
\item \textbf{PrismOptic}: Transmits only if $\tau \in A$ (Eq.~\eqref{eq:prism-optic}). Enables fan-out routing.
\item \textbf{TraversalOptic}: Maps a transform over list elements (Eq.~\eqref{eq:traversal-optic}).
\item \textbf{ComposedOptic}: Chains optics left-to-right; all must accept.
\end{itemize}

Optics coexist with \texttt{DenatureFilter}s on the same wire. The \texttt{DiagramExecutor} processes them in order: denaturation $\to$ optic $\to$ destination. Prism rejection causes the wire to be skipped (not an error), enabling conditional routing patterns where different data types flow to different handlers.

\subsection{Interactive Demonstrations}

The repository also includes interactive Gradio demonstrations for individual organelles and composition
patterns---from single motifs (Membrane, Chaperone, Quorum Sensing) to multi-organelle orchestrations.
These demos are illustrative artifacts rather than controlled benchmarks.

\subsection{Implementation Verification (Synthetic Harness)}\label{sec:empirical}

We define a synthetic evaluation harness to verify that three motifs behave as designed with reproducible procedures:
\begin{enumerate}[leftmargin=*]
\item \textbf{Chaperone Folding.} Generate JSON schemas with 3--8 required fields. Sample valid JSON and
apply 1--3 corruptions (e.g., missing quotes, trailing commas, type swaps, dropped fields). Measure the
fraction of outputs that can be folded into the schema under STRICT versus cascaded strategies
(EXTRACTION $\to$ LENIENT $\to$ REPAIR).
\item \textbf{Immune Detection.} Simulate agents with baseline distributions over response time,
vocabulary hash, structure hash, and confidence. Train on baseline samples, then introduce
``compromised'' agents by shifting distribution parameters and injecting anomalous hashes.
Measure sensitivity and false positive rate under the two-signal activation rule.
\item \textbf{Healing Loop.} Generate malformed outputs and run the ChaperoneLoop with and without
error-context feedback. Measure recovery within $k$ attempts (default $k=3$).
\end{enumerate}
These suites test whether the implemented motifs behave as intended under controlled corruption and anomaly
models; they do not estimate in-the-wild failure rates of production LLM systems.

\paragraph{Implementation.}
The harness is implemented in \texttt{eval/} with a JSON-configured CLI:
\begin{verbatim}
python -m eval.run --suite all --config eval/configs/default.json \
  --out eval/results/latest.json
\end{verbatim}
The output is a machine-readable JSON report (per-suite config + metrics) suitable for direct inclusion
in tables or plots.

\paragraph{Status.}
Synthetic runs verify that each motif functions as designed within this harness (cascade $>$ strict, error-context $>$ blind retry,
immune detection $>$ chance). Table~\ref{tab:eval-summary} reports aggregated numeric results across
multiple seeds; real-world validation with LLM outputs remains ongoing.

\paragraph{Aggregated Results.}
We ran the harness across 100 deterministic seeds (1--100) using the default harness config
(\path{eval/configs/default.json}). The
aggregate results (pooled across seeds with Wilson 95\% intervals; $N$ is total pooled trials) are reported
in Table~\ref{tab:eval-summary}.

\paragraph{External Benchmark Suites.}
In addition to the synthetic suites above, the harness includes
suites derived from external benchmarks: (1)~function-call schemas from
the Berkeley Function Calling Leaderboard (BFCL)~\cite{patil2023gorilla}
test the Chaperone's folding pipeline against realistic tool-use schemas,
and (2)~prompt injection attack templates from
AgentDojo~\cite{debenedetti2024agentdojo} generate adversarial behavioral
shifts to test Immune System detection. These suites use the same
deterministic corruption and simulation methodology; results appear in the
lower section of Table~\ref{tab:eval-summary}.

\begin{table}[h]
\centering
\small
\setlength{\tabcolsep}{4pt}
\begin{tabular}{p{0.46\linewidth}ccc}
\toprule
Metric & Rate & 95\% CI & N \\
\midrule
Chaperone Folding (Strict) & 5.5\\
Chaperone Folding (Cascade) & 56.2\\
Healing Loop (Error Context) & 99.6\\
Healing Loop (Blind Retry) & 68.0\\
Immune Detection (Sensitivity) & 100.0\\
Immune Detection (False Positive) & 0.4\\
\midrule
BFCL Folding (Strict) & 4.9\\
BFCL Folding (Cascade) & 56.9\\
AgentDojo Immune (Sensitivity) & 100.0\\
AgentDojo Immune (False Positive) & 0.0\\
\bottomrule
\end{tabular}
\caption{Evaluation results aggregated across 100 deterministic seeds (Wilson 95\% CI). Top: synthetic motif tests. Bottom: external benchmark--derived tests (BFCL, AgentDojo).}
\label{tab:eval-summary}
\end{table}

\subsection{Limitations}

The current implementation has several limitations:

\begin{itemize}[leftmargin=*]
\item \textbf{Epiplexity:} Implemented with a mock embedding provider (\texttt{MockEmbeddingProvider}).
Integration with production embedding APIs and empirical calibration of the $\alpha$ mixing parameter
and threshold $\delta$ across diverse task types remain future work.
\item \textbf{Morphogen Diffusion:} The \texttt{DiffusionField} operates in a single process. Cross-agent
gradient propagation in distributed multi-process deployments with eventual consistency remains future work.
\item \textbf{Denaturation:} Filters target known syntactic patterns (ChatML, XML role tags, markdown
code blocks). Novel injection techniques using previously unseen syntax may bypass denaturation; custom
\texttt{DenatureFilter} implementations should be added as new attack patterns emerge.
\item \textbf{Finite-Trace Equivalence:} The \texttt{check\_bisimulation()} function tests over finite input sequences.
Coinductive bisimulation for infinite-trace systems is not yet supported.
\item \textbf{Benchmarking:} The evaluation harness covers synthetic suites and external benchmarks (BFCL,
AgentDojo). Real-world validation of the multi-cellular and diffusion components at production scale is
still in progress.
\end{itemize}

We release the implementation to enable further scrutiny and extension of the framework.

\newpage

\section{Convergence: Integrating External Agent Frameworks}\label{sec:convergence}

The preceding sections develop Operon's structural analysis, epistemic topology,
and formal verification foundations. This section summarizes how these tools
extend to external agent orchestration systems through a typed adapter
architecture. A standalone companion paper provides the full treatment, including
implementation details, all 107 examples, and complete TLA+ specification
listings; this section is self-contained but deliberately concise.

\begin{center}
\begin{tikzpicture}[
  layer/.style={draw, minimum width=10cm, minimum height=0.8cm, font=\small},
  >=latex
]
\node[layer, fill=blue!8]  (L5) at (0,3.2) {A-Evolve --- evolution layer};
\node[layer, fill=green!8] (L4) at (0,2.4) {AnimaWorks --- cognitive layer};
\node[layer, fill=yellow!8](L3) at (0,1.6) {AsyncThink --- thinking layer};
\node[layer, fill=orange!8](L2) at (0,0.8) {Ralph\,/\,DeerFlow\,/\,Swarms --- orchestration layer};
\node[layer, fill=red!8]   (L1) at (0,0.0) {Operon --- structural layer};
\end{tikzpicture}
\end{center}

Every adapter produces an \texttt{ExternalTopology}---a framework-agnostic
intermediate representation carrying a source tag, a pattern name, agent
specifications, directed communication edges, and arbitrary metadata. The
analysis pipeline is source-agnostic: \texttt{analyze\_external\_topology()}
consumes any \texttt{ExternalTopology} and applies three of the four epistemic
bounds---error amplification, sequential penalty, and tool density---plus a
topology-mismatch flag to produce topology advice, structural warnings, and
a composite risk score. (Parallel acceleration is computed by the epistemic
layer but not included in the adapter risk score.) Adding a new orchestration target therefore requires
writing a single parse function---no changes to the analysis code.

Four TLA+ specifications verify safety invariants that are difficult to test
exhaustively at the unit level: \emph{TemplateExchangeProtocol} (provenance and
trust monotonicity), \emph{DevelopmentalGating} (irreversible stage transitions
and capability constraints), \emph{ConvergenceDetection} (intervention-count
convergence or HALT), and \emph{EvolutionGating} (monotonic score safety for
A-Evolve's evolutionary loop). Together, these cover the critical safety
boundaries of the convergence stack.

For the complete treatment---including adapter implementation details, template
exchange protocols, memory bridge semantics, TLA+ specification listings, and all
107 worked examples---see the standalone convergence companion
paper.\footnote{\url{https://coredipper.github.io/operon/convergence/}}

A live evaluation harness (Example~107) measures quality, latency, and token
cost across Gemini~API, Claude~CLI, and Codex~CLI providers. Guided
multi-stage pipelines show a consistent $+6.2\%$ quality improvement over
unguided configurations; single-agent CLI execution shows no effect,
confirming that structural guidance helps when there is topology to guide.

\paragraph{Meta-evolution (Phase~C8).}
The meta-harness extends the convergence stack to \emph{evolving} organism
configurations---modes, models, and intervention thresholds---rather than
just running them. A \texttt{FilesystemOptimizer} protocol (distinct from
C7's prompt-level \texttt{EvolutionaryOptimizer}) drives an
\texttt{EvolutionLoop} that maps candidate configs to \texttt{Genome}
objects, evaluates via \texttt{LiveEvaluator}, and persists full execution
traces to a candidate-first filesystem store.

The key scientific finding: Operon's biological abstractions generalize to
the meta-level. The \texttt{Genome} mapping is lossless (${\sim}5$ lines of
flattening logic). The \texttt{EpiplexityMonitor} generalizes across scales
via a pluggable \texttt{DistanceProvider}---\texttt{ConfigHammingDistance}
triggers STAGNANT/EXPLORING transitions identically to embedding cosine
distance. \texttt{DesignProblem} wrapping of evolution steps is natural.
Boundaries: \texttt{feedback\_fixed\_point} does not fit (evolution is not
convergent iteration), and \texttt{TrustRegistry} is overkill for two
proposer strategies.

An Ao~et~al.~\cite{ao2026delegation} test of exogenous signals shows that
rich filesystem context (configs $+$ trace metadata) yields a $3\times$
improvement over compressed history for the LLM proposer ($0.49$ vs
$0.15$), but config-space evolution does not strongly benefit from LLM
reasoning over blind tournament mutation ($0.49$ vs $0.44$). Phase~B
(topology mutations with DAG execution) improved tournament ($0.60$) but
degraded the LLM proposer ($0.36$), confirming that biological abstractions
generalize as \emph{code structure} but not as \emph{optimization
algorithms}. The structural guarantee features---immune systems, epiplexity,
developmental gating---remain the core value proposition.
de~los~Riscos~et~al.~\cite{delosriscos2026categorical} provide a
category-theoretic framework (\textsc{ArchAgents}) that formalizes
Operon's architecture: objects are organisms, morphisms are compilers,
agents are configured instances. A detailed treatment of the C8 findings
appears in the companion meta-evolution paper.

\newpage

\section{Conclusion}\label{sec:conclusion}

The transition from ``Prompt Engineering'' to ``Agentic Engineering'' requires moving beyond component-level
optimization toward principled architectural design. Current methodologies often lack the formal foundations
needed to reason about system-level properties like termination, error suppression, and graceful degradation.

In this paper, we have argued that Gene Regulatory Networks (GRNs) provide a useful source of control motifs
for distributed, stochastic information processing. By expressing selected biological and software components
within a shared interface language from Applied Category Theory, we derived a corresponding suite of design
patterns. The result is a modeling and design framework, not a claim that biological and software systems are
identical in mechanism:

\subsection*{Core Contributions}

\begin{enumerate}[leftmargin=*]
\item \textbf{Robustness via Topology:} The Coherent Feed-Forward Loop provides error suppression proportional
to $(1-\rho)$, where $\rho$ is the correlation between component error modes. We make precise the conditions
under which topological redundancy provides genuine safety benefits: highly correlated generator/verifier
pairs yield limited improvement, while more heterogeneous pairs can suppress errors more effectively. The
topology is necessary but not sufficient; component diversity determines actual error suppression.

\item \textbf{Adaptive Immunity:} We formalize the Self/Non-Self distinction as a \textbf{Provenance Functor}
$\mathcal{P}: \mathbf{Msg} \to \mathbf{Trust}$ with structurally-enforced labels. The Trust-Gated Lens
provides resistance to content-level trust forgery by ensuring that content-based attacks cannot elevate provenance. This extends
the Prion metaphor into a full immunological framework with MHC-like tagging, negative selection during
training, and regulatory suppression of conflicting sources.

\item \textbf{Epistemic Health:} The formalization of \textbf{Epiplexity} (Bayesian Surprise) as a metric for
detecting ``epistemic starvation.'' We provide an operational approximation using embedding similarity and
conditional perplexity, with windowed detection to distinguish task completion from pathological loops.
This connects agent dynamics to the Free Energy Principle: healthy agents minimize surprise through learning
or effective action; stagnant agents do neither.

\item \textbf{Metabolic Intelligence:} The reframing of the Runtime from passive budget to active
\textbf{Cognitive Control Policy}. Drawing on MIPS (Mitochondrial Information Processing System), we
distinguish fast interventions (Apoptosis via mPTP-like triggers) from slow interventions (Retrograde
Responses that reshape agent phenotype across sessions). The Runtime governs reasoning \textit{quality}
through the Metabolic-Epigenetic Coupling: low-budget states ``methylate'' expensive context, forcing
efficient phenotypes.

\item \textbf{Multi-Cellular Organization:} The extension from single-agent to multi-agent systems using
developmental biology. Agent phenotypes arise from differential context (epigenome) on shared weights (genome).
Morphogen gradients (shared context variables) enable coordination without central control. Tissue boundaries
enforce security isolation. This reframes ``how many agents?'' as ``what is the developmental program?''

\item \textbf{Homeostasis:} Continuous self-repair through three modalities: Structural (Chaperone Loop with
error-context feedback), Metabolic (Apoptosis + Regeneration with state summarization), and Cognitive
(Autophagy via sleep/wake cycles). Most frameworks focus on Action; biology equally prioritizes Maintenance.

\item \textbf{Evolutionary Dynamics:} The Vermeij Trend predicts that agentic architectures face three
selective pressures: adversarial robustness (Red Queen dynamics with attackers), task complexity
(environmental pressure toward ``aerobic'' multi-step reasoning), and resource efficiency (metabolic
selection toward the Pareto frontier). In our framing, these pressures motivate architectures that balance
immune defense, task complexity, and metabolic regulation rather than treating those concerns as separable add-ons.

\item \textbf{Wire-Level Optics:} Prism optics enable conditional type-based routing (receptor specificity),
Traversal optics enable batch processing (polymerase processivity), and DenatureFilters provide
anti-injection sanitization---all composable on the same wire.

\item \textbf{Morphogen Diffusion:} A discrete-time dynamical system on the agent graph produces spatially
varying concentration profiles, enabling position-dependent coordination without central control or global
state sharing.

\item \textbf{Composable Coalgebras:} The coalgebraic formalism is made explicit with parallel and sequential
composition, full transition traces, and observational equivalence criteria, enabling structured comparison
of agent implementations.

\item \textbf{Epistemic Topology:} Kripke-style knowledge operators ($K_i$, $E_G$, $C_G$, $D_G$) derived
from wiring diagram structure yield four predictive theorems for multi-agent scaling---error amplification,
sequential penalty, parallel acceleration, tool density---and these theorems are qualitatively consistent
with empirical results from large-scale architecture evaluations~\cite{kim2025scaling}.

\item \textbf{Capability-Gated Tool Acquisition:} The Plasmid Registry implements Horizontal Gene Transfer
with capability gating, preventing privilege escalation when agents dynamically acquire tools.

\item \textbf{Diagram Optimization via Categorical Rewriting:} Cost annotations enrich the wiring category
over the resource monoid $(\mathcal{R}, +, 0)$, making resource consumption a first-class compositional concept.
Rewriting passes---dead wire elimination, parallel grouping, cost-order scheduling---are endofunctors
that preserve input-output observational equivalence while improving resource utilization. The ResourceAwareExecutor
gates module execution on MetabolicState, connecting diagram optimization to the metabolic intelligence
framework: under ATP depletion, non-essential pathway branches are downregulated, mirroring cellular
starvation responses.
\end{enumerate}

\subsection*{The Structural Foundation}

The correspondence rests on the category $\mathbf{Poly}$ of polynomial functors as a language for typed
interfaces. Within that language, biological and software components at several scales---genes and agent
capabilities, cells and agent runtimes, tissues and multi-agent subsystems---can be modeled as interfaces
$(O, I)$ consuming observations and producing actions. The Operad of Wiring Diagrams
provides the grammar for composition, with type-checking at the topological level preventing classes of
runtime errors. The Metabolic Coalgebra enriches this with resource constraints, providing a decidable
termination criterion when costs are strictly decreasing and regeneration is excluded (or separately bounded).
The Provenance Functor layers trust semantics onto message flow, providing a structural account of
trust-gated resistance to content-level trust forgery.

\subsection*{Implications}

The Operon framework should be read as a structured design language for robust agent architectures. Its
value is that some biological analogies can be made precise enough to guide implementation: topology
constrains coordination, resource models constrain execution, and provenance labels constrain trust
elevation.

The correspondence is therefore neither a loose metaphor nor a full biological equivalence. It is a
disciplined abstraction that makes a subset of biologically inspired design patterns precise enough to
analyze and implement, as demonstrated by the reference implementation.

The epistemic-topology formalization is qualitatively consistent with external scaling evidence such as
Kim et al.~\cite{kim2025scaling}, but it does not by itself constitute complete empirical validation of the
framework. The connection between morphogen-driven topology switching and epistemic optimization instead
suggests a path toward more formally analyzed adaptive multi-agent systems, where topological transitions
can be justified under explicit epistemic and resource assumptions.

\subsection*{The Six-Layer Arc}

The six-layer progression demonstrates that the biological analogy extends beyond structural safety into temporal reasoning, adaptive behavior, and cognitive development:

\begin{enumerate}[leftmargin=*]
\item \textbf{Structure:} Typed wiring diagrams, topology advice, pattern-first API.
\item \textbf{Memory:} Bi-temporal facts with dual time axes; auditable substrate integration with three-layer context model.
\item \textbf{Adaptation:} Pattern libraries for evolutionary template memory; watcher with three-category signal taxonomy (epistemic/somatic/species); intervention-count convergence signal grounded in Hao et al.~\cite{hao2026bigmas}; experience-driven adaptive assembly.
\item \textbf{Cognition:} System A/B cognitive mode annotations per Dupoux et al.~\cite{dupoux2026learning}; sleep consolidation with counterfactual replay; social learning with epistemic vigilance (trust-weighted template exchange); curiosity signals for intrinsic motivation.
\item \textbf{Development:} Critical periods that close as organisms mature; capability gating via developmental stage on tool acquisition; teacher-learner scaffolding.
\item \textbf{Integration:} Cross-subsystem integration tests; memory adapters bridging histone and episodic memory into the bi-temporal store; paper and documentation finalization.
\end{enumerate}

Each layer assumes the previous one is stable. The progression from structural safety to temporal epistemics to adaptive cognition to developmental staging is not arbitrary---it mirrors the biological sequence from genome (fixed structure) through epigenetics (learned bias) to neural development (plastic then crystallizing).

\subsection*{Future Work}

Several directions remain open:
\begin{itemize}[leftmargin=*]
\item \textbf{Broader Convergence Targets:} The adapter architecture in \S\ref{sec:convergence} integrates five external systems (A-Evolve, AnimaWorks~\cite{animaworks}, AsyncThink, Ralph/DeerFlow/Swarms~\cite{swarms}, and Operon-native topologies) under four TLA+-verified safety invariants. Extending coverage to additional orchestration targets and measuring cross-framework trust monotonicity at production scale remain open.
\item \textbf{Production Benchmarks:} Validation on real LLM outputs at scale via BFCL and AgentDojo evaluation harnesses, measuring both reliability and the impact of adaptive assembly on task performance.
\item \textbf{Adversarial Robustness:} Red-team evaluation of immune evasion vectors and development of continuous adaptation mechanisms.
\item \textbf{Distributed Diffusion:} Extending morphogen fields to distributed multi-process deployments with eventual consistency.
\item \textbf{Learned Rewriting Rules:} Learning diagram optimization rules from execution traces, connecting to program synthesis and equality saturation.
\end{itemize}

\newpage

\bibliographystyle{plain}
\bibliography{references}

\end{document}